\documentclass{article}
\usepackage[compatibility=false]{caption}
\usepackage{jmlr2e}
\usepackage{graphicx,amsmath,booktabs,dsfont} 
\usepackage[utf8]{inputenc}
\usepackage{enumitem}
\usepackage{cite}
\usepackage{hyperref}
\usepackage{subcaption}
\usepackage{todonotes,float}

\usepackage{booktabs}
\numberwithin{equation}{section}

\renewcommand{\rho}{\varrho}

\newcommand{\R}{\mathbb{R}}

\newcommand{\Rc}{\mathcal{R}}
\newcommand{\Ec}{\mathcal{E}}
\newcommand{\Ac}{\mathcal{A}}
\newcommand{\gaussian}{\gamma}
\newcommand{\Gc}{\mathcal{G}}
\newcommand{\Sbb}{\mathbb{S}}

\newcommand{\yseq}{\{y_i\}_{i=1}^n}

\DeclareMathOperator*{\argmin}{arg\,min}

\newcommand{\TV}[1][]{
\ifthenelse { \equal {#1} {} }  
    {\operatorname{TV}} 
    {\operatorname{\ensuremath{#1}-TV}}
}

\newcommand{\Per}[1][]{
\ifthenelse { \equal {#1} {} }  
    {\operatorname{Per}} 
    {\operatorname{\ensuremath{#1}-Per}}
}

\newcommand{\esssup}[1][]{
\ifthenelse { \equal {#1} {} }  
    {\operatorname*{ess\,sup}} 
    {\operatorname*{\ensuremath{#1}-ess}\sup}
}
\newcommand{\essinf}[1][]{
\ifthenelse { \equal {#1} {} }  
    {\operatorname*{ess\,inf}} 
    {\operatorname*{\ensuremath{#1}-ess}\inf}
}
\newcommand{\essosc}[1][]{
\ifthenelse { \equal {#1} {} }  
    {\operatorname*{ess\,osc}} 
    {\operatorname*{\ensuremath{#1}-ess\,osc}}
}

\renewcommand{\subset}{\subseteq}

\graphicspath{{figures/}}

\newcommand{\off}[1]{{}}

\ShortHeadings{Critical Points of t-SNE}{Haridas and Murray}
\firstpageno{1}

\begin{document}

\title{On the Abundance of Critical Points of the t-SNE Energy}
\author{\name Nakul Haridas \email nakulh@ncsu.edu \\ \addr Department of Mathematics \\ North Carolina State University \\ Raleigh, North Carolina, USA \AND \name Ryan Murray \email rwmurray@ncsu.edu \\ \addr Department of Mathematics \\ North Carolina State University \\ Raleigh, North Carolina, USA }

\editor{}

\maketitle

\abstract{This paper considers the energy landscape of the t-SNE algorithm. While this algorithm has enjoyed broad adoption, the non-convexity of the associated energy has made it difficult to rigorously understand what the algorithm captures in many settings. In particular, a number of well-known numerical examples, several of which are reproduced in this article, suggest a complicated energy landscape with many local minimizers that do not respect the topology or clustering structure of the underlying data. This work seeks to provide first steps towards a rigorous explanation of these phenomena. Specifically, for a general family of energies, which include both the original t-SNE algorithm and recently identified large data limits, and for densities in feature space which obey a continuous symmetry, we construct infinite families of distinct critical points. These critical points are based upon identifying pairs of discrete symmetries, one in the original feature space and the other in the target embedding space, which are preserved under gradient dynamics. These critical configurations exhibit many characteristics, such as topology breaking and spurious clustering, which are often observed empirically. Finally, numerical and analytical examples are given throughout as a means of illustrating the approach.}

\begin{keywords}
  Dimension Reduction, Stochastic Neighborhood Embeddings, Consistency, Critical Points, Symmetries
\end{keywords}

\section{Introduction}

High-dimensional data arises throughout science and engineering, and dimension reduction---representing such data in a few coordinates while preserving its essential structure---is a basic tool for both analysis and visualization. There are many methods designed for this task, most of which can be posed as optimization problems which seek to score the preservation of the data's underlying structure. While a few of these methods admit explicit minimizers, such as principal component analysis, many of the most popular methods do not admit explicit minimizers and rely upon gradient descent to find quality solutions.

 One such method is t-SNE \citep{JMLR:v9:vandermaaten08a}. This method seeks to preserve pairwise distances between data, with adaptive weights that prioritize shorter scales. Since the algorithm's introduction, t-SNE has seen widespread adoption across diverse scientific domains, with notable applications in neurology \citep{ridgway2012early}, genomics \citep{Kobak2019}, natural language processing \citep{Li2016}, and materials science \citep{kim2026unisensory}. 
 
 Because t-SNE utilizes a non-convex energy for constructing embeddings, solution formulas are not available and embeddings must be found via gradient descent. This lack of solution formulas makes it much more difficult to develop theory for t-SNE. In spite of this, over the last ten years there is a small, but growing, literature which seeks to develop theory for t-SNE, which we describe in Section \ref{sec:lit-review}.
 
 Our interest in this work lies in providing some first steps towards rigorously describing the complexity of the energy landscape for t-SNE. In particular, our goal is to demonstrate settings wherein we can prove that the t-SNE energy admits \emph{infinitely many distinct critical points}. The main structure of the work is as follows:
 
 \begin{enumerate}
 	\item We first give, in Section \ref{sec:energy_variations} a general formulation of the t-SNE energy, which includes several variants which have been introduced in the literature. Then in Section \ref{sec:Classic-Examples}, we reproduce a number of examples from previous works which highlight the complexity of the minimization problem, as well as various topological and geometric properties that arise in empirical settings.    
 	\item  In Section~\ref{sec:symmetries}, we develop a notion of a symmetry invariant couplings, which builds upon many classical definitions from the study of isometry groups and their actions. We give numerous examples throughout the section to aid readers who are unfamiliar with such symmetry groups.

   \item  In Section \ref{sec:descent-paths}, we prove the well posedness of the gradient flow \ref{eqn:GD-def}.
    It turns out that this gradient flow preserves symmetry invariance (Theorem~\ref{thm:sym-invar-preserved}). By utilizing a compactness argument, this immediately yields the existence of a symmetry invariant critical point for every
    symmetry pair (Theorem~\ref{thm:crit-exist}). In Section \ref{sec:num-examples}, we give numerical examples on the construction of such critical points on various manifolds.

    \item  In Section \ref{sec:distinctness}, we prove that in the Euclidean setting the trivial (i.e. constant) embedding  is not a local minimum of the t-SNE energy. For a wide class of symmetry pairs this is the only map which can simultaneously satisfy multiple symmetries simultaneously. Subsequently, we obtain the main result, (Theorem \ref{thm:abundant_critical_points}) which states that for radially symmetric input data there exist infinitely many distinct critical points on the Euclidean space for the t-SNE energy. Here distinctness is defined in a strong way, which precludes isometric transformations in the input or output spaces.
\end{enumerate}

The final main result relies upon two natural conjectures, which are the subject of current investigation. First, when $Y=\R^m$ the energy is only asymptotically
scale invariant, so that energy dissipation does not immediately control the spread of the
embedding, and the moment bound required for tightness as $t \to \infty$ remains
conjectural: recent work \citep{jeong2024convergenceanalysistsnegradient} proves this is the case in specific settings. Second, we do not resolve whether a flow started from a coupling induced by a map relaxes to a general coupling as $t \to \infty$: this hearkens to the theory of optimal transportation and has been recently proven for some simpler dimension reduction algorithms \citep{murray2025probabilistic}. We hope to address these issues in future work.

 \subsection{Related Literature}\label{sec:lit-review}

  Some early works focused on understanding the effect of  \emph{early-exaggeration}, a stage of the algorithm which emphasizes different parts of the energy at different stages of the optimization. \citet{linderman2017clusteringtsneprovably} prove that the early-exaggeration phase of t-SNE causes the diameter of each cluster's embedding to shrink, and show that in a suitable limit this phase reduces to a spectral clustering method.
\citet{arora2018analysis} strengthen this to a visualization guarantee, showing that under a deterministic separation condition on the data — satisfied by mixtures of well-separated log-concave distributions — early-exaggeration t-SNE produces a full visualization in which every cluster is visibly separated. \citep{cai2022theoreticalfoundationstsnevisualizing} established the asymptotic equivalence of the early exaggeration phase with power iterations in spectral embeddings.

The existence of a minimizer of the KL divergence in t-SNE in the finite data case was recently proved by \citep{jeong2024convergenceanalysistsnegradient}. In the continuum case, \citet{auffinger2023equilibriumdistributionstdistributedstochastic} identified a large data limit of the t-SNE energy over plans (couplings with fixed input marginal), assuming the perplexity grows linearly with the number of data points. \citet{murray2024largedatalimitsscaling} identified a different continuum limit, over maps assuming the perplexity grows faster than $\log n$ but slower than $n$, like $n^{\alpha}$ for $0< \alpha <1$. Notably, in their respective regimes \citet{auffinger2023equilibriumdistributionstdistributedstochastic} obtain a well-posed equilibrium over plans, whereas \citet{murray2024largedatalimitsscaling} prove an asymptotic in $n$ ill-posedness over maps and introduce a rescaling to recover a consistent limit.

\citet{tsne_no_clusters_yang} show empirically that t-SNE fails to reveal cluster structure on real-world data sets that are well-clusterable yet violate the conditions of \citet{arora2018analysis}, and that this failure persists across perplexity choices and alternative optimization algorithms.  Notably, they observe that a lower objective value function can correspond to a worse cluster visualization, a fact which we also observe in Section \ref{sec:Classic-Examples}.

The pathological behaviour of t-SNE has been observed empirically in  \citep{wattenberg2016use}, \citep{linderman2017clusteringtsneprovably}. The effects of initialization on the t-SNE output has been discussed in \citep{kobak2021initialization}

In the very recent paper \citep{li2026theoreticallimitationstsne}, the authors
prove a symmetry-driven failure mode of t-SNE. Their starting point is that when
the input points are equidistant, the configuration is symmetric under permuting
the points, which forces each conditional affinity $P_i$ to be uniform no matter
how the bandwidths are chosen, so the input distribution $P$ is uniform. For a large number
of points drawn from the hypersphere $\mathbb{S}^{d-1}$, the global minimizer will have most of its mass concentrated on a small
ball in the embedded space. Put in other words, the only way to minimize the t-SNE
energy of a large number of (approximately) equidistant points in high dimension is
to map all the points to a small neighborhood.

One context where the consequences of symmetry invariance of the t-SNE energy has been very recently investigated is \citep{bergam2025tsneexaggeratesclustersprovably}. In that paper they use a type of scale and translation symmetry to argue that any $n$-point embedding which is a critical point of the t-SNE energy can be transformed to an $n$-point configuration which is arbitrarily close to the $n$-simplex and which is also, in an appropriate sense, a critical point. The authors use this to then argue a type of instability of the embeddings learned by t-SNE. While our work is related in that we seek to utilize symmetries to study energy landscapes, it is different in the sense that we are using a broader class of \emph{discrete} symmetries in order to quantify the richness of the family of critical points.

\section{Energetic models}
\subsection{Review of original t-SNE Formulation}
t-SNE is a dimension reduction technique used to learn high dimensional structure in low dimensional space  \citep{JMLR:v9:vandermaaten08a}. We consider a family of points $\{x_i\}_{i=1}^n$ with $x_i \in \R^d$ representing our original data set in high dimensions. We will generally assume that the $x_i$ are i.i.d. sampled from an underlying distribution on $\R^d$ with density $\rho$.The objective of t-SNE is to construct an associated family of points $y_i$ in $\mathbb{R}^{m}$, typically with  $m \ll d$, which approximately preserves the structure of the original points $x_i$. In the language of geometry, one can think of the $y_i$ as an \emph{embedding} of the high-dimensional $x_i$ into a lower-dimensional space. From a high level, the t-SNE algorithm seeks to find such an embedding which locally preserves pairwise distances.\\

We now describe the specifics of the energy which t-SNE uses to quantify the quality of an embedding. Initially, we calculate an affinity matrix in the high-dimensional space in the following way: For points $i \neq j$, we define 
\[
p_{j|i}
=
\frac{
\exp\!\left(-\frac{\lvert x_i - x_j\rvert^2}{2\sigma_i^2}\right)
}{
\sum\limits_{k \neq i}
\exp\!\left(-\frac{\lvert x_i - x_k\rvert^2}{2\sigma_i^2}\right)
},
\quad j \neq i,
\qquad p_{i|i} := 0,
\]
\[
p_{ij} = \frac{p_{j|i} + p_{i|j}}{2n}, \quad i \neq j,
\qquad p_{ii} := 0.
\]
We use $|\cdot |$ to denote the norm in Euclidean spaces, and all logarithms are natural. We define $P_{\cdot|i} = (p_{j|i})_{j \neq i}$ as the $i$th conditional distribution, which encodes the similarity of the $i$th data point with other points; it is a probability vector, whereas the $i$th row of the symmetrized matrix $P = (p_{ij})$ is not.
We note that the $p_{ij}$ are symmetrized and normalized such that $\sum_{i \neq j} p_{ij} = 1$. Here $\sigma_i$ is a locally-adaptive bandwidth associated with $x_i$. The value of $\sigma_i$ is determined implicitly by the constraint
\[
H_i(\sigma_i) := -\sum_{j \neq i} p_{j|i} \ln (p_{j|i}) = \ln \kappa .
\]
Here $\kappa$ is a hyperparameter known as the \emph{perplexity}.  Changing the perplexity changes the matrix $P$, and hence the objective function below.
$H_{i}(\sigma_i)$ is the information entropy of point $i$ measured in nats, namely
\[
\operatorname{Perp}(P_{\cdot|i}) \;=\; e^{H_i(\sigma_i)}
\;=\; \prod_{j \neq i} \left( \frac{1}{p_{j|i}} \right)^{p_{j|i}}
\;=\; \kappa .
\]
The $p_{ij}$ can be interpreted as a smoothed version of $\kappa$ nearest neighbors. A higher value of $p_{ij}$ indicates that the points $x_i,x_j$ are more similar, relative to the local scales $\sigma_i, \sigma_j$. The $\sigma_i$ is a ``locally adaptive bandwidth,'' chosen so that each point sees approximately the desired number of nearest neighbors

The similarity between the embedded points $\{y_i\}$ is measured using the heavy-tailed Student's t-distribution (packaged into the matrix $Q$),
\[
q_{ij} = \frac{\left(1 + |y_i - y_j|^2\right)^{-1}}
              {\sum_{k \neq l} \left(1 + |y_k - y_l|^2\right)^{-1}}, 
\quad i \neq j.
\]
Here the affinities are measured using a heavy-tailed distribution because it promotes greater dispersion of the embedded data, and empirically biases the embeddings away from collapsing to a single mode \citep{JMLR:v9:vandermaaten08a}.

The t-SNE algorithm minimizes the Kullback-Leibler divergence between the probability matrices corresponding to the original and embedded data points, namely
\[
\mathrm{KL}(P \,\|\, Q) 
= \sum_{i \neq j} p_{ij} \, \log \frac{p_{ij}}{q_{ij}}.
\]
This function is differentiable with respect to $y_i$, and often algorithms  use gradient descent on the entire family of embedded points $\{y_i\}_{i=1}^n$ to numerically minimize this energy.
\\\\
We can expand the Kullback--Leibler divergence as
\[
\mathrm{KL}(P \,\|\, Q) = \sum_{i \neq j}p_{ij} \log(p_{ij})-\sum_{i \neq j}p_{ij}\log(q_{ij}) .
\]
Having defined the bandwidth, the problem of minimizing the Kullback Liebler divergence is equivalent to minimizing $-\sum_{i \neq j}p_{ij}\log(q_{ij})$, as the first term is purely data dependent, that is, it is fixed by the input data $\{x_i\}$. 
\\

In the interest of minimizing this energy over $\yseq$, it is helpful to rewrite it in a different form. Indeed, the energy may be written, after removing terms independent of $\yseq$, as the sum of attraction and repulsion energies, namely
\[
\mathrm{A_n}[\{y_i\}_{i=1}^{n}]:=\frac{1}{n} \sum_{i=1}^{n}\frac{\sum\limits_{j\neq i} \exp\left(-\frac{|x_i-x_j|^2}{2\sigma_i^2}\right)\log(1+|y_i-y_j|^2)}
{\sum\limits_{j \neq i} \exp\left(-\frac{|x_i-x_j|^2}{2\sigma_i^2}\right)},
\]
\[
\mathrm{R_n}[\yseq]=:\log \bigg(\frac{1}{n^2}\sum_{j \neq i} \frac{1}{1+|y_i-y_j|^2}\bigg),
\]

\[
\argmin_{\yseq} \; \mathrm{KL}[\yseq] \;=\; \argmin_{\yseq}\bigg(\mathrm{A_n}[\yseq] + \mathrm{R_n}[\yseq] \bigg).
\]

Here, $\mathrm{A_n}$ can be interpreted as the attraction energy and $\mathrm{R_n}$ as the repulsion energy. The function $\mathrm{A_n}$ is nonnegative and vanishes precisely when all of the $y_i$ coincide, so it is minimized when all high-dimensional points are mapped to a single point; $\mathrm{R_n}$ is strictly decreasing in each pairwise distance, and its infimum is approached, but not attained, as the mutual distances diverge.

\subsection{Continuum Model}
The form of the attraction and repulsion energies naturally suggests a large data limit as $n \to \infty$. This limit has been considered previously, under different assumptions, in both \citep{auffinger2023equilibriumdistributionstdistributedstochastic} and \citep{murray2024largedatalimitsscaling}. In both works a significant challenge is the implicitly-defined bandwidth $\sigma_i$. 

In order to obtain a deterministic limit for $\sigma_i$, both of these works choose to allow the perplexity $\kappa$ to grow as a function of $n$. For example, in \citep{auffinger2023equilibriumdistributionstdistributedstochastic} the authors choose to let $\kappa = \delta n$, for some $\delta \in (0,1)$, and they relax the embedding problem to one over couplings, $\pi$, between $\R^d \times \R^m$, that is probability measures on the product space. They then prove that the bandwidths $\sigma_i$ converge, as $n \to \infty$ towards a deterministic function $\sigma^*(x)$, which depends upon $\delta$ and $\pi$, and satisfies an integral equation. In turn, this implies that one candidate large-data limit for t-SNE is given by 
\[
\mathcal{B_{\delta}}(\pi) = \iint 
p_{\sigma^{*}}(x, x') \,
\log\!\left( \frac{p_{\sigma^{*}}(x, x')}{q(y, y')} \right)
\, \pi(dx, dy) \, \pi(dx', dy'),
\]
where $p$ is a (non-locally) normalized Gaussian measure of similarity depending on $\sigma^{*}$,  $q$ is  given by the normalized algebraic similarities, and $\pi$ is a joint distribution over $(x,y)$ pairs. That work studies the existence of minimizers of $\mathcal{B_{\delta}}$ and the convergence of the minimizers of the discrete $\mathrm{KL}$ energy towards the minimizers of $\mathcal{B_{\delta}}$ in an appropriate sense. These results are attained at the cost of requiring very large perplexities: much larger than many practical settings.

On the other hand, in \citep{murray2024largedatalimitsscaling} the authors consider perplexities that grow much more slowly (e.g. $\sim n^\alpha$ for any $0< \alpha<1$), under the setting where the input distribution admits a density $\rho$. They then show the existence of an explicit function $\sigma(x)$ (depending only on the density at $x$) and an explicit sequence $h_n \to 0$ so that $\sigma(x_i) -h_n \sigma(x) \to 0$. The authors then propose studying the energies
\begin{align}\label{eqn:non-local-Pickarski}
{A}_h[T] &:= 
\int_{\R^d} 
\frac{
\int_{\R^d} 
\exp\!\left( -\frac{|x - x'|^{2}}{2h^2\sigma^{2}_{}(x)} \right) 
\log\!\big(1 + |T(x)-T(x')|^2\big)\,\rho(x')\,dx'
}{
\int_{\R^d} 
\exp\!\left( -\frac{|x - x'|^{2}}{2h^2\sigma^{2}_{}(x)} \right) 
\rho(x')\,dx'
}
\,\rho(x)\,dx, \\
R[T] &:= 
\log\!\left(
\iint_{\R^d \times \R^d} 
\frac{1}{1 + |T(x)-T(x')|^2
} \,
\rho(x)\,\rho(x')\,dx\,dx'
\right)
\\
\inf_{T} \; \mathcal{C}(T) &:=\; \inf_{T} \Big( A_h[T] + R[T] \Big) .\end{align}
In order to determine candidate large data limits of t-SNE, it is then natural to consider properties of this energy as $h \to 0$, for fixed embedding functions $T$.

This type of energy has several advantages from the analytical perspective. First, it has  replaced a discrete stochastic energy with a deterministic one. This is achieved by ignoring various stochastic effects: by replacing all sums over $i,j$ with integrals and replacing the $\sigma_i$ with the explicit $h \sigma(x)$. This, of course, does not directly prove anything about the limits of the original t-SNE energy as the derivation has, without justification, interchanged a number of limits. We consider the problem of rigorously proving limits of minimizers of the t-SNE energy as $n \to \infty$ and for realistic perplexities to be a challenging, open problem, which we do not seek to address in this work.

Second, while the energy is non-local, in the sense that there are relationships between $x$ and $x'$ inside the integrals, these relationships are largely localized and approximately sparse. In \citep{murray2024largedatalimitsscaling} they identify a possible rescaling of $A_h$ which leads to a well-posed localized limit. Similarly, \citep{calder2026continuumlimittsnedata} conducts a rescaling of the embedded space in order to identify a candidate large data-limit for the original energy.

\subsection{Generalized Model}\label{sec:energy_variations}

We now introduce a generalized version of the attraction and repulsion energies which includes i) the original discrete energy from \citep{JMLR:v9:vandermaaten08a}, ii) the weakly non-local energy from \citep{murray2024largedatalimitsscaling}, and iii) the strongly non-local energy from \citep{auffinger2023equilibriumdistributionstdistributedstochastic}. The  formulation that we give here is also applicable in contexts where the feature space and embedding spaces are given by manifolds. The manifold generalization is useful for some of the examples we give, and is also relevant in more contemporary algorithms such as contrastive learning; see for example \citep{chen2020simple} and \citep{wang2020understanding}.

We consider two Riemannian manifolds $X, Y$ with associated metrics $d_X$ and $d_Y$. We call $X$ the \emph{feature space} and $Y$ the \emph{embedding space}. We assume that on both spaces we have  functions
\[
\eta_X : X \times X \to [0,\infty), \qquad \eta_Y : Y \times Y \to [0,\infty).
\]
One should think of $\eta$ as a function which is smooth, symmetric, nonnegative, vanishes exactly on the diagonal, and is comparable to the squared distance: there exist constants $0<c\le C$ such that
\[
c\,d(a,b)^2 \le \eta(a,b) \le C\,d(a,b)^2.
\]
We give the precise assumptions upon $\eta$ below, but give some examples here.

\begin{example}[Standard Euclidean Case]
    In the original algorithm, $X = \R^d$, $Y=\R^m$, and $\eta_X(x_1,x_2) = |x_1-x_2|^2$, $\eta_Y(y_1,y_2) = |y_1-y_2|^2$. We call this the \emph{standard Euclidean case} throughout the paper.
\end{example}

\begin{example}[Torus]
     For the flat torus $X = \mathbb{T}^d=[0,1)^{d}$ one option is to set 
     \[
     \eta_X(x_1,x_2) = \xi(d_X(x_1,x_2)^2),
     \]
     where $d_X(x_1, x_2)$ is the geodesic distance on $\mathbb{T}^d$. Here we need to select $\xi$ to have the right properties to imply smoothness of $\eta_X$ on all of $\mathbb{T}^d$: one such choice would be to let $\xi_0:\R^+ \to [0,1]$ be a $C^\infty$ function which takes the value $1$ for $t < 1/3$ and $0$ for $t>2/3$, and then let $\xi(t) = \int_0^t \xi_0(s)\,ds$.
\end{example}

\begin{example}[Unit Sphere]
    In the case where $X = \mathbb{S}^{d-1}$ a natural choice is  $\eta(x_1,x_2) = \alpha(1-\cos(d_X(x_1,x_2)))$. 
\end{example}
\begin{example}[Hyperbolic Space; Poincare Ball Model]
    We consider $d$-dimensional hyperbolic space $X = \mathbb{H}^d$ in the Poincare ball model, with the associated metric $\eta(x_1,x_2) = d_X(x_1,x_2)^2$. This space is commonly parametrized by writing
    \[
    d_X(x_1,x_2)
    = \operatorname{arcosh}\!\left(
    1 + \frac{2|x_1 - x_2|^2}{(1-|x_1|^2)(1-|x_2|^2)}
    \right).
    \]
    
\end{example}

We will let $\mu_X$ denote the measure of the data on the manifold $X$: in the case from the previous section this $\mu_X$ would either have the density $\rho$ or would be an empirical measure, namely $\frac{1}{n} \sum_{i=1}^n \delta_{x_i}(dx)$, where $x_i$ are i.i.d from $\rho$.

Let $\Pi(\mu_X)$ denote the set of all couplings between $X,Y$, namely probability measures on $X \times Y$, which have marginal $\mu_X$ distribution in the feature space $X$. This space is also known as \emph{fibred probability measures}. These couplings are also sometimes called ``plans'' in the optimal transportation community, to distinguish them from maps $T:X \to Y$. The relaxation from maps to couplings is now a standard approach in the context of optimal transportation, and stems from the fact that the energies we consider do not immediately impose any single-valuedness.

Given a probability measure $\pi \in \Pi(\mu_X)$ and a continuous function $\sigma:X \to \R$, we can define the generalized energy as 

\begin{align}\label{eqn:generalized}
\mathcal{A}(\pi) &:= 
\int_{X \times Y} 
\frac{
\int_{X \times Y} 
\exp\!\left( -\frac{\eta_X(x,x')}{2\sigma^{2}_{}(x)} \right) 
\log\!\big(1 + \eta_Y(y,y')\big)\,\pi(dx'dy')
}{
\int_{X} 
\exp\!\left( -\frac{\eta_X(x,x')}{2\sigma^{2}_{}(x)} \right) 
\mu_X(dx')
}
\,\pi(dxdy), \\
\mathcal{R}(\pi) &:= 
\log\!\left(
\int_{X \times Y}\int_{X \times Y} 
\frac{1}{1 + \eta_Y(y,y'
)} \,
\pi(dxdy) \pi(dx'dy')
\right)
,\end{align}
\begin{equation} \label{eqn:total-energy}
\mathcal{C}(\pi) \; := \; \Big( \mathcal{A}(\pi) + \mathcal{R}(\pi) \Big), 
\end{equation} 
and we seek to minimize $\mathcal{C}$ over $\Pi(\mu_X)$.
For notational convenience, we let
\[
\gaussian (x,x')=\frac{\exp\big(\frac{-\eta_X(x,x')}{2\sigma^2(x)}\big)}{\int_X \exp\big(\frac{-\eta_X(x,x')}{2\sigma^2(x)}\big) \mu_X(dx')}
\]
We remark that $\gamma$ is not generally symmetric in $x,x'$, and the first argument is the value at which we are evaluating $\sigma$.


We notice if $\pi(dxdy)=\delta_{T(x)}(dy)\mu_X(dx)$, for some map $T$ from $X$ to $Y$, $X=\R^d$, $Y = \R^m$, and $\mu_X(dx) = \rho(x)dx$ this recovers the non-local energy from \citep{murray2024largedatalimitsscaling}. In the Euclidean setting when $\pi(dxdy)=\frac{1}{n}\sum_{i =1}^{n}\delta_{(x_i,y_i)}(dxdy)$ we recover the original t-SNE energy (after making an appropriate choice of $\sigma$ based upon the empirical sample $\{x_i\}_{i=1}^n$). \footnote{There is a minor difference between our setting and the original algorithm, as we do not exclude the $j=i$ term and hence the denominator of the attraction energy becomes $\sum_{j\neq i}\exp\left(\frac{-|x_i-x_j|^2}{2\sigma(x_i)^2}\right) + 1$ as opposed to $\sum_{j\neq i}\exp\left(\frac{-|x_i-x_j|^2}{2\sigma(x_i)^2}\right)$} Finally, by solving an implicit equation for $\sigma(x)$, we obtain the energy from \citep{auffinger2023equilibriumdistributionstdistributedstochastic}.



Central to the results of this paper is the first variation of this generalized energy. In particular, we will consider how the energy varies along vector fields that smoothly change $\pi$ in $y$: this is consistent with the particle-based methods utilized for optimizing this energy in empirical settings. It is also consistent with the formal Riemannian structure associated with Wasserstein spaces in optimal transportation, with the natural modification to the product space $X\times Y$ with an $X-$marginal constraint, that is to the fibered setting.

More precisely, we consider a function $\phi_t:X \times Y \to Y$ that is $C^1$ in $t$, and that satisfies $\phi_0(x,y) = y$. We then define a flow map on the product space via
\[
\Phi_t(x,y)=
(x ,
\phi_t(x,y))
\]
Under this flow map, we define $\pi_t=\pi_0 \circ\Phi_t^{-1}$, where $\Phi_t^{-1}$ is defined to mean $\pi_t(A \times B) = \pi(\{(x,y): x\in A,\phi_t(x,y) \in  B\})$. Alternatively, one can write this in terms of the pushforward measure, namely $\pi_t = (\Phi_t)_\sharp \pi$.

In order to appropriately define necessary conditions and gradient descents, it is useful to consider the vector field $\tau(x,y) = \frac{d}{dt}\phi_t(x,y) |_{t=0}$. One can show that for any two flow maps with the same initial vector field $\tau$, the time derivative of the energies at $t=0$ will be the same. One straightforward flow map generated by a particular choice of $\tau$ is given by $\phi_t(x,y)=\exp_{y}(t\tau(x,y))$, where $\tau(x,y)$ belongs to $Tan_y Y$. In the Euclidean case, this becomes $\phi_t(x,y) = y + t \tau(x,y)$.
Using this choice of representative, we then can compute the first variation
\[
\delta \mathcal{R}[\pi,\tau]:=\frac{d}{dt} \mathcal{R}(\pi_t) |_{t=0}=\int_{X \times Y} \tau(x,y) \mathcal{K}_{\mathcal{R}}(x,y) \pi(dxdy),
\]

\begin{equation}\label{eqn:repulsive_kernel}
    \mathcal K_{\Rc}[\pi](x,y)=\frac{-2}{I(\pi)}\int_{X \times Y}\frac{\partial_{1}\eta_Y(y,y')}{(1+\eta_Y(y,y'))^2 } \pi(dx'dy'),
\end{equation}

where
\[
I(\pi)=\int_{X \times Y}\int_{X \times Y} \frac{1}{1+\eta_Y(y,y') }\pi(dxdy)\pi(dx'dy').
\]
Similarly,

\[
\delta \mathcal{A}[\pi,\tau]=:\frac{d}{dt}\mathcal{A}(\pi_t) |_{t=0}=\int_{X \times Y} \tau(x,y) \mathcal{K}_{\mathcal{A}}(x,y) \pi(dxdy),
\]
where,
\begin{equation}\label{eqn:repulsive_kernel}
\mathcal{K}_{\mathcal{A}}[\pi](x,y)=2\int_{X \times Y} \Xi(x,x')\frac{\partial_{1}\eta_Y(y,y')}{(1+\eta_Y(y,y')) } \pi(dx'dy'),    
\end{equation}

and
\begin{equation} \label{eqn:normalizing_xi}
    \Xi(x,x')=\frac{\gaussian(x,x')+\gaussian(x',x)}{2}.
\end{equation}

\begin{remark}
    In the standard Euclidean case, we have
    \[
\mathcal{K}_{\mathcal{A}}[\pi](x,y)=4\int_{X \times Y} \Xi(x,x')\frac{y-y'}{(1+|y-y'|^2) } \pi(dx'dy') 
\]
and 
\[
\mathcal K_{\Rc}[\pi](x,y)=\frac{-4}{I(\pi)}\int_{X \times Y}\frac{y-y'}{(1+|y-y'|^2)^2 } \pi(dx'dy').
\]
\end{remark}

We then define
\begin{align}
     \Ec[\pi](x,y) &= \mathcal{K}_{\mathcal{A}}[\pi](x,y) + \mathcal K_{\Rc}[\pi](x,y)\\
     &=\int_{X \times Y} \Bigg[\Xi(x,x')\frac{\partial_{1}\eta_Y(y,y')}{(1+\eta_Y(y,y')) } - \frac{1}{I(\pi)} \frac{\partial_{1}\eta_Y(y,y')}{(1+\eta_Y(y,y'))^2 }  \pi(dx'dy')\Bigg] =:\int_{X \times Y} F_{\pi}(x,y,x',y')\pi(dx',dy').
\end{align}
We note that $\Ec[\pi](x,y)$ lies in the tangent space of $Y$. We denote the tangent space of $Y$, as $Tan_Y$.

One can directly show that a necessary condition of optimality for $\mathcal{C}$ is that
\[
 \Ec[\pi](x,y) = 0
\]
for $\pi$ almost every $x,y$. This is a generalization of the necessary condition given in the finite particle case in \citep{JMLR:v9:vandermaaten08a}, which identifies the necessary condition as a balance between attraction and repulsion forces, namely a balance between $\mathcal{K}_{\Ac}$ and $\mathcal{K}_{\Rc}$.

One formally defines the \emph{gradient flow} of the energy $\Ec$ via the system of equations
\begin{equation}\label{eqn:GD-def}
\begin{aligned}
\frac{\partial \phi_t(x,y)}{\partial t}&=-(\mathcal{K}_{\mathcal{A}}[\pi_t](x,\phi(x,y,t))+\mathcal{K}_{\mathcal{R}}[\pi_t](x,\phi(x,y,t)))\\
\pi_t&=\pi \circ\Phi_t^{-1}=\Phi_t \sharp \pi \\
\Phi_t(x,y)&=
(x ,
\phi_t(x,y))\\
\phi(x,y,0)&=y.
\end{aligned}
\end{equation}
We note that the coupled nature of $\phi_t$ and $\pi_t$ means that one needs to prove that this gradient flow is well-defined, as standard results don't immediately apply. However, classical tools can easily be adapted to prove well-posedness: we delay the proof until Section \ref{sec:descent-paths}, see Proposition \ref{prop:soln-exists}.


We note that this gradient flow can formally be written via the continuity equation as
\[
\partial_t\pi_t + \nabla_{y}.(\tau_t\pi_t)=0
\]
where $\tau_t = - (\mathcal{K}_{\mathcal{A}}[\pi_t]+\mathcal{K}_{\mathcal{R}}[\pi_t])$, and where the continuity equation has to be interpreted in a weak sense if $\pi$ does not have a smooth density in $y$. 

\begin{remark}
The field $\Ec[\pi]$ should be interpreted as the gradient of the energy
$\mathcal{C}$ at $\pi$.The energy therefore decreases the most along $\tau =
-\Ec[\pi]$, which is the sense in which \eqref{eqn:GD-def} is a gradient flow,
and along its solutions
\[
  \frac{d}{dt}\,\mathcal{C}(\pi_t) \;=\; -\int_{X\times Y}\big|\Ec[\pi_t](x,y)\big|^2\,\pi_t(dx\,dy) \;\le\; 0,
\]
with equality precisely when $\Ec[\pi_t] = 0$ holds $\pi_t$-almost everywhere.
\end{remark}

\subsection{Assumptions}
Here, we state the assumptions which are used in the proofs downstream. The proofs cite the assumptions from here as required. We mention that all the assumptions in this paper will be denoted with Roman numerals.
\begin{enumerate}[label=(\Roman*)]
\item $\eta_Y(a,b) = \xi(d_Y^2(a,b))$. \label{assump-symmetric_eta}

\item $\eta_Y \in C^2$ with $D^2\eta_Y$ and $\frac{\partial_1 \eta_Y}{1+\eta_Y}$ both in $ L^\infty(Y)$. \label{assump-eta_lipshitz}



\item $\sigma : X \to (0,\infty)$ is continuous with
      $0 < \sigma_{\min} \le \sigma $. \label{assump-Xi}
\end{enumerate}

We note that in the Euclidean case with $\eta$ being the norm squared assumptions \ref{assump-symmetric_eta} and  \ref{assump-eta_lipshitz} are naturally satisfied.

\section{Energy Landscape for t-SNE}

\subsection{Numerical illustrations of complexity of energy landscape}\label{sec:Classic-Examples}

 Here we review some of the previous results regarding different types of behavior associated with local minima of this problem. As the t-SNE energy is non-convex, there are very few settings where any explicit minimizers are known (limited results are known in some clustered settings \citep{linderman2017clusteringtsneprovably} and sparse regimes \citep{li2026theoreticallimitationstsne}). Accordingly, much of the work exploring properties of minimizers has been empirically-focused in settings involving low dimensional manifolds. We independently reproduce the following prominent examples:


\begin{example}[\emph{``Spaghetti Plots''} \citep{linderman2017clusteringtsneprovably}]
Here, following \citep{linderman2017clusteringtsneprovably}, we consider $n$ data points which are uniformly distributed along a straight line for $d=3$. This data is then embedded, via numerical minimization of the t-SNE energy, into $m=2$ dimensions. These are sometimes colloquially called ``spaghetti plots''. We find approximate optimizers using the openTSNE package \citep{JSSv109i03} with $N=2000$ and perplexity=5. Figure \ref{fig:spaghetti} shows a) the original data set, b) the output of t-SNE under random initialization, c) the output of t-SNE using PCA initialization, d) an analytical baseline given by linearly projecting the data to two dimensions without optimizing the energy. For reference, we list the energy values associated with these three embeddings in Table \ref{tab:spaghetti}. 

\vspace{0.5 cm}

\begin{figure}[h!]
    \centering

    \begin{subfigure}{0.23\linewidth}
        \centering
        \includegraphics[width=\linewidth]{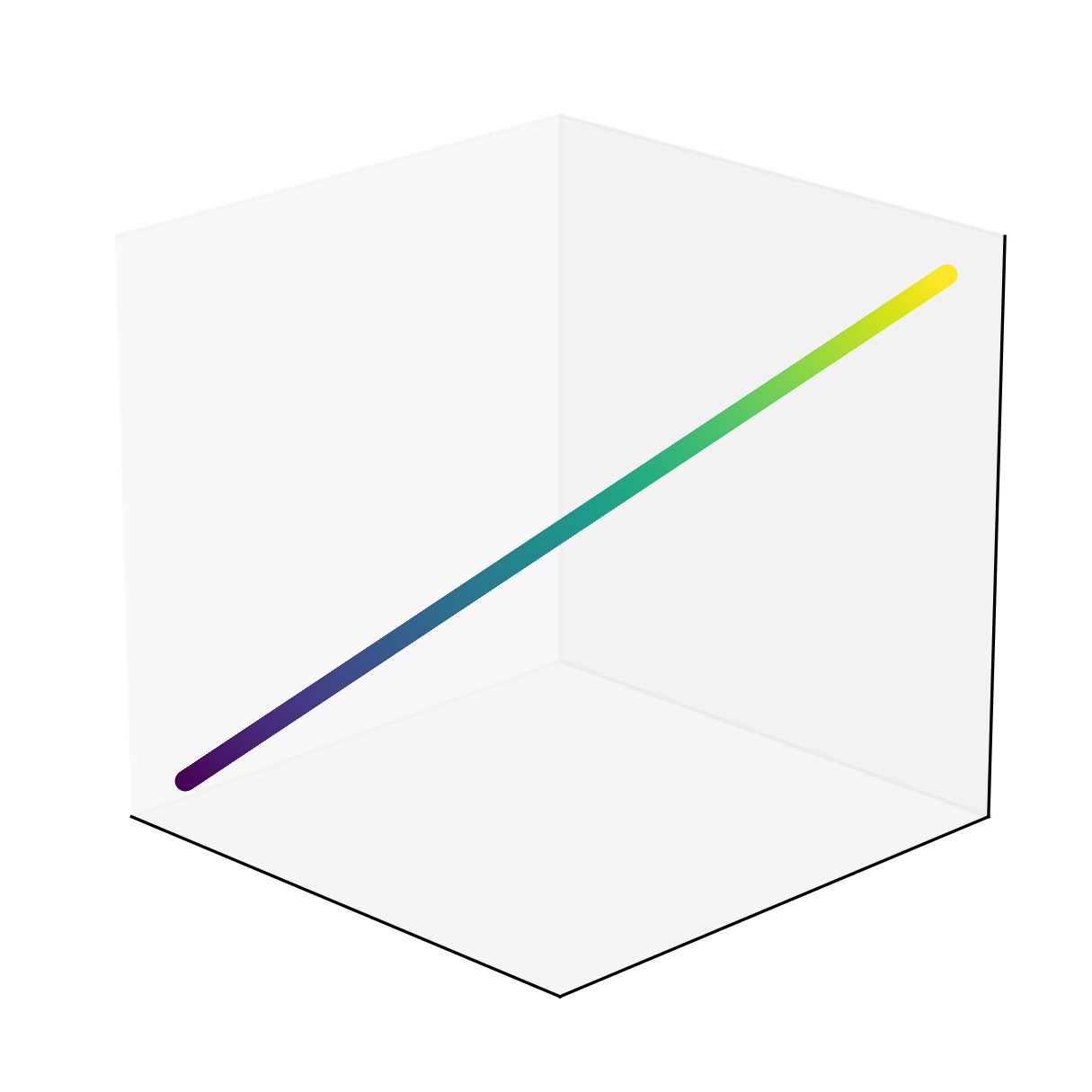}
        \caption{3-D line}
    \end{subfigure}
    \hfill
    \begin{subfigure}{0.23\linewidth}
        \centering
        \includegraphics[width=\linewidth]{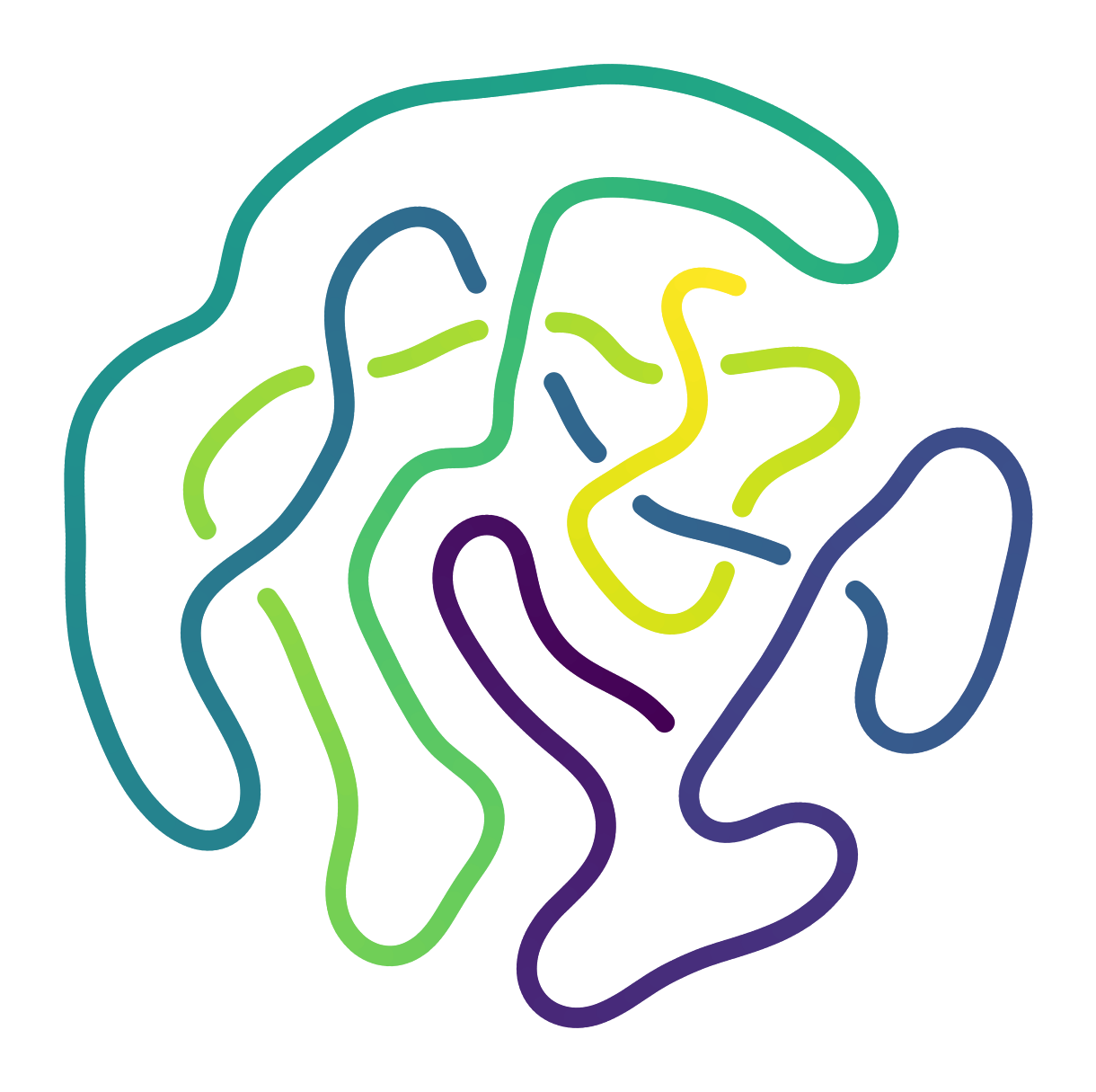}
        \caption{Random }
    \end{subfigure}
    \hfill
    \begin{subfigure}{0.23\linewidth}
        \centering
        \includegraphics[width=\linewidth]{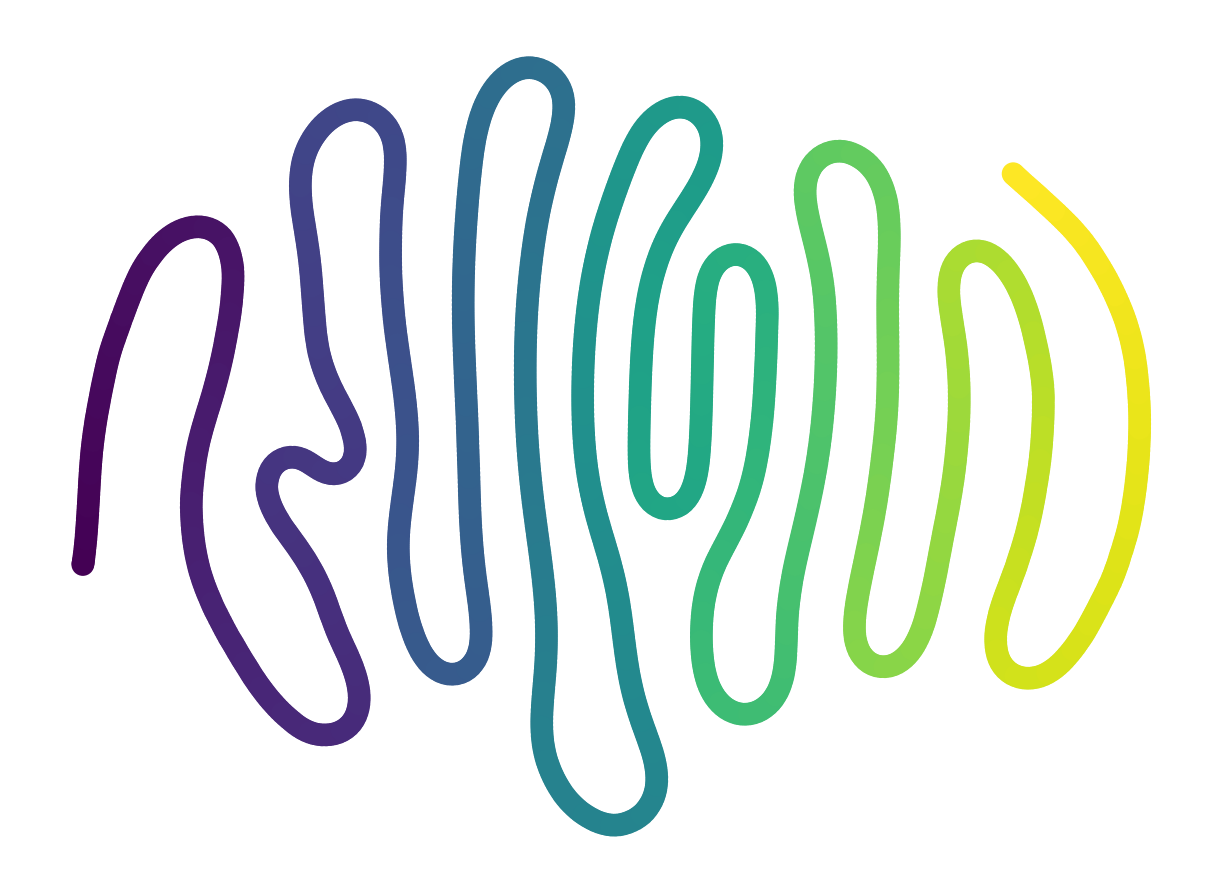}
        \caption{PCA }
    \end{subfigure}
    \hfill
    \begin{subfigure}{0.23\linewidth}
        \centering
        \includegraphics[width=\linewidth]{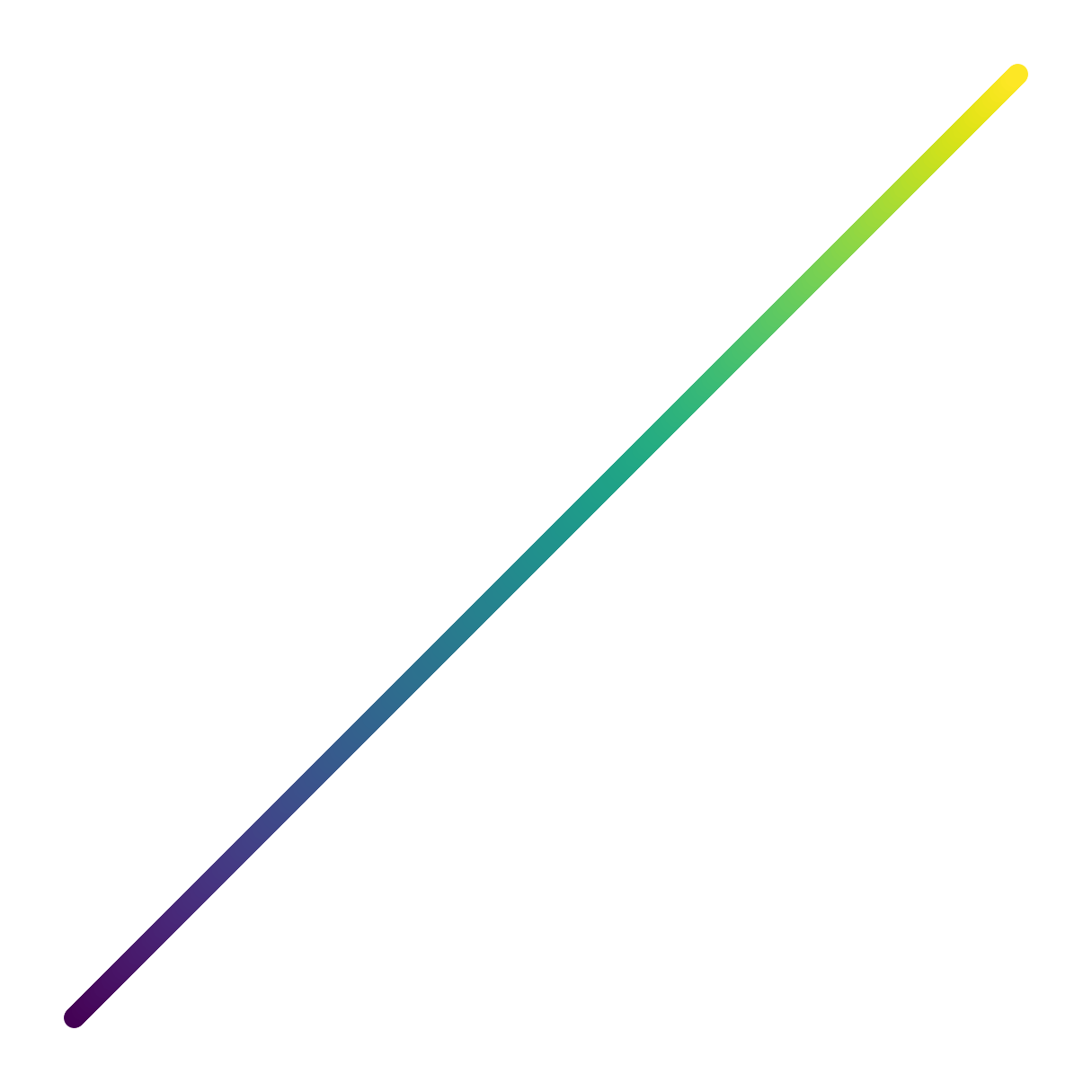}
        \caption{2D line}
    \end{subfigure}

    \caption{t-SNE plots for the 3-D Line Under Different Initializations}\label{fig:spaghetti}
\end{figure}


A few important items are of note: These plots vary in significant ways for different initializations: the points where the line segment twists or is broken are not consistent across the plots. The line, which from a topological perspective has only one component, has been broken into a large number of components. From a practitioner's perspective, this means that the groupings observed may not be associated with any real structure of the original data and may solely be artifacts of (local) minimization of the energy.  We also observe that the energy associated with the random initialization is slightly lower than that of the PCA initialization, although the PCA performs better than the random initialization in capturing the data’s topology.  The energy of the $2D$ line in $\mathbb{R}^{2}$, namely just by linearly projecting the original data, is also given for reference is much worse than the other runs. 
\end{example}

\begin{table}[h!]
\centering
\begin{tabular}{lrrr}
\toprule
 & Attraction Energy & Repulsion Energy & Total Energy \\
\midrule
Random     & 0.862646 & -6.344438 & -5.481791 \\
PCA        & 0.935205 & -6.410729 & -5.475525 \\
2-D line  & 0.000034 & -1.123033 & -1.122999 \\
\bottomrule
\end{tabular}
\caption{Energy comparisons across initializations}\label{tab:spaghetti}
\end{table}

\begin{example}[{``Grid Twisting''}]\label{ex:grid_twisting}
Further developing an example from \citep{wattenberg2016use}, we consider the setting where $X = Y = \R^2$, and our distribution in feature space is a square grid of equally spaced points. Here random initialization leads to qualitatively different outcomes: in this case two of the nine samples exhibit ``twisting'' phenomenon which break the grid into two well-separated pieces. All of the samples exhibit smaller-scale splitting in the grid.

\begin{figure}[H]
    \centering
    \begin{subfigure}{0.3\linewidth}
        \centering
        \includegraphics[width=\linewidth]{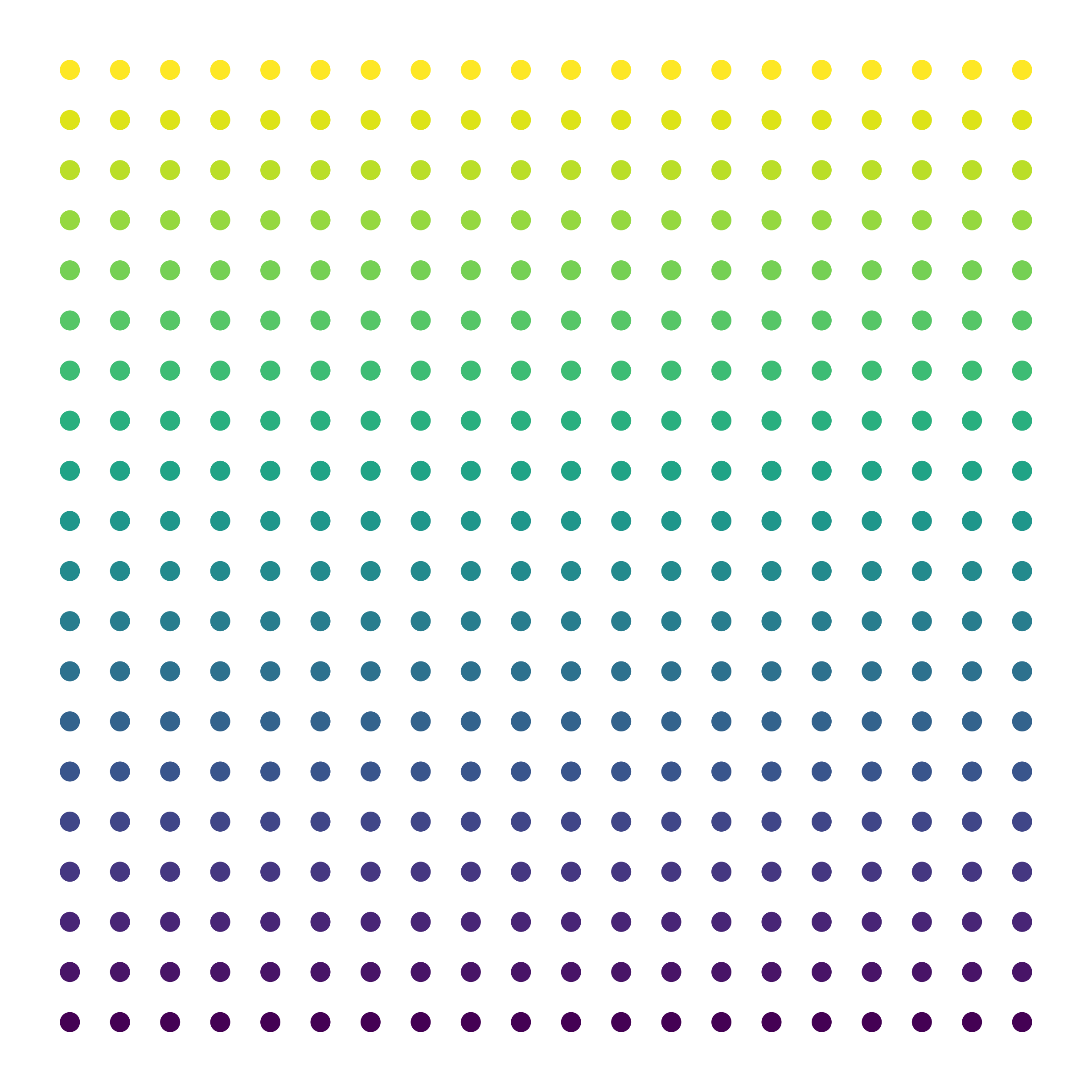}
        \caption{Original grid}
        \label{fig:grid_original}
    \end{subfigure}

    \vspace{0.5cm}

    \begin{subfigure}{0.9\linewidth}
        \centering
        \includegraphics[width=\linewidth]{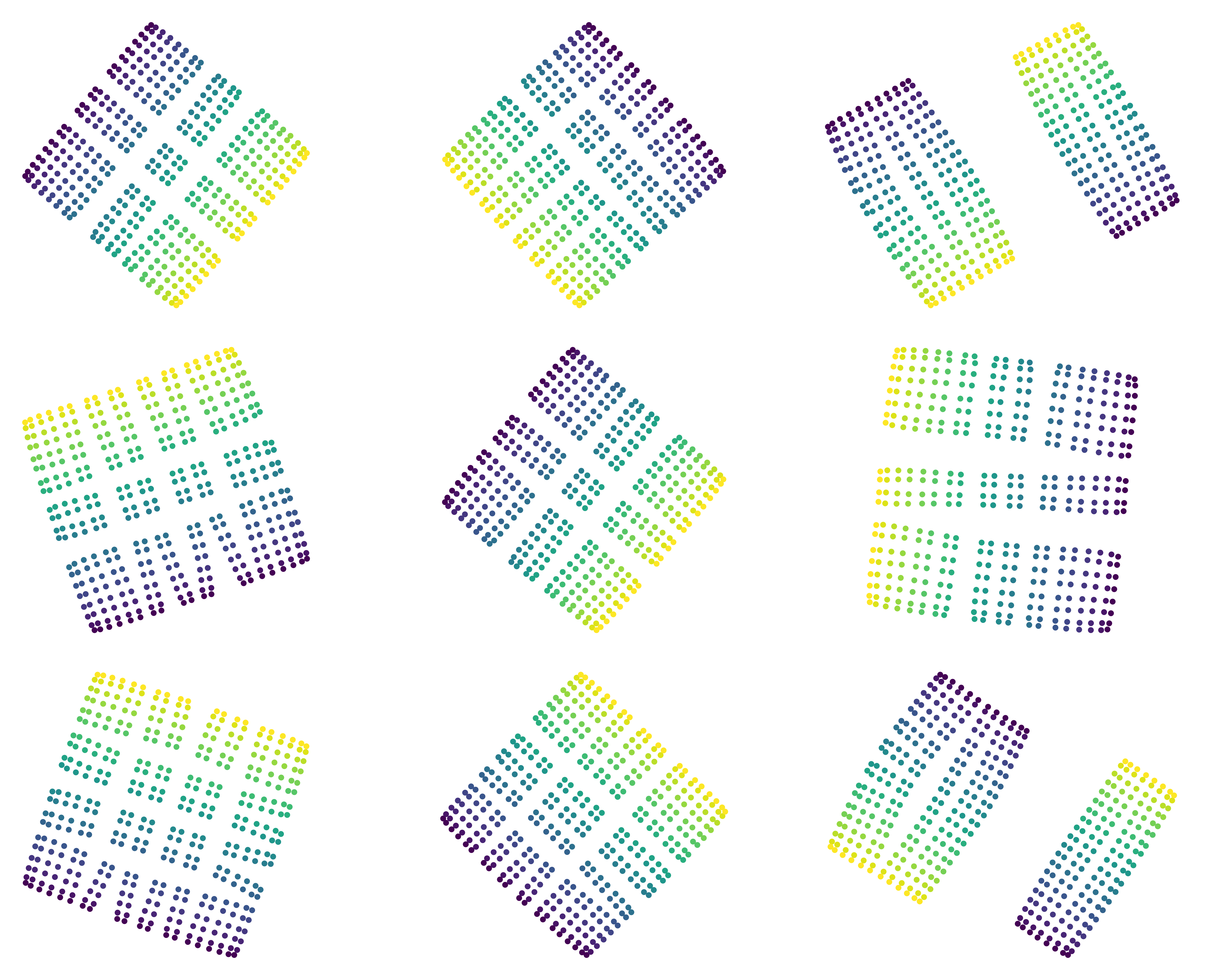}
        \caption{Random initialization with different seeds}
        \label{fig:grid_tsne}
    \end{subfigure}

    \caption{Outputs of t-SNE for a square grid of $N=400$ points with perplexity$=4$.}
    \label{fig:grid_side_by_side}
\end{figure}

\end{example}

\begin{example}[\emph{Swiss Roll}]
We note that \citep{linderman2017clusteringtsneprovably} also demonstrate analogous results on the classical Swiss Roll dataset. It is well known that t-SNE fails to unroll this manifold, recovering instead a set of disconnected clusters rather than the underlying two-dimensional sheet. In the various runs below, it is easy to see that apart from breaking clusters at various places, there is a form of ``twisting'' similar to Example \ref{ex:grid_twisting} which can be observed if you zoom in on the images.

\end{example}

\begin{figure}[h!]
    \centering

    \begin{subfigure}{0.35\linewidth}
        \centering
        \includegraphics[width=\linewidth]{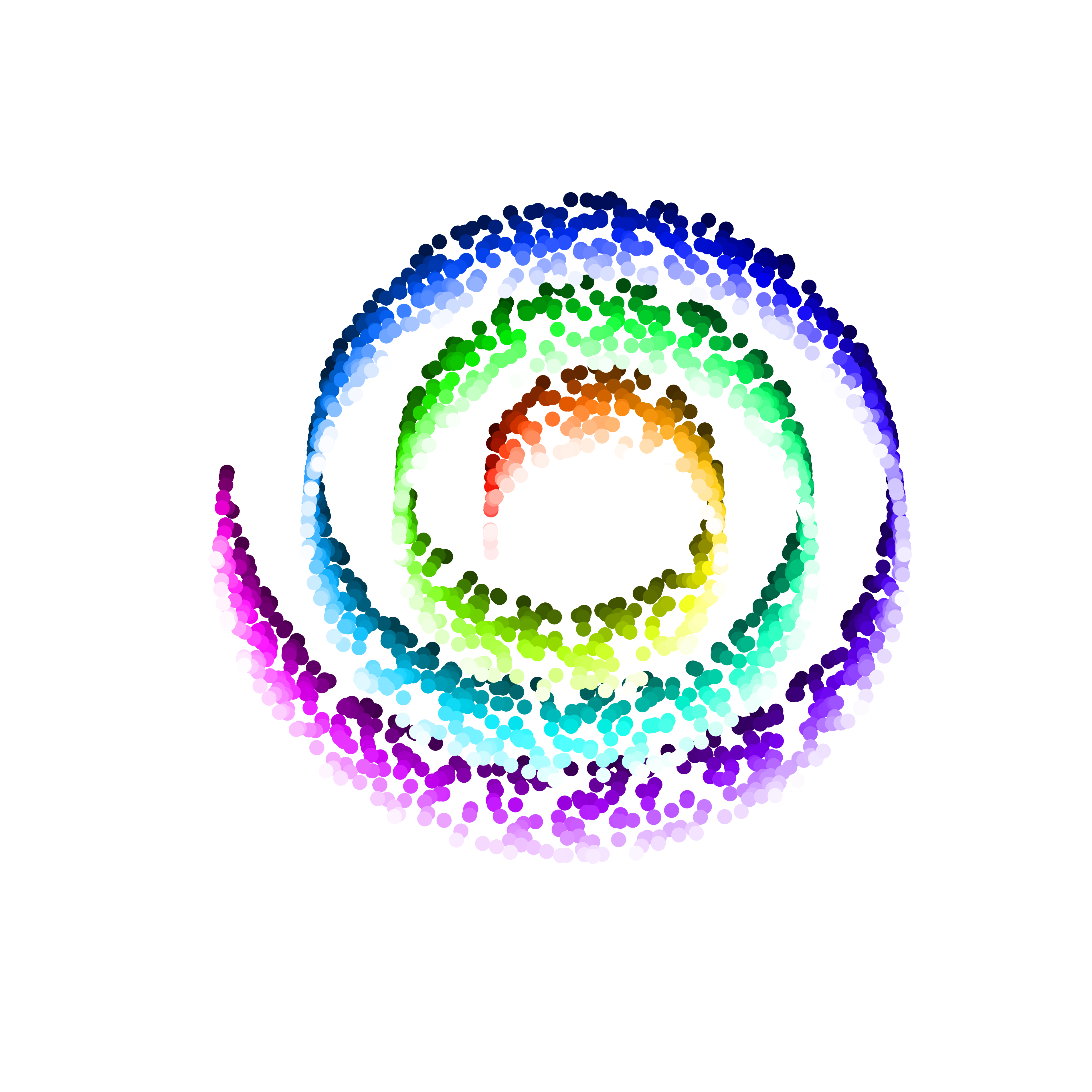}
        \caption{Original data for Swiss Roll in $\mathbb{R}^3$}
    \end{subfigure}
    
    \vspace{0.5cm}
    \begin{subfigure}{0.7 \linewidth}
        \centering
        \includegraphics[width=\linewidth]{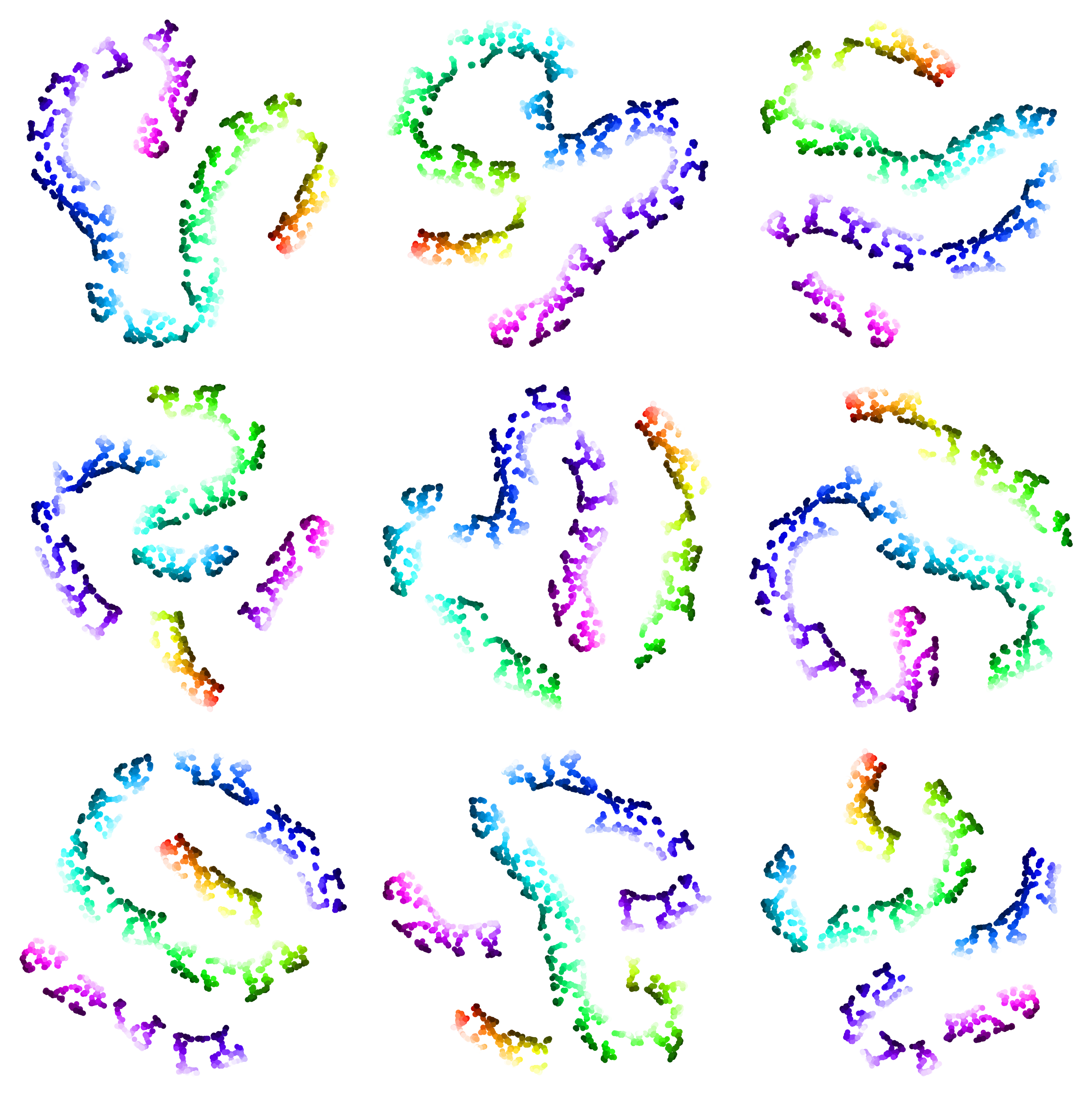}
        \caption{t-SNE runs with random initializations }
    \end{subfigure}

    \caption{Outputs of t-SNE for the Swiss Roll, with $N=2000$ and perplexity$=13$.}
\end{figure}

These examples reveal a significant complexity that is inherent in the energy landscape of t-SNE. Local minimizers exhibit a range of exotic behavior. The main goal of this paper is to provide some first steps explaining the energy landscapes which lead to these types of outcomes.

\section{Symmetry invariant embeddings}\label{sec:symmetries}

We now turn our attention to one fundamental property of the energy landscape of t-SNE: the fact that this energy respects symmetries of the underlying data. One of the main ideas of this paper is that this fact actually implies a high degree of complexity of the energy landscape, in the sense that it implies the existence of many distinct critical points. We mention that there is an extensive literature on the study of symmetry invariant energies in mathematical physics: for example the celebrated paper \citep{Palais1979}  uses many ideas similar to those in this work. One of the main goals of this paper is to extend those ideas meaningfully within the setting of dimension reduction algorithms.

As mentioned in Section \ref{sec:lit-review}, very few papers have used symmetries to study t-SNE, and those that do consider only scaling symmetries; this work considers a richer family. While our construction requires some symmetry assumptions on the input data in feature space, it still reveals a significant amount of information about energy landscapes, and begins to explain some of the examples in Section \ref{sec:Classic-Examples}. In this section we recall some classical definitions for symmetry groups, and develop appropriate definitions for their application to describing symmetry invariant embeddings for t-SNE. Throughout we also give illustrative examples for the reader's convenience.

\subsection{Finite isometry groups: definitions, examples and invariant mappings}
\textbf{Finite Isometry Groups:} To begin, an \emph{isometry} is a bijection between two metric spaces that preserves pairwise distances. In this work, we will always be considering isometries which map $X \to X$ or $Y \to Y$. Typical examples in Euclidean spaces include rotations, translations, reflections, and compositions thereof. A set of isometries $\Gc$ is called \emph{closed under compositions} if for any $g_1,g_2 \in \Gc$ we have that $g_1 \circ g_2$ is also in $\Gc$: we note that in our context, where the domain and range are the same, the composition is well-defined and is also an isometry. Such a finite family $\Gc$ is called a \emph{finite isometry group}; finiteness ensures $\Gc$ contains the identity and inverse of each of its elements,  with composition serving as the group operation.

We will assume that, associated with $X$ and $Y$, there are two finite groups of isometries $\Gc_X$ and $\Gc_Y$, and that the two groups are equivalent\footnote{This type of equivalence is generally called a \emph{group isomorphism}, but we don't emphasize that terminology since we are exclusively treating groups of isomorphisms in this work.}, in the sense that there is a bijection $\mathbb{G}: \Gc_X \to \Gc_Y$ so that for all $g,g' \in \Gc_X$ we have $\mathbb{G}[g] \circ \mathbb{G}[g'] = \mathbb{G}[g \circ g']$. The fact that $\Gc_X,\Gc_Y$ are equivalent implies that both groups have the same number of elements, which we denote by $k$. 

A convenient setting, which we utilize extensively, is the setting where the finite group is generated by a single element, called a generator; more explicitly a generator $g$ satisfies $\Gc = \{g^i, i=0 \dots k-1\}$, where $g^i$ denotes $i$ iterated compositions of $g$. 
We note that groups which admit a single generating element are called \emph{cyclic groups}.

To make things more concrete, we begin with several elementary examples.

\begin{example}\label{ex:torus-rotations}
    
    Let $X =Y =  \mathbb{S}^1$, and let $g_X$ be a shift by $2\pi/k$ radians. Assuming $k$ is prime, we can let $g_Y$ be a shift by $2\pi j /k$ radians for any choice of $j \in 1 \dots k-1$. We let $\mathbb{G}(g_X^i) = g_Y^i$ for all $i = 1 \dots k$. Here $\Gc_Y = \Gc_X$, and is given by the family of all shifts by an integer multiple of $2\pi/k$ radians.
\end{example}

The previous example naturally admits extension to Euclidean spaces.

\begin{example}\label{ex:R3-rotation}
    Let $X=\R^3$ and define
    \[
    g_X(x) = Ux,\qquad U = \begin{pmatrix} 1 & 0 & 0 \\ 0 & \cos(2\pi/k) & \sin(2\pi/k) \\ 0 & -\sin(2\pi/k) & \cos(2\pi/k) \end{pmatrix}.
    \]
    Here $\Gc_X$ has $k$ elements, corresponding to rotations in the $(x_2,x_3)$ directions, and is a subgroup of the classical ``special orthogonal'' group $SO(3)$, which can be described as the set of all orientation preserving rotations. We note that $SO(3)$ can also be represented in terms of all $3 \times 3$ orthogonal matrices with determinant $1$. 
\end{example}

While we gave the example in three dimensions, the same construction can be carried out in $\R^d$ with $d \geq 2$. The set of rotations is only a part of the family of isometries on Euclidean space, as evidenced by the next example.

\begin{example}\label{ex:reflection}
    Let $X = \R^2$, and define
    \[
    g_X(x) = U x,\qquad U = \begin{pmatrix} 1 & 0 \\ 0 & -1 \end{pmatrix}.
    \]
    Here $\Gc_X$ has two elements (namely $g_X$ and the identity), and corresponds to a reflection of the second coordinate of $x$. This is an element of the classical ``orthogonal group'' $O(2)$, which consists of all orthogonal matrices. 
\end{example}

\begin{example}\label{ex:dihedral}
    Not all symmetry groups are cyclic. A simple example is  the dihedral group (denoted by $D_k$), which encompasses both rotations and reflections in $\R^2$. Concretely, we can write elements of this group in matrix form via
\[
g(x) = Ux, \qquad U = \begin{pmatrix} \cos(2\pi j/k) & \sin(2\pi j/k) \\ -\sin(2\pi j/k) & \cos(2\pi j/k) \end{pmatrix} \begin{pmatrix} 1 & 0 \\ 0 & -1 \end{pmatrix}^i,
\]
where $j \in \{ 0 \dots k-1\}$ and $i \in \{0,1\}$. This group can  be generated by two of its elements, and is non-abelian for $k \geq 3$.
\end{example}


\textbf{Symmetry Invariant Couplings:} The central focus of this work is on the existence of \emph{symmetry solutions} to the critical point equations, for different choices of symmetry pairs. It turns out that the family of discrete isometries is consequently rich enough that one can then construct many (distinct) critical points associated with minimizing $\mathcal{C}$. More concretely, we say that a coupling $\pi$ is invariant under the tuple $(\Gc_X,\Gc_Y,\mathbb{G})$ if it satisfies
\begin{equation}\label{eqn:symm-inv-plans-def-gen}
    \pi(A\times B) = \pi(g(A) \times \mathbb{G}[g](B)), \forall g \in \Gc_X,
\end{equation}
for all Borel $A,B$.
In the case where both groups are cyclic with generators $g_X,g_Y$, we say (abusing notation slightly) that the coupling $\pi$ is invariant under the pair $(g_X,g_Y)$ if
\begin{equation}\label{eqn:symm-inv-plans-def}
    \pi(A\times B) = \pi(g_X(A)\times g_Y(B))
\end{equation}
for all Borel $A,B$: this corresponds to letting $\mathbb{G}[g_X] = g_Y$ and then extending via composition.

We note that setting $B = Y$ we immediately find that 
$
\mu_X(A) = \mu_X(g_X(A))$
is a necessary condition for any invariant $\pi$.
In the natural case when $\pi$ is induced by a map, namely when $\pi(dxdy) = \delta_{T(x)}(dy)\mu_X(dx)$, this symmetry relationship can be rewritten in terms of that map with the equation
\begin{equation}\label{eqn:symm-inv-maps-def-gen}
    T(g(x)) = \mathbb{G}[g](T(x))
\end{equation}
for all $g \in \Gc_X$, or in the cyclic case by writing
\begin{equation}\label{eqn:symm-inv-maps-def}
    T(g_X(x)) = g_Y(T(x)).
\end{equation}
The relationships \eqref{eqn:symm-inv-maps-def-gen}-\eqref{eqn:symm-inv-maps-def} only imply the invariance of the induced coupling under the additional requirement that $\mu_X\circ g_X = \mu_X$.

The definition that we give for symmetry invariant plans is analogous to the classical definition of an equivariant map, when in the cyclic case \( g_X = g_Y\). 
In that setting, the equation for maps can be interpreted as stating that \( T \) commutes with \( g_X \).

\textbf{Orbits and Fundamental Domains:} If a plan (or map) is symmetry invariant, it means that there are less degrees of freedom. In particular, by using language from abstract algebra, we can define a ``reduced domain'' on which we can define $\pi$ or $T$, and from which all other points can be inferred. Specifically, given a point $x \in X$, we can define the \emph{orbit} of $x$ by
\[
\mathcal{O}(x) = \{x' : x' = g(x), g \in \Gc_X\}.
\]
These orbits may contain at most $k$ elements. We call $E \subset X$ a \emph{fundamental domain}  if $\cup_{x \in E} \mathcal{O}(x) = X$ and $\mathcal{O}(x') \cap \mathcal{O}(x) = \emptyset$ for all $x \neq x' $ in $E$. One can view the fundamental domain as a particular choice of representatives of the quotient space $X / \mathcal{O}$, which is commonly denoted as $X / \Gc_X$. In many practical cases, we can select the  \emph{fundamental domain} to be connected. 
We can now give examples of these fundamental domains. 

\begin{example}
    Given the translation symmetry on the circle from Example \ref{ex:torus-rotations}, one choice of fundamental domain in $X$ is $[0,2\pi/k)$. Of course other choices are possible, such as $[2\pi j/k, 2\pi(j+1)/k)$.
\end{example}

\begin{example}
    Given the reflection symmetry in Example \ref{ex:reflection} one choice of fundamental domain would be $E = \{x_2 \leq 0\}$. We note that here the orbits along the $x_2 = 0$ axis are actually singletons, and make up the invariant set of $X$ under the group action.
\end{example}

We note that if we select a fundamental domain $E$ associated with $\Gc_X$, and if $T$ is invariant under the tuple $(\Gc_X,\Gc_Y,\mathbb{G})$, then $T$ is completely determined by its values on $E$. This is because, by the definition of the orbits, for any $x' \in X$ we can find a $g \in \Gc_X$ and a unique $x \in E$ so that $g(x) = x'$, and hence by the invariance $T(x') = \mathbb{G}[g](T(x))$.

The question of the existence of a symmetry invariant map is somewhat more delicate and depends upon both the algebraic properties of $\mathcal{G}_X,\mathcal{G}_Y$ in addition to the structure of $X,Y$, as evidenced by the following example.

\begin{example}
    Let $X = \R^2$, $Y = B( ( 0,-1),1/2) \cup B((0,1),1/2) \subset \R^2$, and let $\Gc_X = \Gc_Y$ be the group with two elements generated by the reflection about the $x$-axis. We note that in $X$ there are orbits of length $1$ (namely for all points on the $x$-axis), while in $Y$ there are only orbits of length $2$. This implies that there cannot be any symmetry invariant maps.
\end{example}



In the setting where $\Gc_X,\Gc_Y$ are generated by $g_X,g_Y$, the question of the existence on an invariant mapping $T$ has a simple resolution. In particular, given a fundamental domain $E_Y$ of $Y$, we can define
\begin{equation}
    E_{Y,s} := \{y \in E_Y : |\mathcal{O}(y)| = s\}.
\end{equation}
We notice that $E_{Y,1}$ is actually independent of the particular choice of fundamental domain $E_{Y}$, and consists of precisely points which are \emph{fixed points} under the group $\Gc$, that is $E_{Y,1} = \{y : g(y) = y  \: \forall g \in \Gc_Y\}$.

Now, if $k$ is prime and $E_X,E_Y$ are fundamental domains associated with $\Gc_X,\Gc_Y$, by using the orbit-stabilizer theorem and Lagrange's theorem, the only possible orbit lengths are $1$ or $k$, and thus we can decompose these fundamental domains into $E_{X,1} = \{x: |\mathcal{O}(x)| = 1\}, E_{X,k} = \{x : |\mathcal{O}(x)| = k\}$, such that $E_{X,1} \cup E_{X,k}=E_{X}$. It turns out that any mapping which maps $E_{X,1} \to E_{Y,1}$ and $E_{X,k} \to E_{Y,k}$ can be directly extended using the invariance relation \eqref{eqn:symm-inv-maps-def}. This means that a symmetry invariant map will exist as long as $\min(|E_{X,1}|,1) \leq |E_{Y,1}|$.

\textbf{Mutually Exclusive Embeddings:} We are interested in identifying a large family of \emph{distinct} critical points associated with different pairs of symmetries. In order to describe ``distinctness'' we say that two mappings $T_1,T_2$ are \emph{equivalent} if there exists isometries $\phi_X,\phi_Y$ so that $T_1(x) = \phi_Y(T_2(\phi_X(x)))$, and we say that two mappings are \emph{distinct} if they are not equivalent. 

Our construction of a multitude of critical points is based on the existence of many different symmetry pairs. It turns out that many symmetry pairs are essentially incompatible with each other, in the sense that they are always associated with different critical points. More precisely, we say that two tuples of symmetries $(\Gc_X,\Gc_Y,\mathbb{G}), (\tilde \Gc_X,\tilde \Gc_Y,\tilde{\mathbb{G}})$ are \emph{mutually exclusive} if any $T$ which is invariant under $(\Gc_X,\Gc_Y,\mathbb{G})$ is distinct from any $(\tilde \Gc_X,\tilde \Gc_Y,\tilde{\mathbb{G}})$ invariant $\tilde T$. In the cyclic case we use the same terminology but refer to pairs via their generators $(g_X,g_Y)$.

We start with an example which will be a building block for higher-dimensional settings.

\begin{example}\label{ex:torus-mutually-exclusive}
    Let $X = Y = \mathbb{T}$, $k \in \mathbb{N}$ and let $g_X=\tilde g_X$ be given by a shift by $2\pi/k$ radians. Let $0 < i, j < k$, with $i \neq j$, and choose $g_Y$ and $\tilde g_Y$ to respectively be shifts by $2\pi i/k$ and $2\pi j /k$ radians.

    Let us assume that a mapping $T$ is invariant under $(g_X,g_Y)$, meaning that $T(g_X(x)) = g_Y(T(x))$, or in other words that it satisfies \eqref{eqn:symm-inv-maps-def}, and that $\mu_X$ is uniform. We note that isometries on the $1-$torus are rather restrictive: namely they are only shifts or reflections. Let's suppose that $T$ is equivalent to some $(\tilde g_X,\tilde g_Y)$ invariant map $\tilde T$. Then we have
    \[
    \tilde g_Y \tilde T = \tilde T g_X = \phi_Y T \phi_X g_X = \phi_Y g_Y T g_X^{-1}\phi_X g_X = \phi_Y g_Y \phi_Y^{-1} \tilde T \phi_X^{-1} g_X^{-1} \phi_X g_X,
    \]
    where here we are dropping all the parentheses and ``products'' mean function composition.
    We can identify the torus with arithmetic modulo $2\pi$, and in that framework we can write $\phi_X(x) = a_X x + b_{X,\phi}$, $\phi_Y(y) = a_Y y + b_{Y,\phi}$ and $g_X(x) = x + 2\pi/k$, $g_Y(y) = y + 2\pi i /k$, $\tilde g_Y(y) = y + 2\pi j / k$, and with $a_Y, a_X = \pm 1$. After some algebra we have
    \[
    \tilde g_Y^{-1} \phi_Y g_Y \phi_Y^{-1}(y) = y + a_Y 2\pi i/k - 2\pi j/k.
    \]
    Similarly, we obtain
    \[
    \phi_X^{-1} g_X^{-1} \phi_X g_X(x) = x -a_X2\pi/k + 2\pi/k.
    \]
    There are a few cases to consider. If $a_X = 1$ then this gives
    \[
    \tilde T(x) = \tilde T(x) + a_Y2\pi i/k - 2\pi j/k.
    \]
    Unless $j = \pm i \, \emph{mod} \, k$ then this is impossible. Similarly, if $a_X = -1$ then we obtain
    \begin{align*}
    \tilde T(x) &= \tilde T(x + 2\cdot 2\pi/k) + a_Y2\pi i/k - 2\pi j/k \\
    &= \tilde T(g_X^2(x)) + a_Y2\pi i/k - 2\pi j/k  = \tilde g_Y^2(\tilde T(x)) + a_Y2\pi i/k - 2\pi j/k \\
    &= \tilde T(x) + a_Y2\pi i/k + 2\pi j/k.
    \end{align*}
    Again, unless $i = \pm j \, \emph{mod} \, k$ then this is impossible. 

    This then implies that if $0<i<j<k/2$ the two symmetry pairs are mutually exclusive.
\end{example}

It is useful to consider an example of a simple situation where symmetry pairs are not mutually exclusive.

\begin{example}
    Suppose that $\Gc_Y$ and $\tilde \Gc_Y$ share the same set of points with orbit length 1, namely $E_{Y,1} = \tilde E_{Y,1}$. One example when $Y = \R^3$ could be any finite rotation group which leaves the $x_1 $axis unchanged, as in Example \ref{ex:R3-rotation}. For any choice of symmetry groups $\Gc_X,\tilde \Gc_X$ then any map from $X \to E_{Y,1}$ will be invariant under both symmetry pairs, meaning the symmetries are not mutually exclusive.
\end{example}

As evidenced by the previous example, certain symmetries, especially rotation symmetries, have non-trivial sets of fixed points, which directly rules out the possibility of mutual exclusivity. However, as we'll see in Section \ref{sec:distinctness}, energetic considerations can sometimes exclude these trivial maps. We'll focus our attention, for the purposes of definition, on symmetries where $E_{Y,1}$ is at most a singleton. Given two symmetry pairs $(g_X,g_Y)$, $(\tilde g_X,\tilde g_Y)$ such that the $E_{Y,1}$ associated with both $g_Y,\tilde g_Y$ are the same singleton, we say that the pair of symmetries are mutually exclusive up to constants if any non-constant $T$ which is $(g_X,g_Y)$ invariant is distinct from any $(\tilde g_X,\tilde g_Y)$ invariant $\tilde T$.

\begin{example}\label{ex:mutually-exclusive-R2}
    Consider $X = Y=\R^2$, $k \in \mathbb{N}$, and let $g_X = \tilde g_X$ be rotations by $2\pi/k$ radians. Building upon Example \ref{ex:torus-mutually-exclusive}, let $g_Y$ be a rotation by $2\pi i /k$ radians and $\tilde g_Y$ be a rotation by $2\pi j /k$, with $0 < i<j<k/2$. Here again we consider symmetry invariant mappings which satisfy \eqref{eqn:symm-inv-maps-def}, and in order to connect this with \eqref{eqn:symm-inv-plans-def} one would need to assume that $\mu_X$ is invariant under $g_X$: a sufficient condition for any choice of $k$ would be that $\mu_X$ is only dependent upon $|x|$..

    We note that in two dimensions, the family of symmetries consists of rotations and reflections (which we already considered in Example \ref{ex:torus-mutually-exclusive}), as well as translations. As we have selected $g_X,g_Y,\tilde g_Y$ based upon rotations, choosing $\phi_X,\phi_Y$ based upon translations will violate the invariance condition as long as $T$ is non-constant. On the other hand, for $\phi_X,\phi_Y$ based upon rotations and translations, all of the same computations apply, with the exception that the point $0$ is invariant under both $g_Y$ and $\tilde g_Y$. Hence there is one mapping, namely $T_0(x) = 0$, which is invariant under both symmetry pairs, but no other $(\tilde g_X,\tilde g_Y)$ invariant $\tilde T$ will be equivalent to any $(g_X,g_Y)$ invariant $T$. Hence $(g_X,g_Y)$ and $(\tilde g_X,\tilde g_Y)$ are mutually exclusive up to constants.
\end{example}

The extension of Example \ref{ex:mutually-exclusive-R2} to $Y = \R^m, m > 2$ requires the construction of a cyclic group on $Y$ for which $E_{Y,1}$ is a singleton (so that functions which map into that set are constants). We give an example of such a group action, which then allows the construction of infinitely many groups which are mutually exclusive up to constants in arbitrary dimension.

\begin{example}\label{ex:rotoreflections}
 Consider the family of isometries generated by a rotation of $2\pi/k$ in the first two coordinates, and a simultaneous reflection of all the remaining coordinates. In matrix form, this is given by
 \[
 g_Y(y) = Uy, \qquad U =  \begin{pmatrix} \cos(2\pi/k) & \sin(2\pi/k) & 0 \\ -\sin(2\pi/k) & \cos(2\pi/k) & 0 \\ 0 & 0 & -I_{m-2}\end{pmatrix},
 \]
 where the $0$'s are vectors of appropriate size, and $I_{d}$ is the $d \times d$ identity matrix. 

 If $k$ is even, this group is cyclic of order $k$, and if $k$ is odd then this group is cyclic of order $2k$. In any case, the only point which is invariant under this symmetry is the origin. These groups are sometimes called \emph{rotoreflections}. 
\end{example}

A similar approach also applies to the setting where $Y = \mathbb{S}^{m-1}$. We also note here that while we have been able to characterize certain classes of symmetries which are mutually exclusive under maps $T,\tilde T$, the family of ``trivial'' correspondences $\pi$ which are invariant under multiple symmetries is much larger. This is because the way we proved that symmetry pairs were mutually exclusive was based upon arguing that, in the torus case, $T(x) \neq T(x) + c$, whereas in the setting with correspondences multi-valuedness is permissible. The following example illustrates this obstruction on the torus.

\begin{example}\label{ex:correspondence-symmetries}
    Let $X = Y=\mathbb{T}$. Let $\pi$ be a uniform measure on $\mathbb{T} \times \mathbb{T}$. Then $\pi$ is invariant under any pair of isometries $(g_X,g_Y)$.
\end{example}

We finally remark that in our main results we will also require a type of invariance of $\sigma$, namely that 
\begin{equation}\tag{IV}\label{assump:sigma_invariance}
 \sigma\circ g_X = \sigma   
\end{equation} This is natural in the context of the implicit choices of $\sigma$ utilized in the original algorithm under the assumption that $\mu_X$ is also $g_X$ invariant. It would also hold in cases where $X$ is compact and $\sigma \equiv C$.

\subsection{Gradient descent for symmetry invariant embeddings}

The main idea of this work lies in the observation that gradient descent does not break symmetry invariance. 
If the symmetry is to be respected at all time t, following \eqref{eqn:symm-inv-plans-def} we need to show, for all $g \in \mathcal{G}_X$, that
\[
\pi_t(g(A) \times \mathbb{G}[g](B))=\pi_t(A \times B).
\]
For ease of notation, we write $G(A,B) := g(A) \times \mathbb{G}[g](B)$. Assuming that the invariance holds for our initial data, and using the fact that $\pi_t = \pi_0 \circ \Phi_t^{-1}$, we can rewrite the left and right hand side of the invariance condition as follows:
\begin{align*}
    \pi_t(G(A,B))&=\pi_0(\Phi_t^{-1} \circ G(A,B)), \\ 
    \pi_t(A,B)&=\pi_0(\Phi_{t}^{-1}(A,B))=\pi_0(G \circ \Phi_t^{-1}(A,B)).
\end{align*}
Hence the invariance condition is satisfied if $G \circ \Phi_t^{-1}  = \Phi_t^{-1} \circ G$, which, after composing with $\Phi$ on the right and left, is equivalent to $G \circ \Phi_t=\Phi_t \circ G$. This gives a condition that we can directly verify. Before doing so, we recall a standard result from Riemannian geometry.



\begin{lemma}\label{lem:isometry-lemma} Let $Y$ be a smooth, connected and complete Riemannian manifold, with metric $d_Y$, and let $g$ be an isometry on $Y$. Define $\eta_Y(y,y')=\xi(d_Y(y,y')^2)$.
   Then,\[
\partial_{1}\eta_Y(g(y),g(y'))=Dg(y)\partial_1\eta_Y(y,y').
\]
\end{lemma}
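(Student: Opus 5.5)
The approach is to differentiate the invariance identity $\eta_Y(g(y),g(y')) = \eta_Y(y,y')$ in the first argument and apply the chain rule. The one subtlety is interpreting $\partial_1\eta_Y$ as a tangent vector (the gradient) rather than a covector, which is where the isometry property of $Dg$ is needed.

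First, since $g$ is an isometry, $d_Y(g(y),g(y')) = d_Y(y,y')$ for all $y,y'$. Hence
\[
\eta_Y(g(y),g(y')) = \xi\big(d_Y(g(y),g(y'))^2\big) = \xi\big(d_Y(y,y')^2\big) = \eta_Y(y,y').
\]
We regard both sides as functions of $y$ with $y'$ fixed, and differentiate along a curve $c(s)$ with $c(0)=y$ and $c'(0)=v\in Tan_yY$. The chain rule gives
\[
d_1\eta_Y(g(y),g(y'))\big[Dg(y)v\big] = d_1\eta_Y(y,y')[v]
\]
for every $v$. This is the covector form of the identity.

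Next we convert to gradients. Write $\partial_1\eta_Y(y,y')$ for the Riemannian gradient in the first slot, so that $d_1\eta_Y(y,y')[v] = \langle \partial_1\eta_Y(y,y'), v\rangle_y$. The left side then becomes $\langle \partial_1\eta_Y(g(y),g(y')), Dg(y)v\rangle_{g(y)}$. By the Myers--Steenrod theorem, a distance-preserving bijection of a connected Riemannian manifold is a smooth Riemannian isometry. This is the only place connectedness and completeness enter, ensuring $d_Y$ is the Riemannian distance. Consequently $Dg(y)$ is a linear isometry $Tan_yY\to Tan_{g(y)}Y$, so
\[
\langle Dg(y)\,\partial_1\eta_Y(y,y'), Dg(y)v\rangle_{g(y)} = \langle \partial_1\eta_Y(y,y'), v\rangle_y .
\]
Comparing the two expressions, and using that $Dg(y)$ is surjective (it is invertible because $g^{-1}$ is also an isometry), we get
\[
\langle \partial_1\eta_Y(g(y),g(y')) - Dg(y)\,\partial_1\eta_Y(y,y'), w\rangle_{g(y)} = 0
\]
for all $w\in Tan_{g(y)}Y$. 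This yields the claim.

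The main obstacle is regularity. Differentiability of $\eta_Y$ holds by Assumption \ref{assump-eta_lipshitz}, even though $d_Y^2$ itself may fail to be smooth at cut points; the composition with $\xi$ in the torus example is designed exactly to handle this. So the argument only needs $\eta_Y\in C^1$ and smoothness of $g$, and the latter is supplied by Myers--Steenrod rather than assumed. In the Euclidean case everything is explicit: $g(y)=Uy+b$ with $U$ orthogonal, $\partial_1\eta_Y(y,y') = 2(y-y')$, and $2(Uy-Uy') = U\cdot 2(y-y')$.
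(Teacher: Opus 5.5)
Your proof is correct, but it takes a different route from the paper's.

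The paper works directly with the geometry of $d_Y^2$. It takes a constant-speed minimizing geodesic $z$ from $y$ to $y'$ and uses the formula $\partial_1 d_Y^2(y,y') = -2z'(0)$. It then observes that $g\circ z$ is a minimizing geodesic from $g(y)$ to $g(y')$, and applies the chain rule to get $\partial_1\eta_Y(g(y),g(y')) = -2\xi'(d_Y^2)\,Dg(y)z'(0)$.

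You never compute the gradient. You differentiate the invariance identity $\eta_Y(g(y),g(y'))=\eta_Y(y,y')$ to obtain the covector relation. You then convert it to a vector identity using that $Dg(y)$ is a linear isometry.

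The paper's version buys an explicit geometric description of $\partial_1\eta_Y$: it points along the geodesic away from $y'$, with magnitude $2\xi'(d_Y^2)\,d_Y$. But it tacitly needs two things. First, a minimizing geodesic must exist; this is where completeness enters, via Hopf--Rinow. Second, $d_Y^2$ must be differentiable at $(y,y')$, i.e.\ $y'$ must lie off the cut locus of $y$. At cut points the formula only makes sense when $\xi'$ vanishes there.

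Your argument avoids all of this. It needs only that $\eta_Y$ be differentiable in its first slot and $g$-invariant. So it holds for any isometry-invariant $C^1$ kernel, including at cut points. It also makes explicit, via Myers--Steenrod, the smoothness of $g$ that the paper uses silently when it writes $Dg$.

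One small correction: completeness plays no role in your argument. Myers--Steenrod needs only connectedness, so your sentence saying completeness enters at that step is inaccurate, though harmless.
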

\begin{proof}
Let $z:[0,1] \to Y$ be a constant speed, length-minimizing geodesic from $y$ to $y'$. One directly computes
\[
\partial_{1}d_Y^2(y,y')=-2z'(0).
\]
Since $g$ is an isometry, we know that one length-minimizing, unit speed path between $g(y)$ and $g(y')$ is given by $g(z(t))$, which we denote by $\tilde{z}(t)$. We then compute via the chain rule
\[
\partial_1\eta_Y(g(y),g(y')) = -2\xi'(d_Y^2(g(y),g(y')))\tilde z'(0) = -Dg(z(0)) 2\xi'(d_Y^2(g(y),g(y')))z'(0) = Dg (y)\partial_1\eta_Y(y,y').
\]
\end{proof}

We now prove one of our main results, regarding the fact that gradient descent preserves symmetry invariance.

\begin{theorem}\label{thm:sym-invar-preserved}
Let $X$ and $Y$ be smooth manifolds and let $\pi_0 \in \Pi(\mu_X)$. Let $\Gc_X,\Gc_Y$ be finite groups of isometries on $X$ and $Y$ respectively, each having $k$ elements. 
Suppose that $\pi_t$ is a gradient descent curve associated with the t-SNE energy, meaning that it satisfies \eqref{eqn:GD-def} for $t \in [0,t_1)$. Assume that $\sigma(g(x)) = \sigma(x)$ for all $g \in \mathcal{G}_X$. Suppose that  $\pi_0$ is invariant under the tuple $(\mathcal{G}_X,\mathcal{G}_Y,\mathbb{G})$, meaning that for all $g \in \mathcal{G}_X$
\[
    \pi_0(g(A) \times \mathbb{G}[g](B))=\pi_0(A \times B).
\]
Finally, assume that \ref{assump-symmetric_eta},\ref{assump-eta_lipshitz} and \ref{assump-Xi} hold and that $I^{-1}$ is locally bounded in $[0,t_1)$. Then we have that for any $t \in [0,t_1)$ that $\pi_t$ is also invariant under $(\mathcal{G}_X,\mathcal{G}_Y,\mathbb{G})$. 


\end{theorem}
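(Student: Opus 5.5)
The strategy follows the reduction made just before Lemma \ref{lem:isometry-lemma}: invariance of $\pi_t$ holds as soon as $G\circ\Phi_t = \Phi_t\circ G$ for every $g\in\Gc_X$, where $G(x,y) = (g(x),\mathbb{G}[g](y))$. Since $\Phi_t(x,y) = (x,\phi_t(x,y))$, this amounts to
\[
\phi_t(g(x),\mathbb{G}[g](y)) = \mathbb{G}[g]\big(\phi_t(x,y)\big)\quad\text{for all }(x,y),\; t\in[0,t_1).
\]
This will be obtained from uniqueness for the ODE defining $\phi_t$: both sides solve the same characteristic equation with the same initial datum. We fix $g$, write $h=\mathbb{G}[g]$, and set $\psi_t(x,y) := h^{-1}(\phi_t(g(x),h(y)))$, with the aim of showing $\psi_t = \phi_t$.

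The first step is an equivariance identity for the velocity field. If a coupling $\pi$ is $(\Gc_X,\Gc_Y,\mathbb{G})$-invariant, then
\[
\Ec[\pi](g(x),h(y)) = Dh(y)\,\Ec[\pi](x,y).
\]
To prove it, change variables $(x',y')\mapsto(g(x'),h(y'))$ in the integral defining $\Ec$, which is legitimate because $\pi$ is invariant. Next check each factor of $F_\pi$. The kernel satisfies $\gaussian(g(x),g(x')) = \gaussian(x,x')$, using $\eta_X(g x,g x') = \eta_X(x,x')$ (this requires $\eta_X$ to be a function of $d_X$, as in the examples), assumption \eqref{assump:sigma_invariance}, and the $g$-invariance of $\mu_X$ in the normalizing integral; hence $\Xi$ is invariant as well. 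Assumption \ref{assump-symmetric_eta} gives $\eta_Y(hy,hy') = \eta_Y(y,y')$. Lemma \ref{lem:isometry-lemma} gives $\partial_1\eta_Y(hy,hy') = Dh(y)\,\partial_1\eta_Y(y,y')$. Finally, $I(\pi)$ is unchanged. Because $Dh(y)$ is linear and independent of $(x',y')$, it can be pulled out of the integral.

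The main obstacle is circularity: the identity above needs $\pi_t$ to be invariant, which is the thing being proved. The plan is to avoid this by working with the transformed coupling. Define $\tilde\pi_t := (G^{-1})_\sharp \pi_t$, which equals the pushforward of $\pi_0$ by $(x,y)\mapsto(x,\psi_t(x,y))$, using $G^{-1}_\sharp\pi_0=\pi_0$. The same change-of-variables computation, without assuming invariance, gives
\[
\Ec[\pi_t](g x,h y) = Dh(y)\,\Ec[\tilde\pi_t](x,y).
\]
Differentiating $\psi_t$ and using that an isometry pulls vector fields back through $Dh^{-1}$ then yields
\[
\partial_t\psi_t(x,y) = -\Ec[\tilde\pi_t]\big(x,\psi_t(x,y)\big),\qquad \psi_0(x,y) = y.
\]
So $(\psi_t,\tilde\pi_t)$ solves the coupled system \eqref{eqn:GD-def} with the same initial coupling $\pi_0$.

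The conclusion then rests on uniqueness for \eqref{eqn:GD-def} on $[0,t_1)$. This comes from the Picard-type argument behind Proposition \ref{prop:soln-exists}, applied in $L^\infty$ or $L^2(\pi_0)$ in the variable $(x,y)$. Assumption \ref{assump-eta_lipshitz} makes $F_\pi$ bounded and Lipschitz in $y$ and $y'$, assumption \ref{assump-Xi} bounds $\Xi$, and the local bound on $I^{-1}$ makes the repulsive term Lipschitz with respect to the coupling. A Gronwall estimate on $\sup|\psi_t-\phi_t|$ over compact subintervals of $[0,t_1)$ therefore gives $\psi_t=\phi_t$. This is the step needing the most care: one must check that the Lipschitz dependence on $\pi$ holds in a metric controlled by $\sup|\psi_t-\phi_t|$, for instance a $W_\infty$-type coupling distance with $x$ fixed. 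Once $\psi_t=\phi_t$, we have $G\circ\Phi_t=\Phi_t\circ G$, and the reduction stated at the start gives invariance of $\pi_t$ for every $g\in\Gc_X$.
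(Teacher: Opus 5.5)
Your proposal is correct and is essentially the paper's argument. The paper applies Gr\"onwall directly to the commutator $W_t = G\circ\Phi_t - \Phi_t\circ G$, whose $y$-component is exactly $h(\phi_t) - h(\psi_t)$ in your notation, after the same change of variables using the invariance of $\pi_0$, the invariance of $\Xi$, and Lemma~\ref{lem:isometry-lemma}. Your packaging as ``the conjugated pair $(\psi_t,\tilde\pi_t)$ is another solution, then uniqueness'' is equivalent. Note also that $I(\tilde\pi_t)=I(\pi_t)$, so the repulsive prefactors coincide and no Lipschitz estimate in $I^{-1}$ is needed. You also correctly flag that $\eta_X$ must be $\Gc_X$-invariant, which the paper uses only implicitly.
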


We remark that the assumptions on $\Xi$ and $\eta$ are relatively mild: bounded $\Xi$ is related to non-degenerating $\sigma$, which is generally true for finite data, whereas the assumptions on $\eta$ hold for functions based upon squared distances or for any compact manifold. Finally, we justify the bound on $I^{-1}$ in Lemma \ref{lem:I-bound} for solutions to the gradient descent equation.
\vspace{0.5 em}
\begin{proof}
Here for ease of notation, we write $g = g_X, \mathbb{G}[g] = g_Y$, as in the cyclic case, but if we replace $g_X$ with arbitrary $g \in \mathcal{G}_X$ and $g_Y$ with $\mathbb{G}[g]$ all of the formulas remain the same. The main idea of the proof is to estimate  the commutator
\[
W_t=G \circ \Phi_t-\Phi_t \circ G.
\]
In particular, if $W_t \equiv0$ $\forall t$, independent of $g$, then we have symmetry invariance. For convenience, we write $W_t(x,y)_x$ to denote the ``$x$'' coordinates of $W$ and $W_t(x,y)_y$ to denote the $y$ coordinates.

Clearly, $W_0(x,y) = 0$ as $\Phi_0 = Id$. Furthermore, for all $t>0$ we have $(W_t)(x,y)_x\equiv 0$ as $\Phi_t(x,y)_x = x$. Thus we only need to estimate $(W_t)(x,y)_{y}=g_Y(\phi_t(x,y))-\phi_t(g_X(x),g_Y(y))$. Thus we take a time derivative
\begin{equation}\label{eq:time-deriv-commutator}
\frac{d}{dt} W_t(x,y)_y=Dg_Y(\phi_t(x,y)) \Ec[\pi_t](x,\phi_t(x,y))-\Ec[\pi_t][g_X(x),\phi_t(g_X(x),g_Y(y)].
\end{equation}
We recall that we can decompose $\mathcal{E}[\pi_t] = \mathcal{K}_{\mathcal{A}}[\pi_t] + \mathcal{K}_{\mathcal{R}}[\pi_t]$. Here to ease in readability, we will primarily focus on the necessary transformations for the attraction term $\mathcal{K}_{\mathcal{A}}$, as the arguments are the same for the repulsion. Using our definition of $\mathcal{K}_{\mathcal{A}}$, along with the push-forward definition of $\pi_t$ and Lemma \ref{lem:isometry-lemma}, we may write


\begin{align*}
&Dg_Y(\phi_t(x,y))\mathcal{K}_\mathcal{A}[\pi_t](x,\phi_t(x,y))\\
&=\int_{X \times Y} \Xi(x,x')\frac{Dg_Y(\phi_t(x,y))\partial_{1}\eta_Y(\phi_t(x,y),y')}{(1+\eta_Y(\phi_t(x,y),y')) } \pi_t(dx'dy')  \\
&=\int_{X \times Y}  \Xi(x,x')\frac{\partial_1\eta_Y(g_Y(\phi_t(x,y)),g_Y(\phi_t(x',y')))}{(1+\eta_Y(g_Y(\phi_t(x,y)),g_Y(\phi_t(x',y'))) } \pi_0(dx'dy').
\end{align*}

Again, using the pushforward definition of $\pi_t$, along with the symmetry invariance of $\pi_0$ and $\Xi$ (which is inherited from $\sigma$), we may write
\begin{align*}
&\mathcal{K}_{\mathcal{A}}[\pi_t](g_X(x),\phi_t(g_X(x),g_Y(y))\\
&=\int_{X \times Y} 
\Xi(g_X(x),x')\frac{\partial_1\eta_Y(\phi_t\circ G(x,y),\phi_t(x',y'))}{(1+\eta_Y(\phi_t \circ G(x,y),\phi_t(x',y')) } \pi_0(dx'dy')  \\
&=\int_{X \times Y} 
\Xi(g_X(x),g_X(x'))\frac{\partial_1\eta_Y(\phi_t\circ G(x,y),\phi_t\circ G(x',y'))}{(1+\eta_Y(\phi_t \circ G(x,y),\phi_t \circ G(x',y')) } \pi_0(dx'dy')  \\
&=\int_{X \times Y} 
\Xi(x,x')\frac{\partial_1\eta_Y(\phi_t\circ G(x,y),\phi_t\circ G(x',y'))}{(1+\eta_Y(\phi_t \circ G(x,y),\phi_t \circ G(x',y')) } \pi_0(dx'dy').
\end{align*}

We note that the expressions in the integrals now only differ in that one is evaluated at $\phi \circ G$ while the other is written in terms of $G \circ \phi$, which facilitates estimation in terms of $W$ itself. We also note that the exact same transformations apply for the repulsion energy, with the notable difference that the repulsion term is multiplied by $I^{-1}$.


Now, by the assumption \ref{assump-eta_lipshitz}, we have that $\frac{\partial_1\eta}{1 + \eta}$ and $\frac{\partial_1\eta}{(1 + \eta)^2}$ are both Lipschitz in both of their arguments. Furthermore, by assumption we have that $I(\pi_t)^{-1}$ is locally bounded in $t$. Finally, we also have that $\Xi$ is bounded by \ref{assump-Xi}. In turn, for any interval $[0,T]$, there exists an $M$ such that
\[
\left|\frac{d}{dt} W_t(x,y)_y\right| \leq M\|W_t\|_\infty.
\]
We note that here we use $|\cdot |$ to denote lengths in $Tan_Y$. Using Gr\"onwall, we then obtain that $W_t \equiv 0 $ for all $t$, concluding the proof.
\end{proof}

The previous result means that for any symmetry invariant initial data, if we follow gradient descent then the embedding remains symmetry invariant for all times. This, along with the discussion of mutually exclusive embeddings in the previous subsection, gives a natural avenue for constructing a large family of distinct critical points of the energy. We first explore this phenomenon numerically, drawing links with previous examples, and then offer rigorous proofs under appropriate assumptions in Sections \ref{sec:descent-paths} and \ref{sec:distinctness}.

\subsection{Numerical Illustrations}\label{sec:num-examples}
In order to illustrate the utility of these symmetry-based methods in constructing different critical points, we provide various numerical illustrations. In all of the examples in this section we consider invariant maps, and not general correspondences. Our first two examples focus on symmetry pairs on the plane which are mutually exclusive up to constants.

To quantify the extent of symmetry in the t-SNE plot, we measure how far the
embedding is from being symmetry invariant. Each dataset is built as a group orbit, so
$g_X$ permutes the points and the permutation $\sigma$ with
$x_{\sigma(i)}=g_X(x_i)$ is known. Reading the embedding as the map
$T:x_i\mapsto y_i$, we compare $T(g_X(x))$ with $g_Y(T(x))$ point by point and
take the mean,
\[
\mathcal{SE}(T)\;=\;\frac{1}{\operatorname{diam}(Y)\,n}\sum_{i=1}^{n}
| T(g_X(x_i)) - g_Y(T(x_i)) |
\;=\;\frac{1}{\operatorname{diam}(Y)\,n}\sum_{i=1}^{n}
| y_{\sigma(i)} - g_Y\,y_i| 
\]
where $\operatorname{diam}(Y)=\max_{y_i,y_j}|y_i-y_j|$ and $g_Y$ is the element
paired with $g_X$. We normalize  by $\operatorname{diam}(Y)$, and it vanishes exactly when $T$ is symmetry invariant.

\begin{example}\label{example:S1_cricle_rotation_2pi/5}
Here $X=\Sbb^{1}$ and $Y=\mathbb{R}^2$. The group $\Gc_X$ is generated by rotation
by $\frac{2\pi}{5}$ and $\Gc_Y$ by rotation by $\frac{6\pi}{5}$. The fundamental
domain is $[0,\frac{2\pi}{5})$. We use the function $T(x,y)=(x^2,y)$, as an initialization on that fundamental domain, and extend using the symmetry to a function on all of $\Sbb^{1}$

We run two optimisers from this same initialisation. The first is vanilla
gradient descent, in which the exact gradient is used at every step. Since the
invariance is preserved at every iteration, the embedding remains symmetric ($\mathcal{SE}=3.088\times10^{-15}$); see
Figure~\ref{fig:s1_vanilla}.

The second is openTSNE, which introduces several approximations beyond the
vanilla objective: the affinity matrix is sparsified to the
$\lfloor 3\times\text{perplexity}\rfloor$ nearest neighbors, the repulsive
forces are approximated by a Barnes--Hut quadtree with parameter $\theta\ge 0$
($\theta=0$ recovering the exact gradient), and the optimizer applies momentum
and per-coordinate adaptive gains. None of these operations is symmetry invariant, and
the resulting defects are larger by many orders of magnitude
(Figure~\ref{fig:s1_sweep} and Table~\ref{tab:s1-combined}), across every
combination of $\theta$ and early exaggeration we tried. 

 We do note that openTSNE in this context attains \emph{lower} total energy than vanilla gradient descent. In many cases the outcomes maintain a high degree of visual symmetry, and still a relatively low value of $\mathcal{SE}$, and understanding whether the symmetry solutions are stable critical points or saddle points is an interesting question that we do not resolve here.
The code can be found on the authors' GitHub.
\footnote{\url{https://github.com/nak-har/abundance_of_critical_points_tsne}}
\end{example}

\begin{figure}[h!]
    \centering
    \begin{subfigure}{0.23\linewidth}
        \centering
        \includegraphics[width=\linewidth]{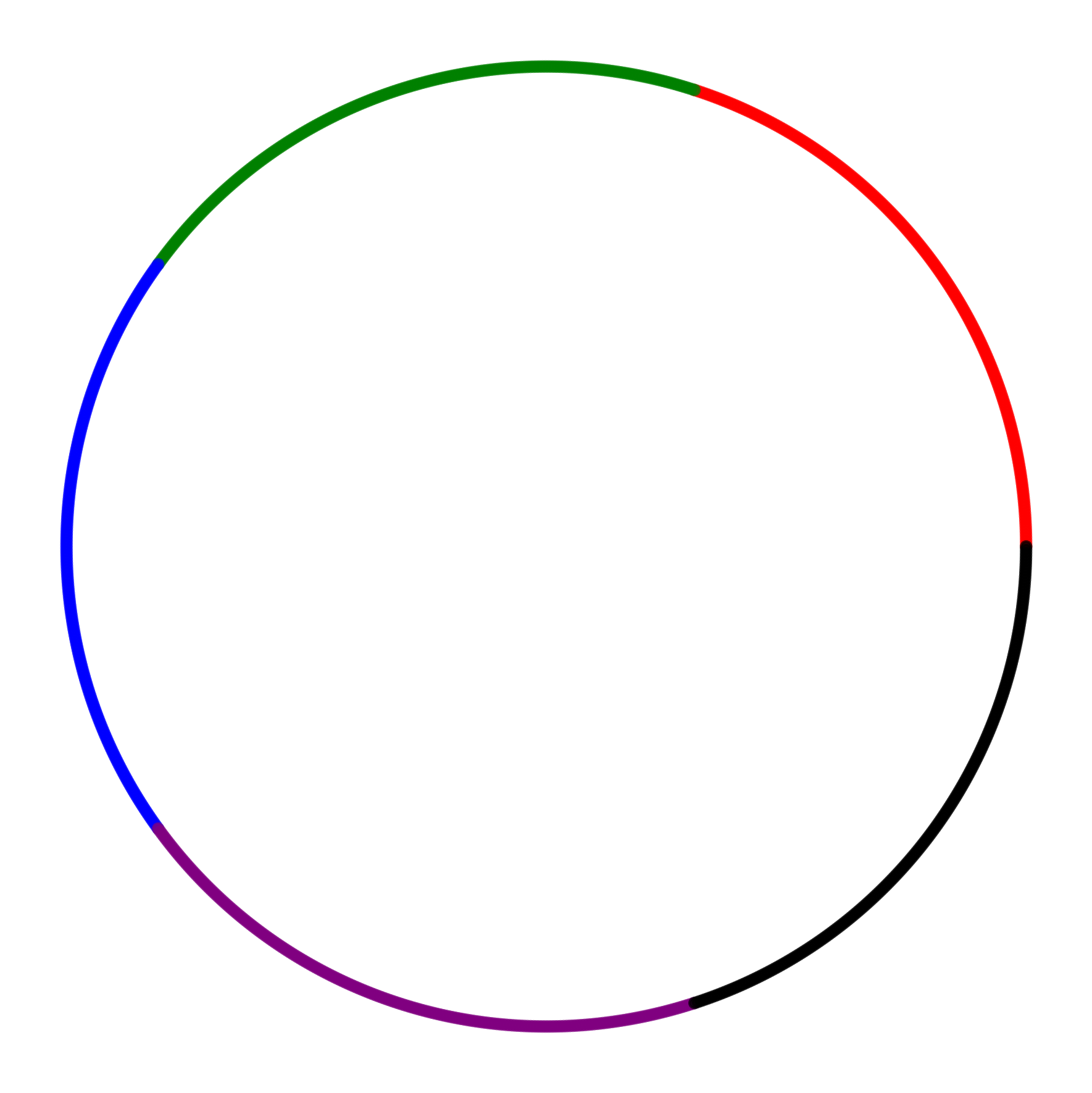}
        \caption{Points on the circle}
    \end{subfigure}
    \hfill
    \begin{subfigure}{0.23\linewidth}
        \centering
        \includegraphics[width=\linewidth]{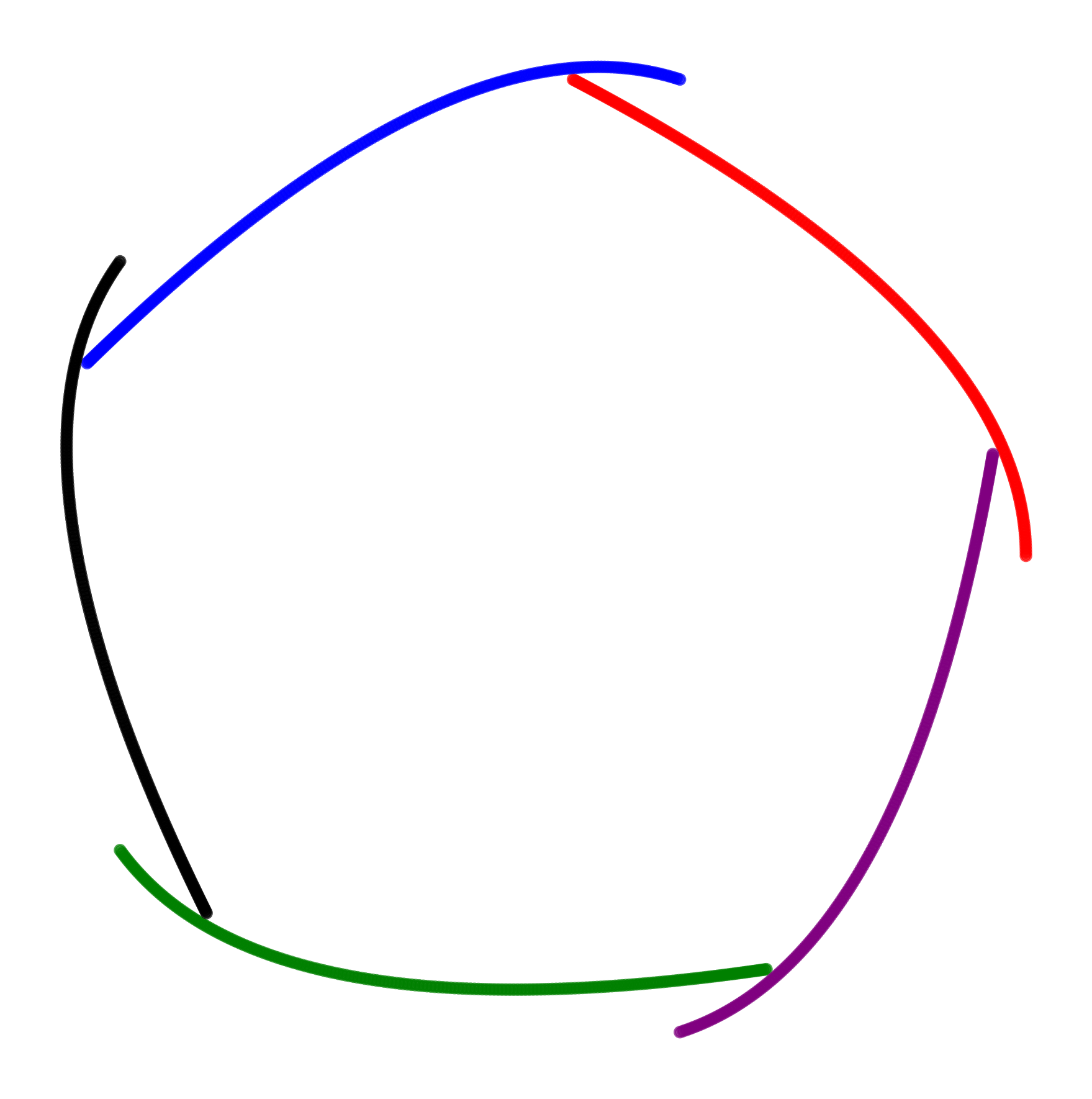}
        \caption{Initialisation}
    \end{subfigure}
    \hfill
    \begin{subfigure}{0.23\linewidth}
        \centering
        \includegraphics[width=\linewidth]{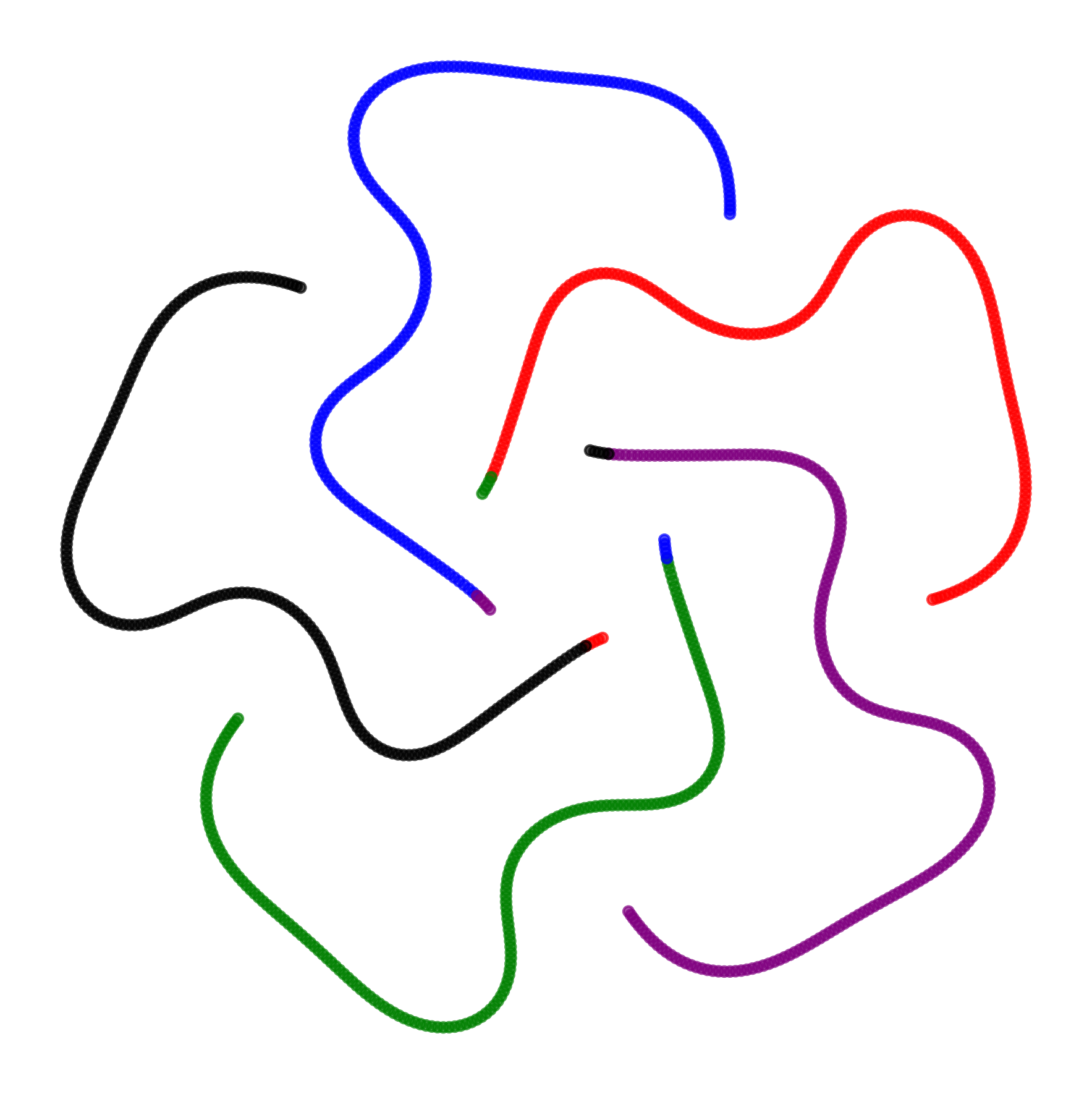}
        \caption{Vanilla gradient descent}
    \end{subfigure}
    \caption{Input data, symmetric initialisation, and the embedding produced by
    vanilla gradient descent for $\mathbb{S}^1$ with $T(x,y)=(x^2,y)$, $N=1000$, perplexity $10$. The symmetry of the initialization is preserved
    exactly ($\mathcal{SE}=3.088\times10^{-15}$).}
    \label{fig:s1_vanilla}
\end{figure}

\begin{figure}[H]
    \centering
    \begin{subfigure}{0.23\linewidth}
        \centering
        \includegraphics[width=\linewidth]{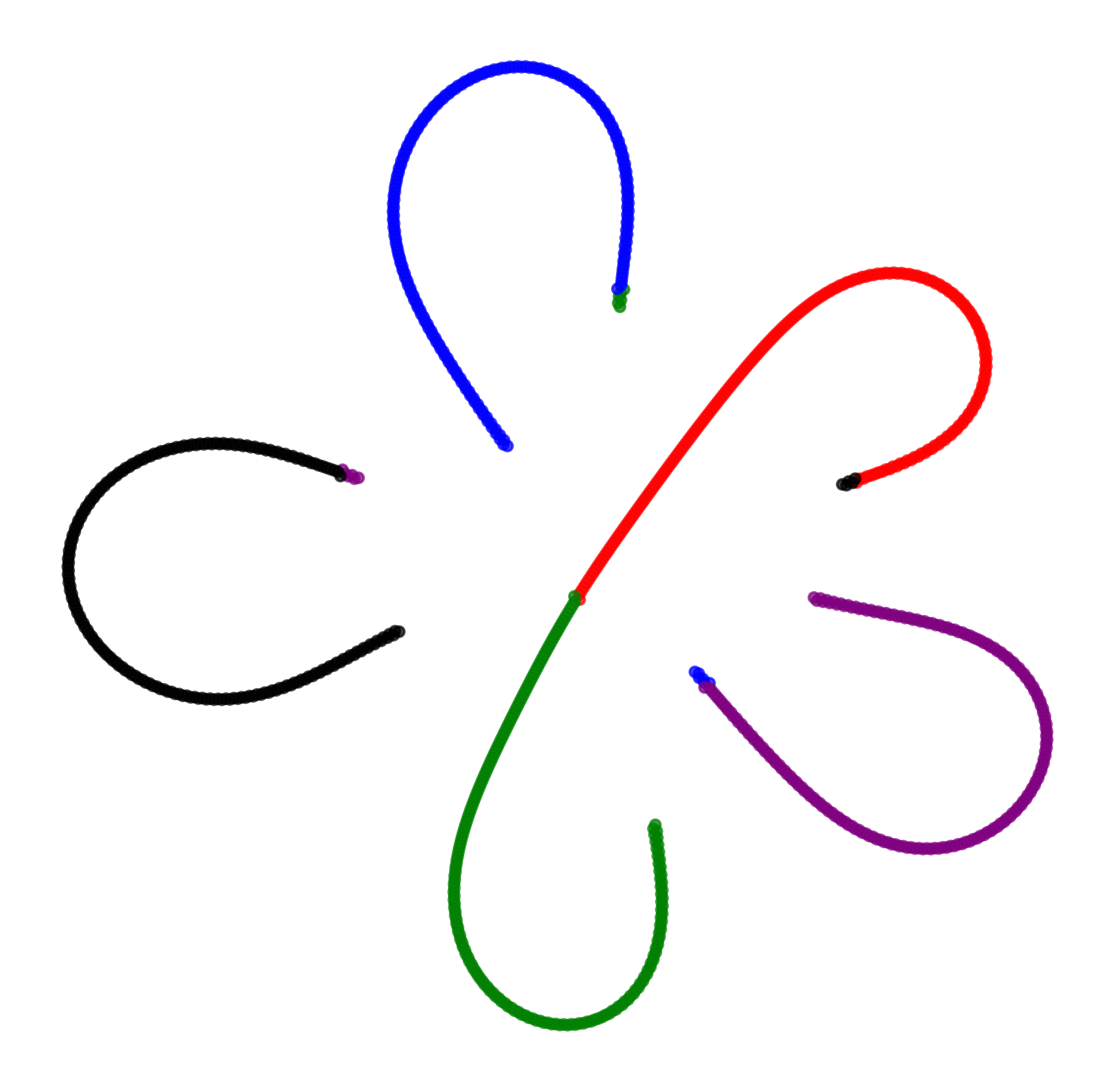}
        \caption{exag ON, $\theta=0$}
    \end{subfigure}
    \hfill
    \begin{subfigure}{0.23\linewidth}
        \centering
        \includegraphics[width=\linewidth]{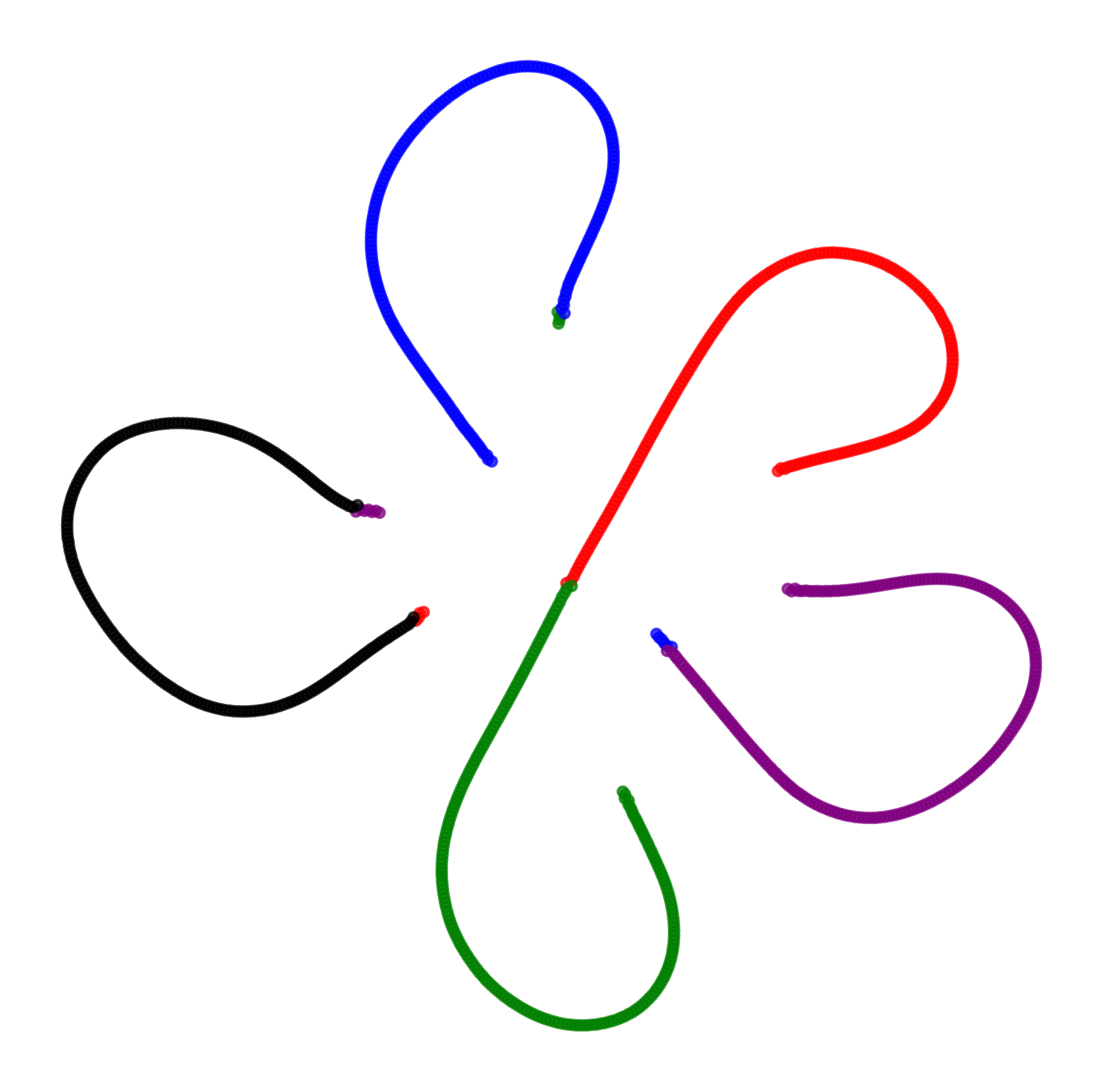}
        \caption{exag ON, $\theta=0.8$}
    \end{subfigure}
    \hfill
    \begin{subfigure}{0.23\linewidth}
        \centering
        \includegraphics[width=\linewidth]{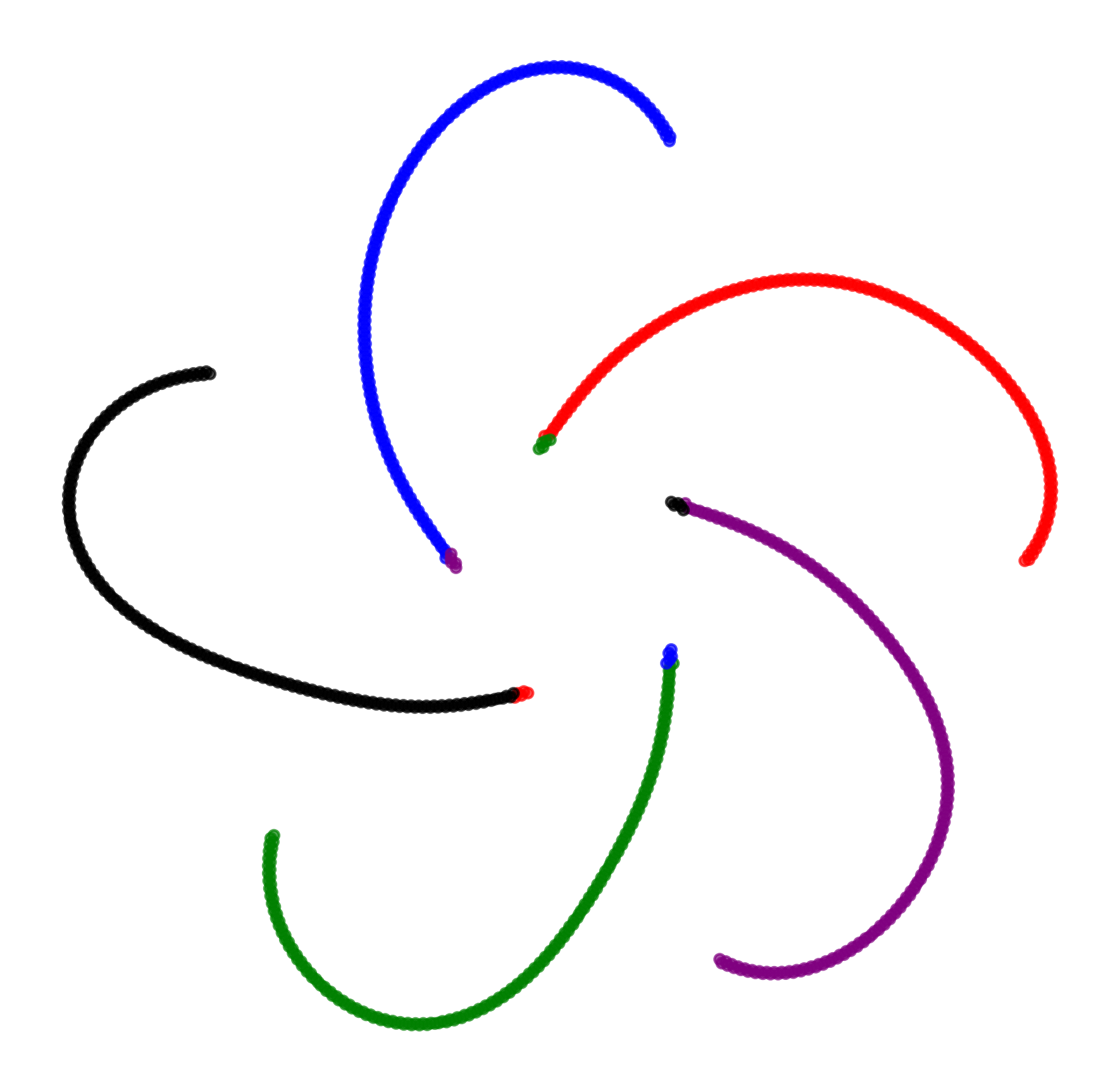}
        \caption{exag OFF, $\theta=0$}
    \end{subfigure}
    \hfill
    \begin{subfigure}{0.23\linewidth}
        \centering
        \includegraphics[width=\linewidth]{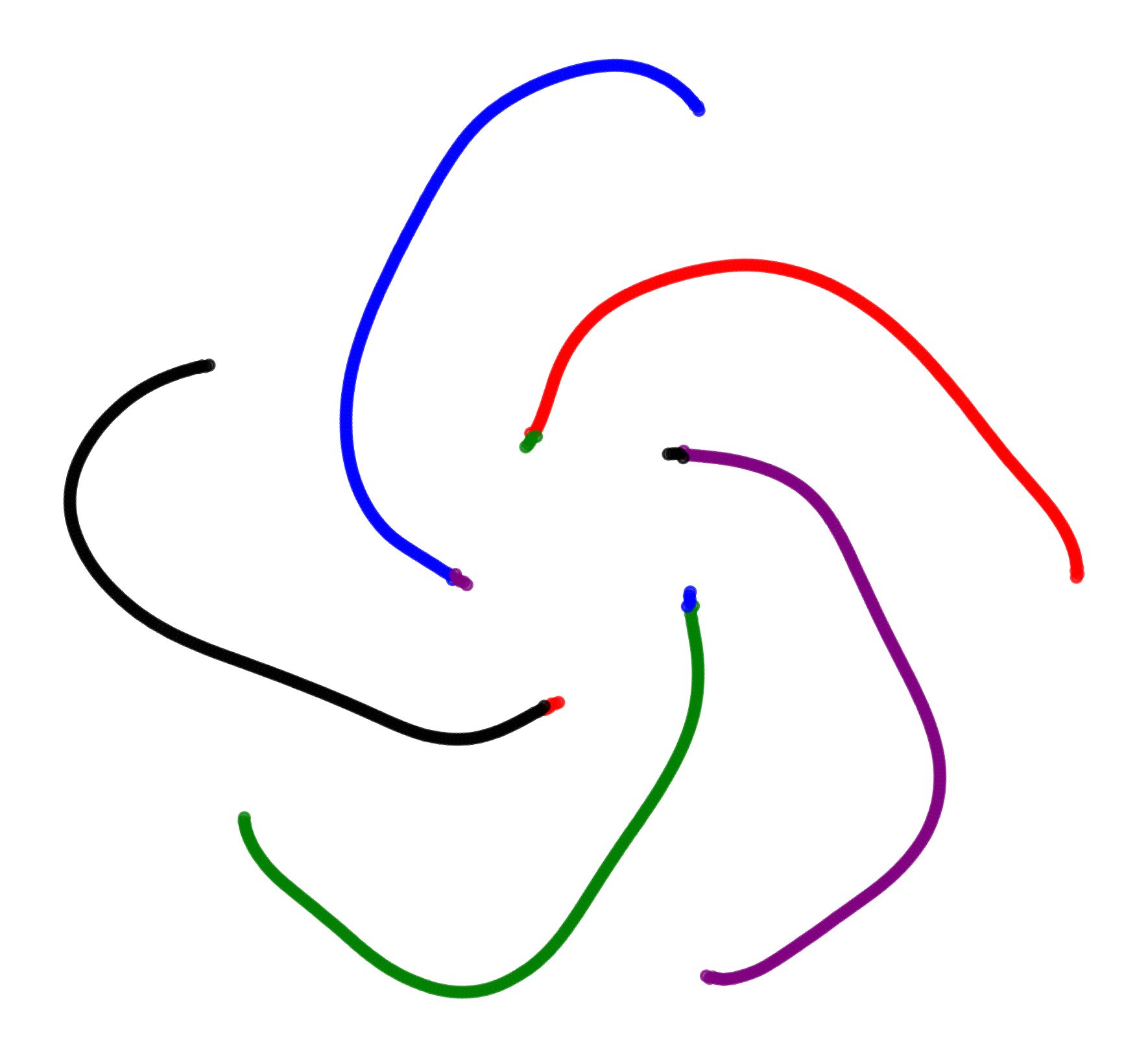}
        \caption{exag OFF, $\theta=0.8$}
    \end{subfigure}
    \caption{openTSNE embeddings from the same symmetric initialisation as
    Figure~\ref{fig:s1_vanilla}, under different early exaggeration and
    Barnes--Hut $\theta$ settings ($N=1000$, perplexity $10$). None is exactly
    symmetric; a quantification of the loss of symmetry is given in Table~\ref{tab:s1-combined}.}
    \label{fig:s1_sweep}
\end{figure}

\begin{table}[H]
  \centering
  \caption{Energies and symmetry defect for $\mathbb{S}^1$ with $T(x,y)=(x^2,y)$,
  $N=1000$,perplexity $10$. Vanilla gradient descent preserves the symmetry. Every openTSNE
  run does not exactly preserve the symmetry, and reaches a lower total energy.}
  \label{tab:s1-combined}
  \begin{tabular}{llcccc}
    \toprule
    Optimiser & Setting & $\mathrm{A_n}$ & $\mathrm{R_n}$ & $\mathrm{A_n}+\mathrm{R_n}$ & $\mathcal{SE}_{\text{final}}$ \\
    \midrule
    Vanilla GD & exact gradient
      & $0.5385$ & $-4.4705$ & $-3.9321$ & $3.088\times10^{-15}$ \\
    \midrule
    openTSNE & exag ON,\ $\theta=0$
      & $1.2580$ & $-5.5670$ & $-4.3090$ & $5.538\times10^{-2}$ \\
    openTSNE & exag ON,\ $\theta=0.8$
      & $1.1290$ & $-5.4230$ & $-4.2950$ & $1.182\times10^{-1}$ \\
    openTSNE & exag OFF,\ $\theta=0$
      & $1.3030$ & $-5.6150$ & $-4.3120$ & $2.115\times10^{-2}$ \\
    openTSNE & exag OFF,\ $\theta=0.8$
      & $1.0960$ & $-5.3850$ & $-4.2880$ & $3.757\times10^{-2}$ \\
    \bottomrule
  \end{tabular}
\end{table}

\begin{example}
Let $X = \Sbb^1$ denote the unit circle, with $\Gc_X$ generated by rotation by $\frac{2\pi}{9}$ 
and $\Gc_Y$ generated by rotation by $\frac{8\pi}{9}$. We use the function $T(x,y) = (x^2+\tfrac12 y,\, y^2)$ as an initialization
on the fundamental domain $\left[0, \frac{2\pi}{9}\right)$ and extend via symmetry. We run vanilla t-SNE on this setting with N=1800 points and perplexity=10 for 1000 iterations to obtain another perfectly symmetric plot with symmetry error as $1.483 \times 10^{-11}$. This is evidently a different map from example \ref{example:S1_cricle_rotation_2pi/5}. 
\end{example}

\begin{figure}[h!]
    \centering

    \begin{subfigure}{0.23\linewidth}
        \centering
        \includegraphics[width=\linewidth]{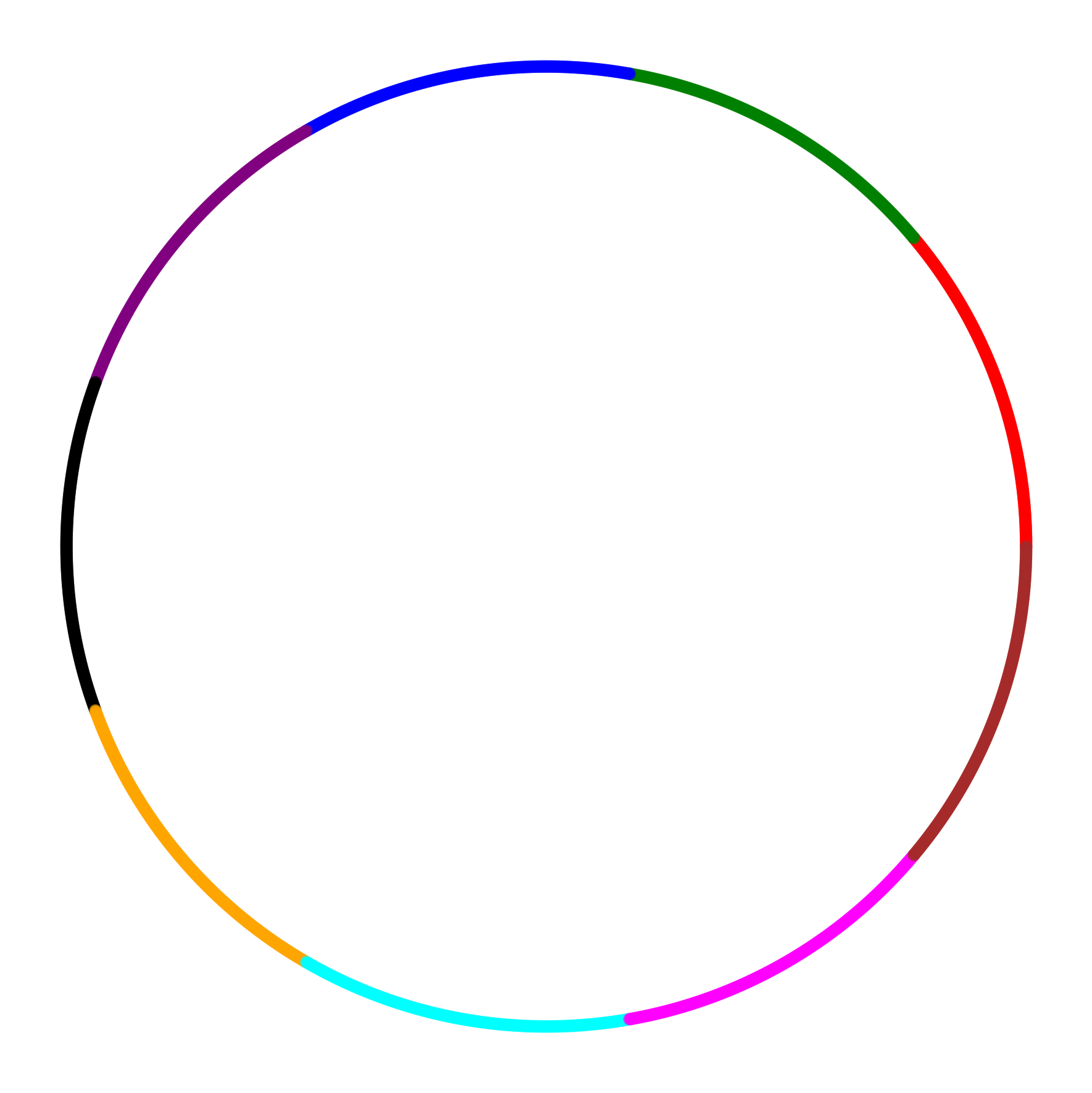}
        \caption{Points on the Circle}
    \end{subfigure}
    \hfill
    \begin{subfigure}{0.23\linewidth}
        \centering
        \includegraphics[width=\linewidth]{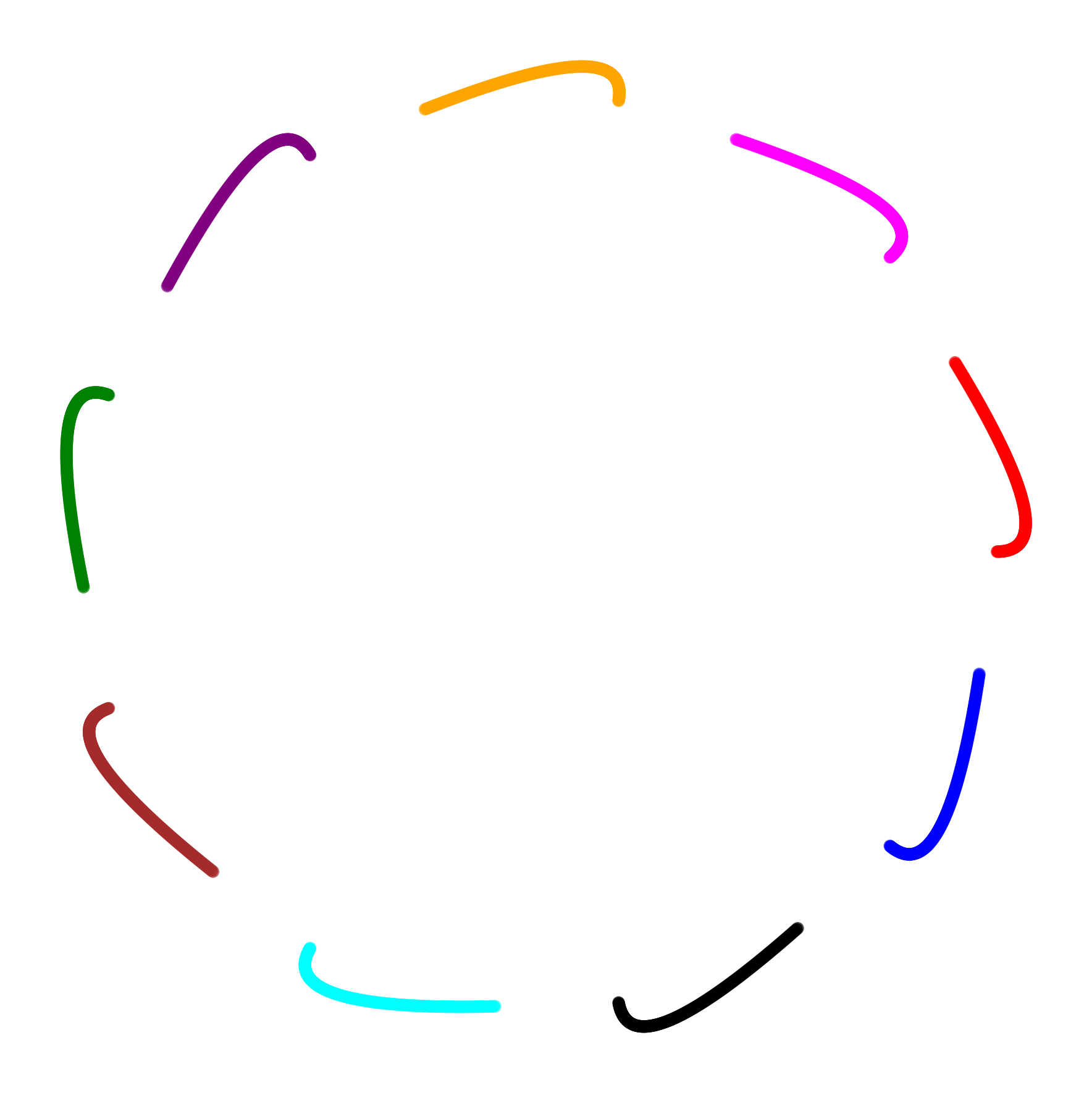}
        \caption{Initialization}
    \end{subfigure}
    \hfill
    \begin{subfigure}{0.23\linewidth}
        \centering
        \includegraphics[width=\linewidth]{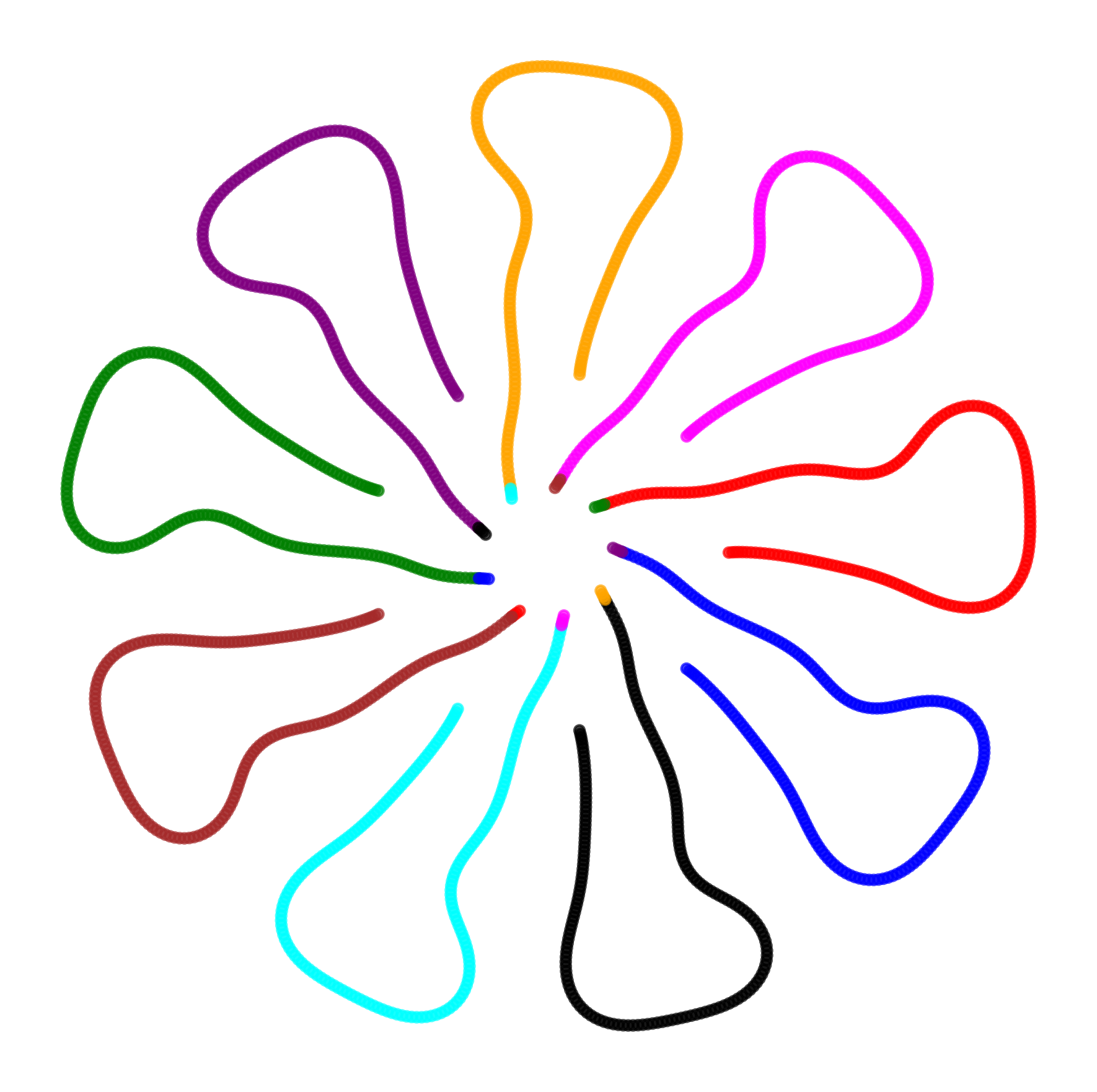}
        \caption{t-SNE}
    \end{subfigure}

    \caption{t-SNE plots for $\mathbb{S}^{1}$ circle}
\end{figure}

\begin{example}
The set $X$ is a square grid of points $(20 \times 20)$. Here $\Gc_X$ and $\Gc_Y$ are generated by 
 $g_X(x,y)=(x,-y)$, reflection across the x-axis, and $g_Y(x,y)=(-x,-y)$, rotation by $\pi$. We define $T$ on the fundamental domain 
\[
E = \{(x,y) \mid y \ge 0 \}
\]
to be the identity map, which determines $T$ on the whole domain as below,
\[
T(x,y) =
\begin{cases}
(x,y) & y \ge 0 \\
(-x,y) & y < 0
\end{cases}
\]
The numerical results, from the vanilla implementation of the t-SNE optimization with perplexity = 5, are shown in Figure \ref{fig:grid-twisting-sym}. These are very similar to the observed grid twisting which is exhibited in \citep{wattenberg2016use}, which we recreated in Example \ref{ex:grid_twisting}. Here the symmetry error for the opentSNE run is of the order of $10^{-2}$, and vanilla gradient descent produces a symmetry error of the order of $10^{-15}$.
\end{example}

\begin{figure}[H]
    \centering
    \begin{subfigure}{0.45\linewidth}
        \centering
        \includegraphics[height=4.5cm]{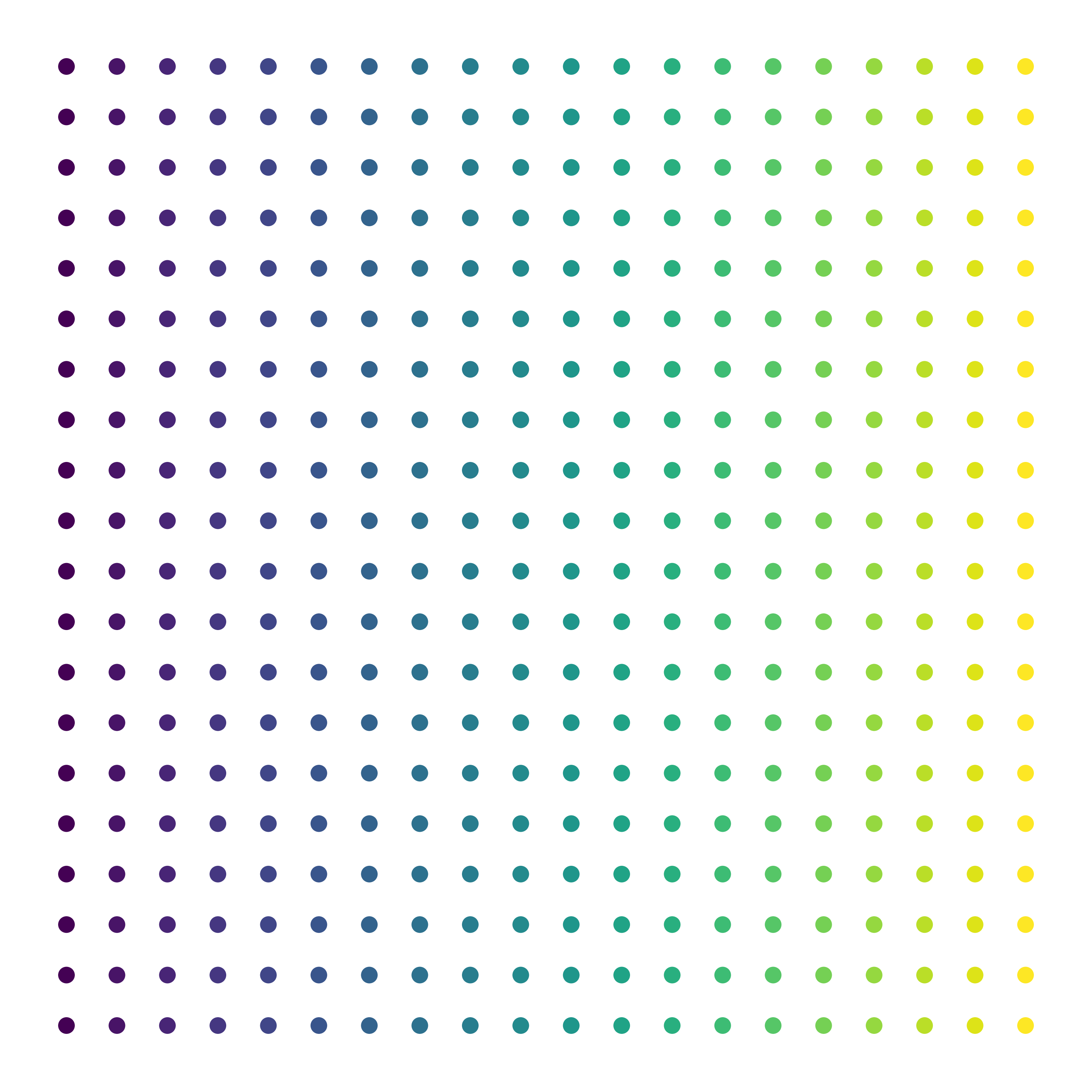}
        \caption{Original grid}
    \end{subfigure}
    \hfill
    \begin{subfigure}{0.45\linewidth}
        \centering
        \includegraphics[height=4.5cm]{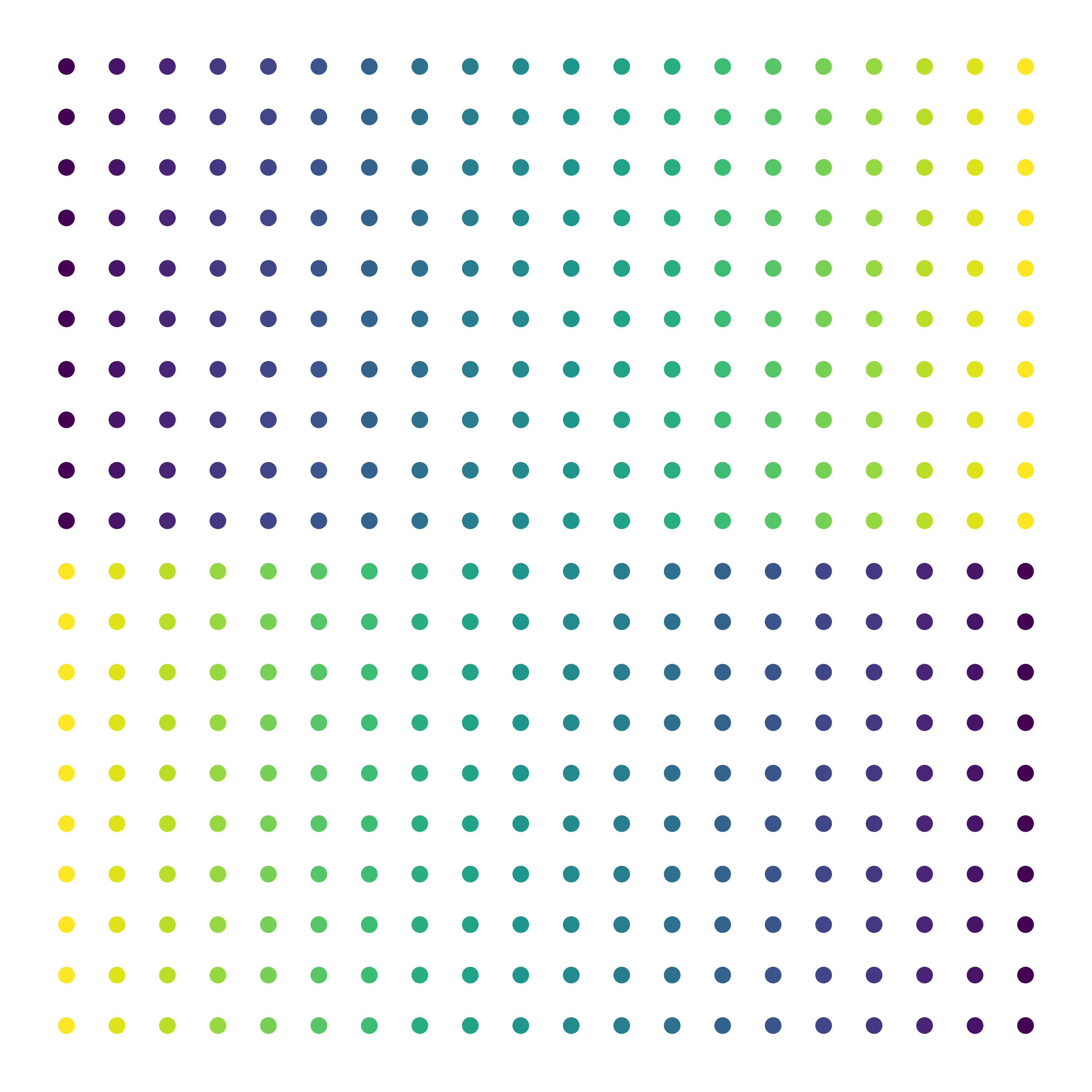}
        \caption{Initialization}
    \end{subfigure}

    \vspace{0.6em}

    \begin{subfigure}{0.45\linewidth}
        \centering
        \includegraphics[height=4.5cm]{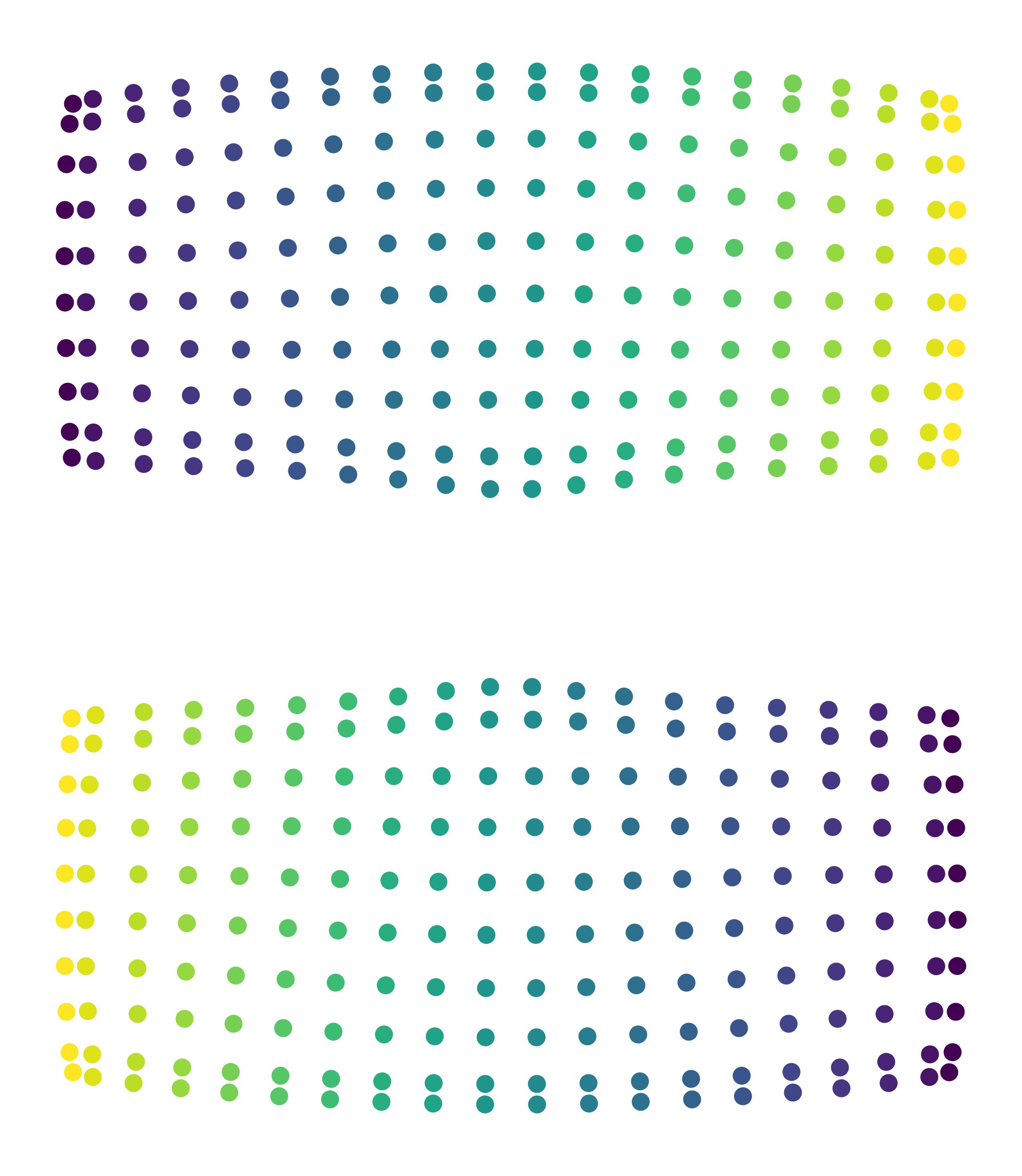}
        \caption{Vanilla t-SNE}
    \end{subfigure}
    \hfill
    \begin{subfigure}{0.45\linewidth}
        \centering
        \includegraphics[height=4.5cm]{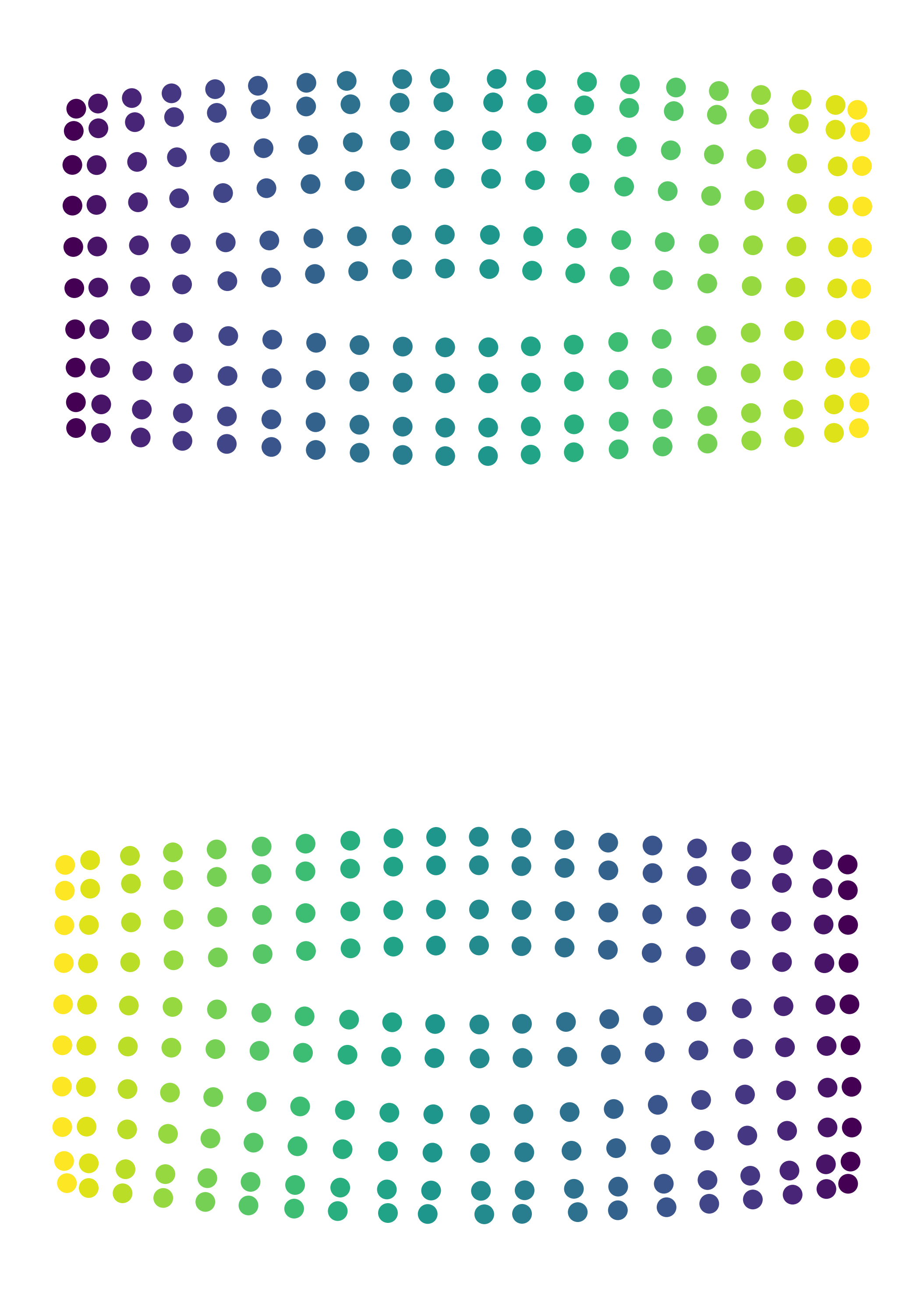}
        \caption{openTSNE}
    \end{subfigure}

    \caption{Grid twisting: the original grid, the symmetric initialization, and
    the embeddings produced by vanilla gradient descent and by openTSNE.}
    \label{fig:grid-twisting-sym}
\end{figure}


\begin{example}\label{ex:hyperbolic-shuffling}
   We consider a hyperbolic-space example using the t-SNE implementation of \citet{skrodzki2024hyperbolic}. Here we use $10000$ points on $\mathbb{S}^{2}$.The initialization is constructed by mapping each slice to a single point in hyperbolic space and here $g_Y$ is rotation by $\frac{4\pi}{5}$ The resulting embedding is computed in hyperbolic space and displayed using the Poincaré disk model. In the bottom row, we can see the evolution of the particles at various stages of the optimization. In the early stages of optimization the symmetry is preserved relatively well, but gets broken later on, probably due to the different optimization method used in this code formulation. 
\end{example}
\begin{figure}[H]
    \centering

    \begin{subfigure}{0.30\linewidth}
        \centering
        \includegraphics[width=\linewidth]{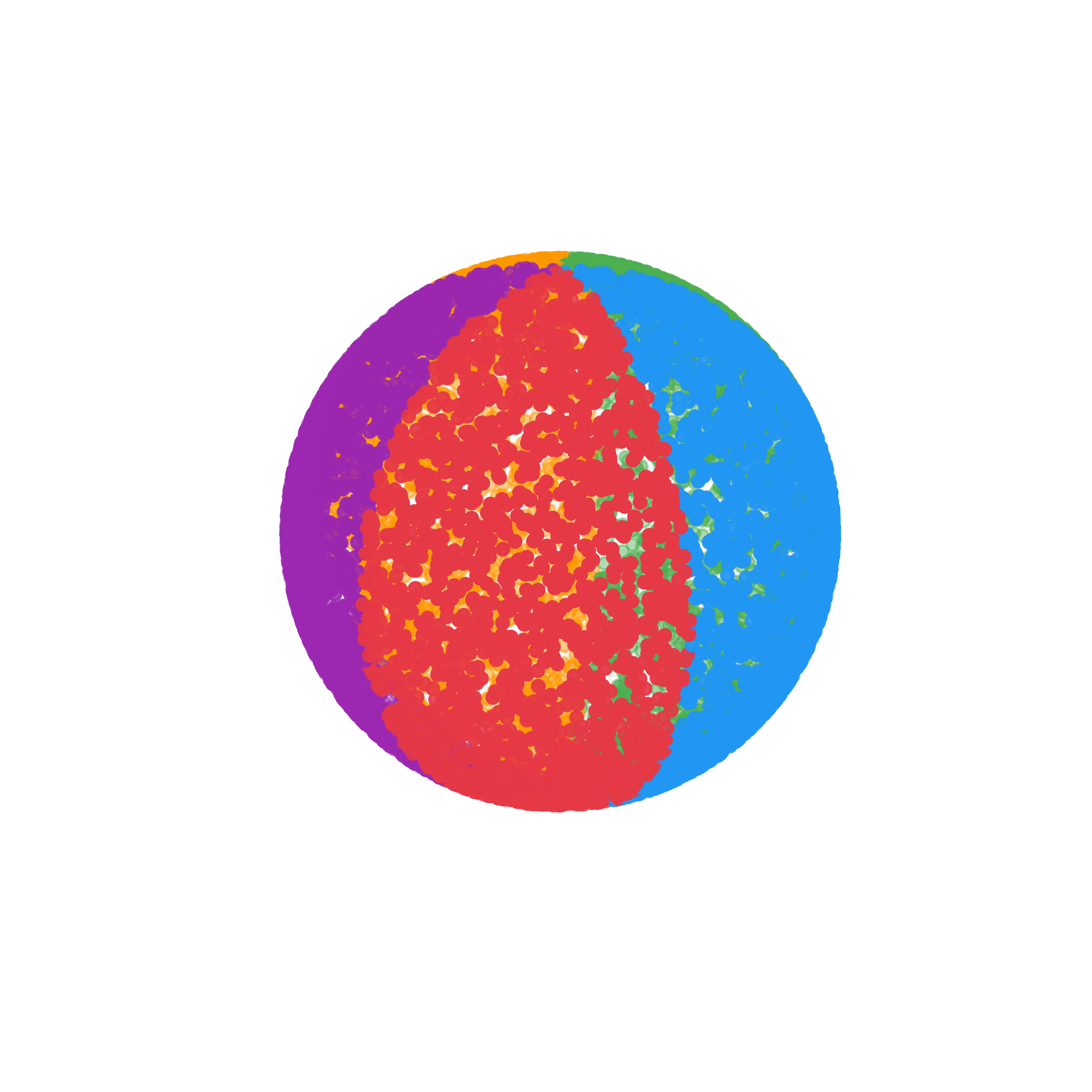}
        \caption{$S^{2}$}
    \end{subfigure}
    \hfill
    \begin{subfigure}{0.30\linewidth}
        \centering
        \includegraphics[width=\linewidth]{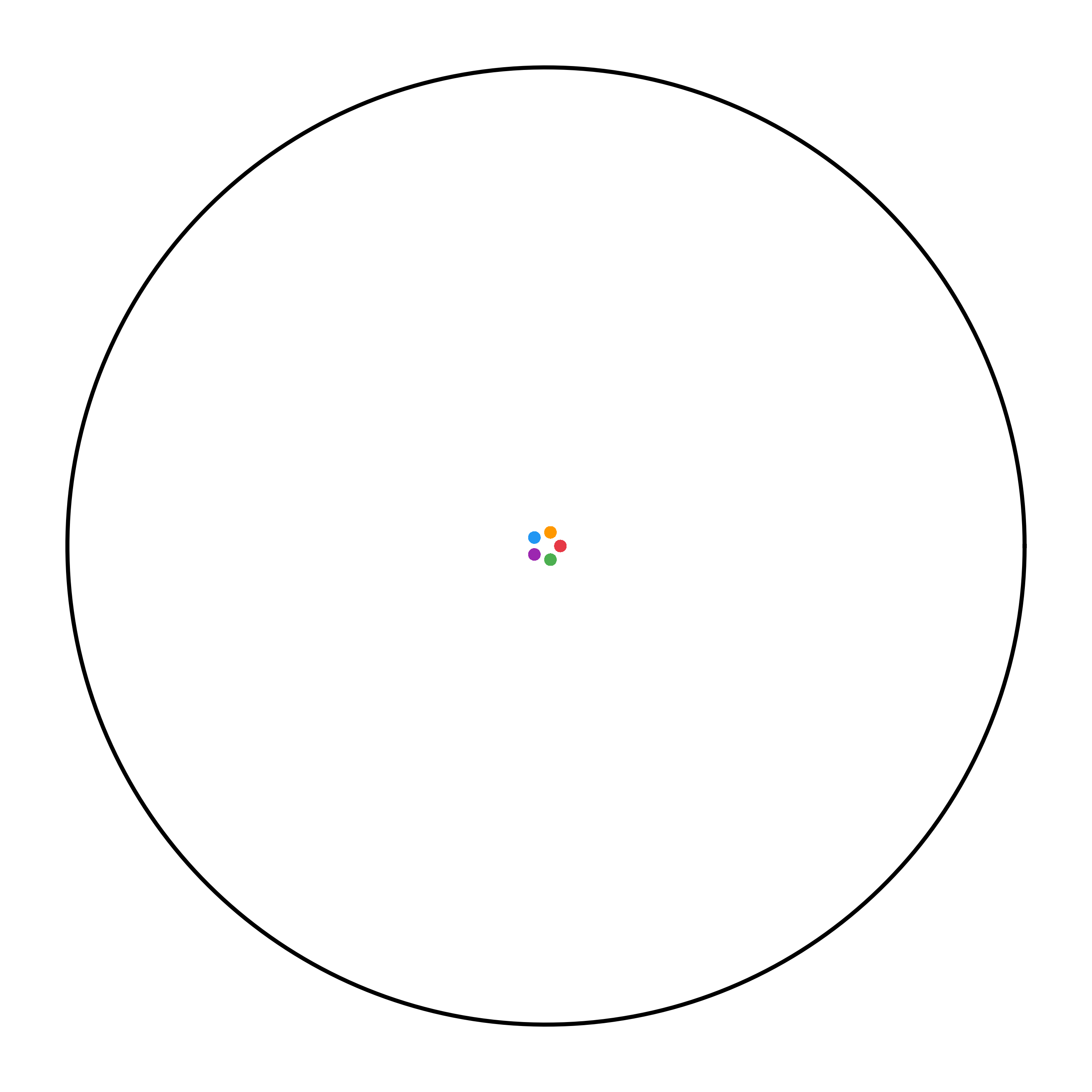}
        \caption{Initialization}
    \end{subfigure}
    \hfill
    \begin{subfigure}{0.30\linewidth}
        \centering
        \includegraphics[width=\linewidth]{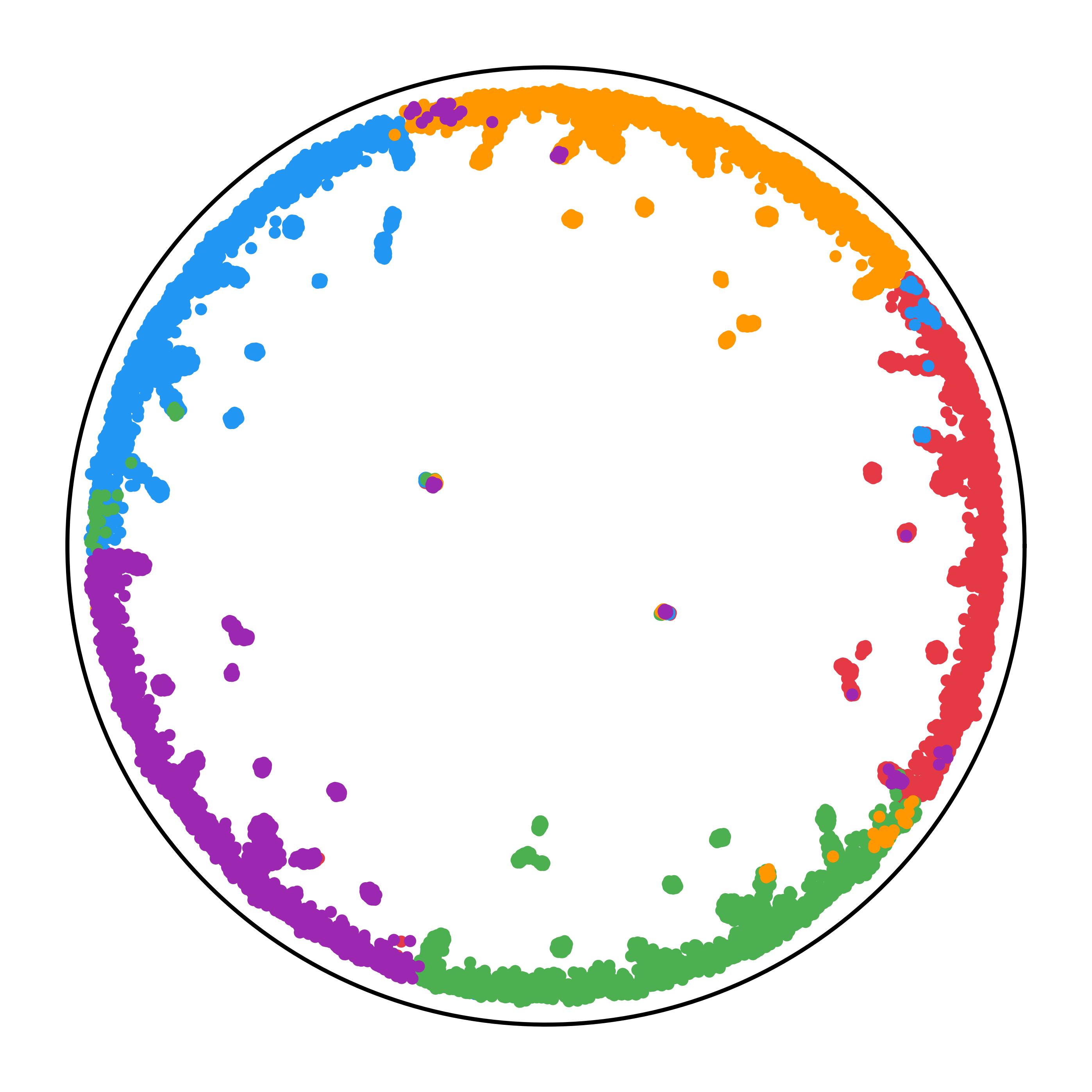}
        \caption{t-SNE output}
    \end{subfigure}

    \vspace{0.5cm}

    \begin{subfigure}{0.18\linewidth}
        \centering
        \includegraphics[width=\linewidth]{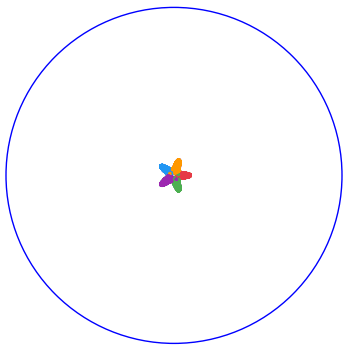}
        \caption{epoch:10}
    \end{subfigure}
    \hfill
    \begin{subfigure}{0.18\linewidth}
        \centering
        \includegraphics[width=\linewidth]{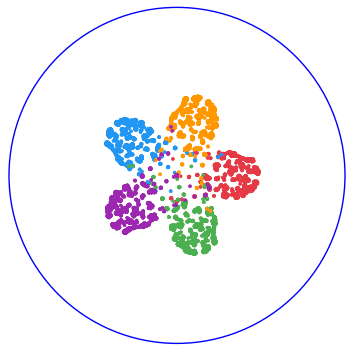}
        \caption{epoch:50}
    \end{subfigure}
    \hfill
    \begin{subfigure}{0.18\linewidth}
        \centering
        \includegraphics[width=\linewidth]{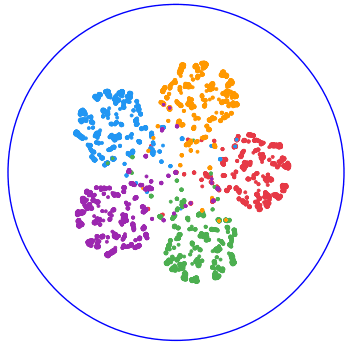}
        \caption{epoch:100}
    \end{subfigure}
    \hfill
    \begin{subfigure}{0.18\linewidth}
        \centering
        \includegraphics[width=\linewidth]{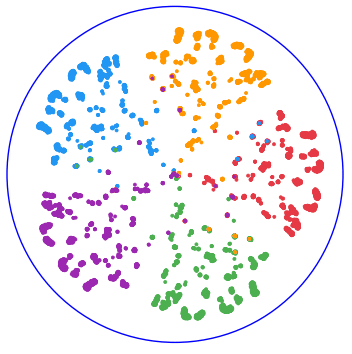}
        \caption{epoch:200}
    \end{subfigure}
    \hfill
    \begin{subfigure}{0.18\linewidth}
        \centering
        \includegraphics[width=\linewidth]{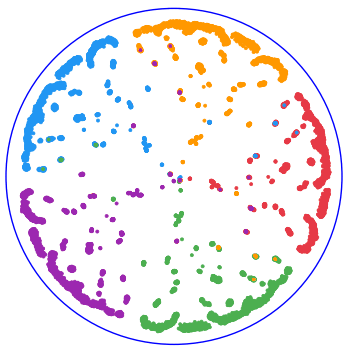}
        \caption{epoch:500}
    \end{subfigure}

    \caption{t-SNE plots for Sphere on Hyperbolic Space, described in Example \ref{ex:hyperbolic-shuffling}. 
    The bottom row shows the evolution of the particles during the optimization.}
    \label{fig:grid-twisting-sym}
\end{figure}

\section{Existence of descent paths and their limits}\label{sec:descent-paths}

In this section we address the issue of existence of descent paths, and their limits. In the finite data case, with $n$ points, one can re-frame the question of solving \eqref{eqn:GD-def} into an autonomous ordinary differential equation on $\R^{m \times n}$, which immediately gives local in time well-posedness. The question of a long-time limit in the finite data case reduces to understanding compactness properties of the solutions: this has been addressed in low-dimensional settings, specifically for $m=2$ in \citep{jeong2024convergenceanalysistsnegradient}. In the case of infinite data sets (e.g. with densities) these tools require some modification, which we describe here.

A first important step is to provide an a priori bound upon $I$. To begin, we recall a classical lower bound upon the energy, namely
\begin{equation}\label{eq:lower-bound-energy}
    \inf_{\pi} \mathcal{C}(\pi) = \mathcal{C}_{min} > -\infty.
\end{equation}
This inequality is a consequence of the positivity of the KL divergence: we omit the proof.
We then utilize this lower bound to provide a rough estimate on $\frac{1}{I}$, which we will use later on.

\begin{lemma}\label{lem:I-bound}
   Let $\pi_t$ evolve according to \eqref{eqn:GD-def}. Then for some $C>0$ only depending upon $C_\eta,\|\Xi\|_\infty,\mathcal{C}(\pi_0)$, we have \begin{equation}\label{eq:a-priori-I}
   \frac{1}{I(\pi_t)} \leq Ce^{Ct^{1/2}}.
   \end{equation}
\end{lemma}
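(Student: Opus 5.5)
The plan is to combine two facts: the energy decreases along the flow, and the attraction term controls how far the particles can have spread in $Y$. Since $\mathcal{R}(\pi_t)=\log I(\pi_t)$, a lower bound on $I$ amounts to an upper bound on $-\mathcal{R}(\pi_t)$. First, by the dissipation identity in the Remark following \eqref{eqn:GD-def}, $\mathcal{C}(\pi_t)\le\mathcal{C}(\pi_0)$. This gives
\[
-\log I(\pi_t) = \mathcal{A}(\pi_t)-\mathcal{C}(\pi_t) \le \mathcal{A}(\pi_t) - \mathcal{C}(\pi_t),
\]
but $\mathcal{A}(\pi_t)$ is not obviously bounded above. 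So I will not route the argument through $\mathcal{A}$. Instead I will bound $I$ from below directly, using how far particles can travel.

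The key quantitative step is a bound on the displacement of each particle. Writing $y_t=\phi_t(x,y)$, we have $|\partial_t y_t| = |\Ec[\pi_t](x,y_t)|$. Using \eqref{eq:lower-bound-energy}, integrating the dissipation identity gives
\[
\int_0^t\!\!\int |\Ec[\pi_s]|^2\,d\pi_s\,ds \le \mathcal{C}(\pi_0)-\mathcal{C}_{min}=:D.
\]
By Cauchy--Schwarz in time, the $L^2(\pi_0)$ displacement then satisfies
\[
\Big(\int d_Y(\phi_t(x,y),y)^2\,\pi_0(dxdy)\Big)^{1/2}\le \int_0^t \|\Ec[\pi_s]\|_{L^2(\pi_s)}\,ds\le (Dt)^{1/2}.
\]
This is where the $t^{1/2}$ enters.

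Next I convert this displacement bound into a lower bound on $I(\pi_t)$. I write $I(\pi_t)=\iint (1+\eta_Y(\phi_t(x,y),\phi_t(x',y')))^{-1}\,\pi_0\pi_0$. By the triangle inequality and $\eta_Y\le C_\eta d_Y^2$, the distance $d_Y(\phi_t(x,y),\phi_t(x',y'))$ is at most $d_Y(y,y')$ plus the two displacements. By Markov's inequality, at least half of the $\pi_0$-mass has displacement at most $2(Dt)^{1/2}$. The same Markov argument applied to the initial pairwise distances, using the finite second moment of $\pi_0$ or simply some radius $R_0$ containing half the mass, shows that a product set of $\pi_0\otimes\pi_0$-measure at least $1/16$ has pairwise distance at most $R_0+4(Dt)^{1/2}$. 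This yields
\[
I(\pi_t)\ge \tfrac{1}{16}\big(1+C_\eta(R_0+4\sqrt{Dt})^2\big)^{-1}.
\]
That is a polynomial bound, which is in fact stronger than the claimed $Ce^{Ct^{1/2}}$ since polynomials are dominated by exponentials of $\sqrt t$. If the constant must not depend on $R_0$, I will instead take the route via $\mathcal{A}$. Jensen's inequality with $\log(1+\eta)$ and the boundedness of $\|\Xi\|_\infty$ let one bound $-\log I$ by $\mathcal{C}(\pi_0)$ plus $\|\Xi\|_\infty$ times a log-moment. That log-moment grows like $\log(1+t)$, or crudely like $Ct^{1/2}$, and exponentiating gives \eqref{eq:a-priori-I}.

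The main obstacle is rigor in the dissipation identity. It presupposes the well-posedness of \eqref{eqn:GD-def} (Proposition \ref{prop:soln-exists}), which itself uses a bound on $I^{-1}$. I would handle this circularity with a continuation argument: on any interval where the solution exists, $I^{-1}$ is finite, the computation above applies, and the bound therefore holds up to the maximal time.
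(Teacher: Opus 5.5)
Your main argument is correct, and it takes a genuinely different route from the paper's. The paper works directly with $Q(t)=1/I(\pi_t)$. Differentiating $I$ along the flow and using $\mathcal{K}_{\mathcal{R}}[\pi_t]=\Ec[\pi_t]-\mathcal{K}_{\mathcal{A}}[\pi_t]$, it finds that $\frac{d}{dt}Q$ equals, up to a constant, $Q\int \Ec\cdot(\Ec-\mathcal{K}_{\mathcal{A}})\,d\pi_t$. This is at most $2Q\,(H+CH^{1/2})$, where $C$ comes from $\|\Xi\|_\infty$ and $\|\partial_1\eta_Y/(1+\eta_Y)\|_\infty$. It then closes with Gr\"onwall and the same Cauchy--Schwarz bound $\int_0^t H^{1/2}\le (Dt)^{1/2}$ that you use, giving $Q(t)\le Q(0)\exp\big(2D+C\sqrt{Dt}\big)$.

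You instead argue in Lagrangian terms. The dissipation enters only through the $L^2(\pi_0)$ displacement bound $\sqrt{Dt}$. The lower bound on $I$ then comes from the triangle inequality, $\eta_Y\le C_\eta d_Y^2$, and concentration of $\pi_0$. This never uses $\|\Xi\|_\infty$ or the attraction/repulsion structure of $\Ec$. It yields the sharper polynomial bound $1/I(\pi_t)\le 16\big(1+C_\eta(R_0+4\sqrt{Dt})^2\big)$. As a by-product you get (in the Euclidean case) $\mathbb{E}_{\pi_t}|y|\le \mathbb{E}_{\pi_0}|y|+\sqrt{Dt}$, which makes precise the slow moment growth the paper only speculates about around Conjecture~\ref{conj:moment-bound}. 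The paper's route, in exchange, needs no metric comparison for $\eta_Y$, and it measures the initial data through $Q(0)$ rather than through a concentration radius.

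Two small corrections. First, Markov actually gives displacement at most $2\sqrt{Dt}$ on mass at least $3/4$, and you need this rather than $1/2$. Only then does intersecting with a ball of $\pi_0$-mass $1/2$ leave mass at least $1/4$, and hence product mass $1/16$. Second, your fallback through $\mathcal{A}$ is unnecessary, and as sketched it would still need an initial log-moment. The paper's own constant also depends on the initial data, through $Q(0)=1/I(\pi_0)$ and $\mathcal{C}_{\min}$, so your dependence on $R_0$ is on the same footing.
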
 
\begin{proof}
We notice first that
\[
\mathcal{C}(\pi_{t_2}) - \mathcal{C}(\pi_{t_1}) = -\int_{t_1}^{t_2}\int |\Ec[\pi_t](y)|^2 \pi_t(dx dy) dt.
\]
Equation \eqref{eq:lower-bound-energy}, then implies that
\[
\mathcal{C}(\pi_{t_0}) - \mathcal{C}_{min} \geq \int_{t_1}^{t_2}\int |\Ec[\pi_t](y)|^2 \pi_t(dx dy) dt =: \int_{t_1}^{t_2} H(t) dt .
\]
We then compute, for $Q(t) = \frac{1}{I[\pi_t]}$,
\begin{align*}
    \frac{d}{dt} Q &= \frac{1}{I^2} \int \frac{2\partial_1 \eta_Y(y,y')}{(1+\eta(y,y'))^2}\mathcal{E}[\pi_t](y)\pi_t(dx,dy) \pi_t(dx',dy') \\
    &= \frac{2}{I} \int \Ec[\pi_t](y) \left(\Ec[\pi_t](y) - \Xi(x,x')\frac{\partial_1\eta_Y(y,y')}{1 + \eta_Y(y,y')} \right) \pi_s(dx,dy) \pi_s(dx',dy') \\
    &\leq 2 Q(t) ( H(s) + CH(s)^{1/2}),
\end{align*}
where in the last line we used \ref{assump-Xi} as well as Jensen's inequality.

We note that by Cauchy-Schwarz we have 
\[
\int_0^t H(s)^{1/2} ds \leq \left(\int_0^t H(s) ds\right)^{1/2} \left(\int_0^t 1 ds\right)^{1/2} \leq (\mathcal{C}(\pi_0) - \mathcal{C}_{min})^{1/2} t^{1/2}.
\]
By then using Gr\"onwall's inequality, we obtain
\[
Q(t) \leq Q(0)\exp\left(\int_0^t H(s) + CH(s)^{1/2} ds\right) \leq CQ(0) e^{Ct^{1/2}}.
\]
We recall that $Q(0)$ will be finite for any $\pi_0$, by the finiteness of $\eta$. 
\end{proof}

\begin{lemma}\label{lem:I-lip}
Assume \ref{assump-symmetric_eta} and \ref{assump-eta_lipshitz}.
Let $\phi,\tilde \phi \in C([0,t_1),C^1(X\times Y;Y))$ and $\pi_t = (\Phi_t)_\sharp \pi_0, \tilde \pi_t = (\tilde \Phi_t)_\sharp \tilde \pi_0$. Then for any $t \in [0,t_1)$ there exists a $C$, so that
   \[
   \left| \frac{1}{I(\pi_t)} - \frac{1}{I(\tilde \pi_t)}\right| \leq \frac{C}{I(\pi_t) I(\tilde \pi_t)}  (\|\phi_t - \tilde \phi_t\|_\infty + Lip(\phi_t) W_1(\pi_0\otimes \pi_0,\tilde \pi_0 \otimes \tilde \pi_0)),
   \]
   where $W_1$ denotes the $1$-Wasserstein distance.
\end{lemma}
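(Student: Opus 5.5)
The plan is to reduce everything to a Lipschitz estimate on the integrand of $I$. We first write
\[
\frac{1}{I(\pi_t)} - \frac{1}{I(\tilde\pi_t)} = \frac{I(\tilde\pi_t) - I(\pi_t)}{I(\pi_t) I(\tilde\pi_t)}.
\]
This shows it suffices to prove
\[
|I(\pi_t) - I(\tilde\pi_t)| \le C\bigl(\|\phi_t-\tilde\phi_t\|_\infty + Lip(\phi_t)\, W_1(\pi_0\otimes\pi_0,\tilde\pi_0\otimes\tilde\pi_0)\bigr).
\]
Set $f(y,y') := (1+\eta_Y(y,y'))^{-1}$. Its derivative is $\partial_1 f = -\partial_1\eta_Y/(1+\eta_Y)^2$, which is bounded in $L^\infty$ by assumption \ref{assump-eta_lipshitz}, since $(1+\eta_Y)^{-1}\le 1$. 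The same holds for $\partial_2 f$ by the symmetry of $\eta_Y$ coming from \ref{assump-symmetric_eta}. Hence $f$ is Lipschitz on $Y\times Y$ with some constant $L_f$.

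Using the pushforward definition, we write
\[
I(\pi_t) = \iint f(\phi_t(x,y),\phi_t(x',y'))\,\pi_0(dxdy)\pi_0(dx'dy'),
\]
and similarly for $I(\tilde\pi_t)$ with $\tilde\phi_t$ and $\tilde\pi_0$. We then insert the intermediate quantity
\[
J := \iint f(\phi_t(x,y),\phi_t(x',y'))\,\tilde\pi_0(dxdy)\tilde\pi_0(dx'dy'),
\]
and split $I(\pi_t)-I(\tilde\pi_t) = (I(\pi_t)-J) + (J - I(\tilde\pi_t))$.

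For the second difference, both integrals are taken against the same measure $\tilde\pi_0\otimes\tilde\pi_0$. The integrands differ by at most $L_f\bigl(d_Y(\phi_t(x,y),\tilde\phi_t(x,y)) + d_Y(\phi_t(x',y'),\tilde\phi_t(x',y'))\bigr)$. This gives a bound of $2L_f\|\phi_t-\tilde\phi_t\|_\infty$, using that $\tilde\pi_0$ is a probability measure.

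For the first difference, the integrand $F(x,y,x',y') := f(\phi_t(x,y),\phi_t(x',y'))$ is fixed and the measures change. Since $\phi_t\in C^1$, the function $F$ is Lipschitz on $(X\times Y)^2$ with constant at most $L_f\,Lip(\phi_t)$, where we equip the product space with the sum metric. Kantorovich--Rubinstein duality then gives
\[
|I(\pi_t)-J| \le L_f\, Lip(\phi_t)\, W_1(\pi_0\otimes\pi_0,\tilde\pi_0\otimes\tilde\pi_0).
\]
Combining the two pieces with the identity above yields the claim with $C = 2L_f$.

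The main obstacle is bookkeeping rather than analysis. On a manifold $Y$, the quantity $\|\phi_t-\tilde\phi_t\|_\infty$ must be read as $\sup d_Y(\phi_t,\tilde\phi_t)$. The Lipschitz bound on $f$ on $Y\times Y$ requires integrating $\partial_1 f$ along minimizing geodesics, which is valid since $Y$ is complete. One must also ensure that $Lip(\phi_t)$ is finite, which may require $\phi_t$ to be globally Lipschitz rather than merely $C^1$; if not, the estimate is understood with the possibly infinite constant.
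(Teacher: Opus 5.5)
Your proposal is correct and follows the paper's proof essentially step for step. The paper also writes the difference of reciprocals as $|I(\tilde\pi_t)-I(\pi_t)|/(I(\pi_t)I(\tilde\pi_t))$, inserts your intermediate quantity $J$, bounds one piece by the Lipschitz constant of $(1+\eta_Y)^{-1}$ times $\|\phi_t-\tilde\phi_t\|_\infty$, and bounds the other by Kantorovich--Rubinstein duality using the boundedness of $\partial_1\eta_Y/(1+\eta_Y)^2$. Your caveats, reading $\|\phi_t-\tilde\phi_t\|_\infty$ as $\sup d_Y$ on a manifold and needing $Lip(\phi_t)<\infty$, are legitimate fine points that the paper leaves implicit.
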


\begin{proof}
    We directly estimate
\begin{align}
&\left|\frac{1}{I(\pi_t)}-\frac{1}{I(\tilde \pi_t)}\right|  \leq \frac{|I(\tilde \pi_t)-I(\pi_t)|}{|I(\pi_t)I(\tilde \pi_t)|} \\
        &= \frac{1}{I(\pi_t) I(\tilde \pi_t)}\left|\int \frac{1}{1 + \eta(\phi_t(x,y),\phi_t(x',y')}d\pi_0(x,y) d\pi_0(x',y') - \int \frac{1}{1 + \eta(\tilde \phi_t(x,y),\tilde \phi_t(x',y')}d\tilde \pi_0(x,y) d\tilde \pi_0(x',y') \right| \\
        &\leq \frac{c_1}{I(\pi_t) I(\tilde \pi_t)}\|\phi_t - \tilde \phi_t\|_\infty + \frac{1}{I(\pi_t) I(\tilde \pi_t)}\left|\int \frac{1}{1 + \eta(\phi_t(x,y),\phi_t(x',y')} [d\pi_0(x,y) d\pi_0(x',y') - d\tilde \pi_0(x,y) d\tilde \pi_0(x',y')]\right| \\
        &\leq \frac{C}{I(\pi_t) I(\tilde \pi_t)}(\|\phi_t - \tilde \phi_t\|_\infty + Lip(\phi_t)W_1(\pi_0\otimes \pi_0,\tilde \pi_0 \otimes \tilde \pi_0)),
    \end{align}
    where at the second line we used the definition of the pushforward, in the third line we use the fact that $\frac{1}{1+\eta}$ is Lipschitz in its arguments (which follows from Assumptions \ref{assump-symmetric_eta} and \ref{assump-eta_lipshitz}), and in the last line we use the dual characterization of the $1$-Wasserstein distance along with the boundedness of $\frac{\partial \eta}{(1+\eta)^2}$ which follows from \ref{assump-eta_lipshitz}. 
\end{proof}

We note that in the case where $\pi_0 = \tilde \pi_0$ the estimate holds without needing $\phi_t$ to be Lipschitz, as the term disappears completely.


We now prove the existence of solutions to the evolution equation \eqref{eqn:GD-def}. The approach via diffeomorphisms is similar in spirit to the philosophy used for gradient descent methods in Wasserstein spaces \citep{ambrosio2005gradient}, but given the mixture of effects in the $X$ and $Y$ space, we cannot immediately apply those results. Instead, because the kernels which we use are relatively tame we prove well-posedness directly using the same proofs as in the classical differential equations setting. We remark that very recently equations of this type---in
which only one marginal is transported---have been studied as \emph{structured continuity equations} on
spaces of fibred probability measures
\citep{bonnetweill2025structuredcontinuityequationsfibred}: however, their general theory does not apply to our context because of the particular scaling of $I^{-1}$, which is comparable to second moments as opposed to first.

\begin{proposition}\label{prop:soln-exists}
Let $X$ and $Y$ be smooth manifolds and assume \ref{assump-eta_lipshitz} and \ref{assump-Xi}.
Then for any $\pi_0 \in \mathcal{P}(X\times Y)$ there exists a unique weak solution $\phi_t \in C\left([0,\infty),C^1(X \times Y,Y)\right)$ to \eqref{eqn:GD-def}. 
\end{proposition}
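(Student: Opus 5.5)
The plan is a Picard--Lindelöf argument in a Banach space of flow maps, followed by an a priori bound that extends the local solution to all of $[0,\infty)$. In the Euclidean case, set $\psi_t(x,y) = \phi_t(x,y) - y$ and work in
\[
\mathcal{X}_T = C([0,T], C_b(X\times Y;Y))
\]
with the sup norm. On manifolds the same argument runs in charts, or with $Y$ embedded in a Euclidean space. Writing \eqref{eqn:GD-def} in integral form gives the fixed-point map
\[
\Gamma[\phi]_t(x,y) = y - \int_0^t \Ec[\pi_s](x,\phi_s(x,y))\,ds, \qquad \pi_s = (\Phi_s)_\sharp \pi_0 .
\]
The idea is that $\pi_s$ is just $\pi_0$ transported by $\phi_s$. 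So every integral against $\pi_s$ can be rewritten as an integral against the fixed measure $\pi_0$, with integrand composed with $\phi_s$, exactly as in the proof of Theorem~\ref{thm:sym-invar-preserved}.

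\textbf{Step 1: the vector field is bounded and Lipschitz.} Assumption~\ref{assump-eta_lipshitz} gives that $\partial_1\eta_Y/(1+\eta_Y)$ is bounded. Since $(1+\eta_Y)^{-2}\le(1+\eta_Y)^{-1}$, the same holds for $\partial_1\eta_Y/(1+\eta_Y)^2$. Both quotients are also Lipschitz in each argument, using the bound on $D^2\eta_Y$ together with the quotient rule. Assumption~\ref{assump-Xi} makes $\Xi$ bounded: $\sigma\ge\sigma_{\min}$ keeps the Gaussian normalization from degenerating. This bound on $\Xi$ should be checked explicitly, and I would take it as the intended content of \ref{assump-Xi}. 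It follows that:
\begin{itemize}
\item $\mathcal{K}_{\mathcal{A}}[\pi_s]$ is bounded.
\item $\mathcal{K}_{\mathcal{A}}[\pi_s]$ is Lipschitz in $y$, uniformly in $x$.
\item Using Lemma~\ref{lem:I-lip} with $\pi_0=\tilde\pi_0$, so that the Wasserstein term vanishes, and using $\|\eta\|$-type bounds on the $\pi_0$-integrals:
\[
\|\mathcal{K}_{\mathcal{A}}[\pi^\phi_s](\cdot,\phi_s) - \mathcal{K}_{\mathcal{A}}[\pi^{\tilde\phi}_s](\cdot,\tilde\phi_s)\|_\infty \le L\|\phi_s-\tilde\phi_s\|_\infty .
\]
\end{itemize}
The repulsion term $\mathcal{K}_\Rc$ obeys the same estimates, multiplied by $I(\pi_s)^{-1}$. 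By Lemma~\ref{lem:I-lip}, the difference of the inverse factors is controlled by $\|\phi_s-\tilde\phi_s\|_\infty$ times $I^{-1}(\pi^\phi_s)I^{-1}(\pi^{\tilde\phi}_s)$.

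\textbf{Step 2: local existence and uniqueness.} Restrict to a ball in $\mathcal{X}_T$ around $\phi\equiv y$ on which $I(\pi_s)^{-1}\le 2I(\pi_0)^{-1}$. Such a ball exists because $I$ is continuous under uniform perturbations of $\phi$; this is the first estimate in Lemma~\ref{lem:I-lip}. For $T$ small, $\Gamma$ maps this ball to itself and is a contraction, which gives a unique local solution. The $C^1$ regularity in $(x,y)$ comes from differentiating the integral equation in $y$. This yields the linear equation
\[
\partial_t D_y\phi = -D_y\Ec[\pi_t](x,\phi)\,D_y\phi ,
\]
whose coefficient is bounded by the $D^2\eta_Y$ bound, so Grönwall applies. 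The $x$-derivative is handled the same way, but needs some regularity of $\Xi$ in $x$. If that is unavailable, I would prove $C^1$ in $y$ only, which is what the rest of the paper uses.

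\textbf{Step 3: global extension.} This is the main obstacle, and it rests on an a priori bound on $I(\pi_t)^{-1}$. The local existence time depends only on:
\begin{itemize}
\item the sup bounds on the kernels, which are uniform;
\item an upper bound for $I(\pi_t)^{-1}$.
\end{itemize}
Lemma~\ref{lem:I-bound} gives $I(\pi_t)^{-1}\le Ce^{Ct^{1/2}}$ along any solution. Its proof uses only energy dissipation and the lower bound \eqref{eq:lower-bound-energy}, both valid on the existence interval. So $I^{-1}$ cannot blow up in finite time, and a standard continuation argument extends the solution to $[0,\infty)$. Uniqueness on each $[0,T]$ follows from the same Lipschitz estimate and Grönwall.

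The delicate point is justifying the energy identity $\frac{d}{dt}\mathcal{C}(\pi_t) = -\int|\Ec|^2\,d\pi_t$ for a general $\pi_0$. I would prove it directly from the integral form on the local interval, by differentiating $\mathcal{C}((\Phi_t)_\sharp\pi_0)$ under the integral sign using the bounded kernels. I would also verify that $\mathcal{C}(\pi_0)<\infty$. If $\mathcal{C}(\pi_0)=\infty$, I would fall back on the cruder bound
\[
\frac{d}{dt}I(\pi_t)^{-1}\le C\,I(\pi_t)^{-1}\bigl(1+I(\pi_t)^{-1}\bigr),
\]
which only gives local control. Global existence in that case would then have to be restricted to initial data of finite energy.
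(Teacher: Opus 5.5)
Your proposal follows the paper's proof: a Picard contraction for the flow map, with $\pi_s=(\Phi_s)_\sharp\pi_0$ rewritten as integrals against the fixed $\pi_0$; Lemma~\ref{lem:I-lip} with equal initial data to control the $I^{-1}$ factor on a small ball where $I$ stays bounded below; and the a priori bound of Lemma~\ref{lem:I-bound} to continue the local solution to all of $[0,\infty)$. Your refinements are contracting in the sup norm and then recovering $C^1$ regularity in $y$ from the variational equation (more economical than the paper's contraction directly in $C^1$), and flagging that the global step needs $\mathcal{C}(\pi_0)<\infty$ and a genuine bound on $\Xi$; these are points the paper's argument also relies on implicitly, not departures from it.
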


\begin{proof}

To begin, we focus our attention on the setting where $X,Y$ are Euclidean, which allows a simple integral formulation of weak solutions of ODEs. We follow the standard proof via the contraction mapping principle on the space $C\left([0,t_1),C^1(X \times Y,Y)\right) =: Z$. To this end, we define the Picard operator via the equation
\[
(\mathcal{T}\phi)(t,x,y)=y+\int_{0}^{t}\Ec[(\Phi_s)_\sharp \pi_0](x,\phi_s(x,y))ds,
\]
where $\Phi(t,x,y) = (x,\phi(t,x,y))$, as above. We notice that if $\mathcal T \phi = \phi$ then we have obtained a weak solution to \eqref{eqn:GD-def}.

We first notice that for any $\delta>0$ and for any $t_1$ finite, we have
\[
\inf_{\phi \in B_Z(Id_y,\delta), t \in [0,t_1]} I((\phi_t)_\sharp \pi_0) =: \underline{I} > 0. 
\]
This follows from the fact that $(1+\eta)^{-1}$ is locally bounded away from zero. By observing that all terms in the integrand (besides $I^{-1}$) of $\Ec$ are continuous and bounded then implies that $\Ec[(\Phi_s)_\sharp \pi_0] \in C^1(X\times Y ; Y)$, and hence for $t_1$ sufficiently small $\mathcal T$ maps $B_Z(Id_y,\delta)$ to itself.

What remains then is to estimate the difference
\[
(\mathcal{T}\phi)(t,x,y)-(\mathcal{T}\psi)(t,x,y)=\int_{0}^{t}\Ec[(\Phi_s)_\sharp \pi_0](x,\phi_s(x,y))-\Ec[(\Psi_s)_\sharp \pi_0](x,\psi_s(x,y)) ds.
\]
We recall that
\begin{align*}
&\Ec[(\Phi_s)_\sharp \pi_0](x,\phi_s(x,y))\\
&=\int_{X \times Y} 2\Xi(x,x')\frac{\partial_{1}\eta_Y(\phi_s(x,y),\phi_s(x',y'))}{(1+\eta_Y(\phi_s(x,y),\phi_s(x',y'))) } + \frac{-2}{I((\Phi_s)_\sharp \pi_0)} \frac{\partial_{1}\eta_Y(\phi_s(x,y),\phi_s(x',y'))}{(1+\eta_Y(\phi_s(x,y),\phi_s(x',y')))^2 }  \pi_0(dx'dy').
\end{align*}
By the boundedness of $\Xi$, Lemma \ref{lem:I-lip}, the Lipschitzness of $\frac{\partial_1 \eta}{1+\eta}$ and $\frac{\partial_1 \eta}{(1+\eta)^2}$, we then obtain, with $C$ independent of $t$
\[
\|\mathcal{T}\phi - \mathcal{T}\psi\|_Z \leq \frac{C t_1}{\underline I^2}\|\phi-\psi\|_Z.
\]
By then making $t_1$ small enough, we obtain a contraction mapping, and hence a unique fixed point, which corresponds to a unique weak solution to \eqref{eqn:GD-def}. Using the a priori bound on $I^{-1}$ from Lemma \ref{lem:I-bound} we can then uniquely extend this solution for all finite times.



The case where $X$ and $Y$ are manifolds is completely analogous, one rewrites everything using charts and carry out similar computations as the Euclidean case. For details in a similar setting, we refer to Theorem A.3 from \citep{fetecau2021wellposednessasymptoticbehaviouraggregation}.
\end{proof}

Having demonstrated the existence of a (symmetry preserving) gradient descent path, we now turn our attention to the question of limits of those descent paths. We begin with a standard lemma, which is an immediate consequence of Prokhorov's theorem.

\begin{lemma}\label{lem:weak-conv}
    Let $\pi_t$ evolve according to \eqref{eqn:GD-def}. Suppose that either i) $Y$ is compact or ii) $Y=\R^m$ and $\limsup_{t \to \infty} \mathbb{E}_{\pi_t}(|y|)< \infty$. Then (up to a subsequence of times) $\pi_t$ converges weakly to a limit $\pi_\infty$.
\end{lemma}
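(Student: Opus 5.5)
The argument reduces Lemma \ref{lem:weak-conv} to tightness of the family $\{\pi_t\}_{t\ge 0}$ in $\mathcal{P}(X\times Y)$, followed by Prokhorov's theorem. For any sequence $t_n\to\infty$ we then extract a weakly convergent subsequence $\pi_{t_{n_k}}\rightharpoonup \pi_\infty$.

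First, the $X$-marginal. By construction $\Phi_t(x,y)=(x,\phi_t(x,y))$, so $(\mathrm{pr}_X)_\sharp \pi_t = (\mathrm{pr}_X)_\sharp\pi_0=\mu_X$ for every $t$. A single Borel probability measure on the Polish space $X$ (a complete Riemannian manifold, hence separable and complete) is tight by Ulam's theorem. Hence for each $\varepsilon>0$ there is a compact $K_X\subset X$ with $\pi_t((X\setminus K_X)\times Y)=\mu_X(X\setminus K_X)<\varepsilon/2$, uniformly in $t$.

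Second, the $Y$-marginal, in two cases.
\begin{itemize}
\item[(i)] If $Y$ is compact, take $K_Y=Y$.
\item[(ii)] If $Y=\R^m$, the hypothesis $\limsup_{t\to\infty}\mathbb{E}_{\pi_t}|y|<\infty$ yields constants $M$ and $t_0$ with $\mathbb{E}_{\pi_t}|y|\le M$ for $t\ge t_0$. Markov's inequality then gives $\pi_t(X\times\{|y|>R\})\le M/R<\varepsilon/2$ once $R>2M/\varepsilon$, and we take $K_Y=\overline{B_R}$.
\end{itemize}

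In either case $K=K_X\times K_Y$ is compact and $\pi_t(K^c)<\varepsilon$ for all $t\ge t_0$. Since only the tail $t\to\infty$ matters, restricting to $t\ge t_0$ suffices. Prokhorov's theorem on the Polish space $X\times Y$ then gives relative compactness of $\{\pi_t\}_{t\ge t_0}$ in the weak topology, which produces the convergent subsequence.

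As a final check, the limit $\pi_\infty$ still lies in $\Pi(\mu_X)$. This holds because projection is continuous and pushforward under a continuous map preserves weak convergence, so $(\mathrm{pr}_X)_\sharp\pi_\infty=\mu_X$.

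There is no real obstacle here. The only points needing care are that tightness is required only for large times, and that the marginal constraint holds exactly along the flow; the latter is immediate from the form of $\Phi_t$ guaranteed by Proposition \ref{prop:soln-exists}. The genuinely hard part in case (ii), establishing the moment bound, is assumed in the hypothesis rather than proved, consistent with the conjectural status discussed in the introduction.
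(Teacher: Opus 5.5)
Your proposal is correct and takes the same route as the paper, which gives no proof beyond calling the lemma an immediate consequence of Prokhorov's theorem. You supply the tightness check the paper omits: the $X$-marginal stays fixed at $\mu_X$ along the flow, and the $Y$-direction is controlled by compactness or by Markov's inequality from the eventual first-moment bound, with the observation that $\pi_\infty \in \Pi(\mu_X)$ as a useful addition.
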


In the Euclidean case, the assumption that the moment in $Y$ remain bounded is currently justified in some contexts, namely for finitely many point masses and $Y = \R^2$ \citep{jeong2024convergenceanalysistsnegradient}. Motivated by those results, we give the following conjecture: 

\begin{conjecture}\label{conj:moment-bound}
    For $X = \R^d$, and $Y=\R^m$, with an initial coupling $\pi_0$ that has finite first moment, then the first moment remains uniformly bounded for all times.
\end{conjecture}

\begin{remark}
It is interesting to consider what would happen if the moments failed to be bounded in time. Letting $M_t = \mathbb{E}_{\pi_t}(|y|)$, we notice that by defining $\tilde \pi_t (A\times B) := \pi_t(A \times M_t B)$, we obtain, after some algebra,
\[
\mathcal{C}( \pi_t) = \log \iint \frac{1}{M_t^{-2} + |\tilde y - \tilde y '|^2}d\tilde \pi_t d\tilde \pi_t' + \iint \Xi(x,x') \log(M_t^{-2}+|\tilde y -\tilde y'|^2) d\tilde \pi_t d\tilde \pi_t'.
\]
It is then natural to then consider the zero homogeneous energy (ZHE)
\[
ZHE(\tilde \pi) := \log \iint \frac{1}{ |\tilde y - \tilde y '|^2}d\tilde \pi d\tilde \pi' + \iint \Xi(x,x') \log(|\tilde y -\tilde y'|^2) d\tilde \pi d\tilde \pi'.
\]
This energy is now scale invariant, in the sense that global scalings in $Y$ do not affect the energy. Such an energy was previously alluded to in \citep{calder2026continuumlimittsnedata}. While it is tempting to guess that the weak limit $\tilde \pi_\infty$ of the $\tilde \pi_t$ will be a critical point of $ZHE$, this does not immediately follow as
\[
\Ec(\pi_t) \sim \frac{\Ec_\infty(\tilde \pi_t)}{M_t},
\]
where here $\Ec_\infty$ is the zero homogeneous analog of $\Ec$ (namely one removes all of the ``$+1$'' terms in the integrands), and we remember that we are in this remark considering the regime where $M_t \to \infty$.

However, by the lower bound on the energy $\mathcal{C}$ we know that the $L^2$ norm of $\mathcal{E}(\pi_t)$ has a finite integral on $[0,\infty)$. We also note that the proof of the a priori bounds upon $I$ suggests that $M_t$ may grow algebraically slowly, which could still give implications about the $L^2$ magnitude of $\Ec_\infty(\tilde \pi_t)$. However, lacking more precise bounds upon $M_t$, one cannot conclude that $\tilde \pi_\infty$ is a critical point of $ZHE$. 
\end{remark}

We then give one of the main results of this work, regarding the existence of symmetry invariant critical points.

\begin{theorem}\label{thm:crit-exist}
	Suppose that  \ref{assump-symmetric_eta},\ref{assump-eta_lipshitz},\ref{assump-Xi},(\ref{assump:sigma_invariance}) hold and that either 1) $Y$ is compact or 2) $Y=\R^m$ and Conjecture \ref{conj:moment-bound} holds. Then for every symmetry pair $(\Gc_X,\Gc_Y)$ for which there admits at least one symmetry invariant $\pi_0$ there exists a symmetry invariant $\pi^*$ which is a  critical point of the energy \eqref{eqn:total-energy}.
	\end{theorem}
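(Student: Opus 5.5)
The proof combines the three ingredients already established: well-posedness of the flow, preservation of symmetry along it, and compactness of its trajectory. Fix a symmetry invariant initial coupling $\pi_0$. If it is not already of finite energy, replace it by a symmetry invariant coupling of finite energy, for instance by pushing $\pi_0$ forward under a $\mathbb{G}$-equivariant continuous map; in the Euclidean case we also keep its first moment finite. Proposition~\ref{prop:soln-exists} then gives a unique global solution $\phi_t$ of \eqref{eqn:GD-def}, and Lemma~\ref{lem:I-bound} shows that $I(\pi_t)^{-1}$ is locally bounded on $[0,\infty)$. The hypotheses of Theorem~\ref{thm:sym-invar-preserved} therefore hold on every finite interval, so $\pi_t$ is $(\Gc_X,\Gc_Y,\mathbb{G})$ invariant for all $t\ge 0$.

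Next comes the dissipation step. By the energy identity and the lower bound \eqref{eq:lower-bound-energy},
\[
\int_0^\infty H(t)\,dt \le \mathcal{C}(\pi_0)-\mathcal{C}_{min}<\infty, \qquad H(t)=\int |\Ec[\pi_t]|^2\,d\pi_t .
\]
Hence there is a sequence $t_n\to\infty$ with $H(t_n)\to 0$. Along this sequence, Lemma~\ref{lem:weak-conv} applies: directly when $Y$ is compact, and via Conjecture~\ref{conj:moment-bound} when $Y=\R^m$. It yields a further subsequence with $\pi_{t_n}\rightharpoonup \pi^*$. Every $\pi_{t_n}$ has $X$-marginal $\mu_X$, so $\pi^*\in\Pi(\mu_X)$.

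We then check that $\pi^*$ is invariant. The map $G=(g,\mathbb{G}[g])$ is a continuous bijection, and invariance of $\pi_t$ is equivalent to $\int f\circ G\,d\pi_t=\int f\,d\pi_t$ for all bounded continuous $f$. This identity passes to weak limits, so $\pi^*$ is invariant.

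The main obstacle is showing that $\Ec[\pi^*]=0$ holds $\pi^*$-a.e., which requires passing to the limit in the nonlinear field. We treat the two parts of $\Ec$ separately.

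\textbf{Repulsion.} Since $(1+\eta_Y)^{-1}$ is bounded and continuous, $I(\pi_{t_n})\to I(\pi^*)>0$. Moreover $\partial_1\eta_Y/(1+\eta_Y)^2$ is bounded and Lipschitz, so $\mathcal{K}_\Rc[\pi_{t_n}]\to\mathcal{K}_\Rc[\pi^*]$ locally uniformly in $(x,y)$.

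\textbf{Attraction.} The kernel $\Xi(x,x')$ depends only on $x,x'$, and it is continuous in $x$ by \ref{assump-Xi} and continuity of $\eta_X$. The factor $\partial_1\eta_Y/(1+\eta_Y)$ is bounded and Lipschitz by \ref{assump-eta_lipshitz}. The integrand is therefore jointly continuous and bounded, so $\mathcal{K}_\Ac[\pi_{t_n}]\to\mathcal{K}_\Ac[\pi^*]$ pointwise. To upgrade this to local uniform convergence we use equicontinuity of the family, which follows from the uniform Lipschitz bounds in $y$ and from continuity of $\Xi$ in $x$. If $\Xi$ is not jointly continuous in $x'$, we first approximate it in $L^1(\mu_X\otimes\mu_X)$ by continuous functions; this is legitimate because the $X$-marginal is fixed.

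With local uniform convergence of $\Ec[\pi_{t_n}]\to\Ec[\pi^*]$, the uniform bound on the fields and weak convergence give, for every compact $K$,
\[
\int_K |\Ec[\pi^*]|^2\,d\pi^* = \lim_{n\to\infty}\int_K |\Ec[\pi_{t_n}]|^2\,d\pi_{t_n} \le \lim_{n\to\infty} H(t_n)=0 .
\]
On compact sets this is an equality of limits; if needed, lower semicontinuity suffices, since a nonnegative continuous integrand gives $\int |\Ec[\pi^*]|^2\,d\pi^* \le \liminf_n \int |\Ec[\pi^*]|^2\,d\pi_{t_n}$, and the difference $\int |\Ec[\pi^*]|^2 - |\Ec[\pi_{t_n}]|^2\,d\pi_{t_n}$ is controlled by local uniform convergence together with tightness. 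Letting $K$ exhaust $X\times Y$ gives $\Ec[\pi^*]=0$ $\pi^*$-a.e. Thus $\pi^*$ is a symmetry invariant critical point.
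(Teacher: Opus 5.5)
Your proposal is correct and takes essentially the paper's route. You combine global existence of the flow, preservation of invariance, energy dissipation producing $t_n\to\infty$ with $\int|\Ec[\pi_{t_n}]|^2\,d\pi_{t_n}\to 0$, Prokhorov compactness, and continuity of $\Ec$ to pass to the limit, and you supply details the paper leaves implicit: finite initial energy, invariance of the weak limit, and local uniform convergence of both kernels.

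The only slip is the claimed equality $\int_K|\Ec[\pi^*]|^2\,d\pi^*=\lim_n\int_K|\Ec[\pi_{t_n}]|^2\,d\pi_{t_n}$ for compact $K$. Weak convergence does not give this, since $\chi_K$ is discontinuous. Your fallback via lower semicontinuity on all of $X\times Y$ (or a continuous compactly supported cutoff) repairs it.
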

	
	\begin{proof}
		By Proposition \ref{prop:soln-exists}, and Theorem \ref{thm:sym-invar-preserved}, we have a symmetry invariant family $\pi_t$ satisfying the energy relation
		\[
		\mathcal{C}(\pi_{t_2}) - \mathcal{C}(\pi_{t_1}) = -\int_{t_1}^{t_2}\int |\Ec[\pi_t](y)|^2 \pi_t(dx dy) dt.
		\]
	As $\mathcal{C}$ is lower bounded (see Equation \ref{eq:lower-bound-energy}), we then have that $\Ec[\pi_{t_n}] \to 0$ for some $t_n \to \infty$. Using Lemma \ref{lem:weak-conv} (Weak convergence) there exists (up to a subsequence) a weak limit $\pi_\infty$ of this sequence, which will also be symmetry invariant. Finally, using the continuity of the integrand in $\Ec$, we then obtain that $\Ec[\pi_\infty] = 0$, as desired. 		
		\end{proof}

We do expect that one should be able to show that a minimizer exists within the family of symmetry invariant plans, depending on the method of proof for Conjecture 9.

This main theorem allows the construction of a multitude of critical points for the t-SNE energy. Without additional assumptions it is not possible to say whether such critical points are stable or local minimizers, but the numerical examples in Section \ref{sec:num-examples} indicate that, at least for some examples, these symmetries are relatively stable.

\section{Non-minimality of trivial maps in Euclidean settings}\label{sec:distinctness}

In previous sections we have demonstrated that gradient descent dynamics for t-SNE are well-defined and preserve certain isometry pairs. Furthermore, we have shown that by following these descent paths, we must approach symmetry invariant critical points of the energy.

One of the central goals of this work is to rigorously prove that the energy landscape of t-SNE admits \emph{many} critical points. In contexts where the symmetry pairs are \emph{mutually exclusive} (see Section \ref{sec:symmetries} and Example \ref{ex:torus-mutually-exclusive}), one can readily infer that either the critical points associated with the symmetry pairs are distinct or they have approached a symmetry invariant correspondence (an outcome we view as unlikely for finite bandwidths, see Conjecture \ref{conj:maps}).

However, in many contexts, especially the natural Euclidean setting with rotational symmetries, the symmetry $g_Y$ admits at least one fixed point, and hence one can construct many symmetry pairs which admit the same fixed point. In Section \ref{sec:symmetries} we developed terminology for this by saying that two symmetry pairs could be \emph{mutually exclusive up to constants}. In such contexts, arguing that the limits of the t-SNE gradient descent are distinct then simply requires that the t-SNE energy not be locally minimized by constants within the family of symmetry invariant maps. 

Pursuing this program requires computations which are specific to the energy and manifolds in question. As a proof of concept, we show it in the Euclidean setting, and then make comments about the parts that would need to be modified to adapt to other manifolds. To begin, we first give two elementary lemmas, regarding the $k$-periodic structure of $Dg_Y$ at stationary points.

\begin{lemma}\label{lem:lin-isometry-k-period}
    Let $Y$ be a smooth, connected, complete  Riemannian manifold. Suppose that $\Gc_Y$ is a finite, cyclic group of isometries on $Y$ with $k$ elements, where $k$ is prime, generated by $g_Y$, and suppose that $g_Y(y) = y$ for some $y \in Y$. Then $Dg_Y(y)^j \neq I$ for $1<j<k$ but $Dg_Y(y)^k  = I$.
\end{lemma}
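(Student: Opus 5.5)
The statement has two parts: $Dg_Y(y)^k = I$ and $Dg_Y(y)^j \neq I$ for $1<j<k$. The plan is to reduce both to facts about isometries of a connected complete Riemannian manifold. The key input is the classical rigidity fact: if an isometry $h$ fixes a point $y$ and $Dh(y)=I$, then $h$ is the identity on $Y$. To see this, note that for any $v\in Tan_y Y$ the curve $h(\exp_y(tv))$ is a geodesic with initial point $h(y)=y$ and initial velocity $Dh(y)v=v$. Hence $h\circ\exp_y=\exp_y$. By completeness (Hopf--Rinow), $\exp_y$ is surjective onto the connected manifold $Y$, so $h=\mathrm{Id}$.

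First, the identity $Dg_Y(y)^k=I$. Since $g_Y(y)=y$, the chain rule applied to iterated compositions gives
\[
D(g_Y^j)(y)=Dg_Y(g_Y^{j-1}(y))\cdots Dg_Y(y)=Dg_Y(y)^j .
\]
Each factor is evaluated at the fixed point $y$, so all factors are the same linear map on $Tan_y Y$. Because $\Gc_Y$ is cyclic of order $k$, we have $g_Y^k=\mathrm{Id}$. Therefore $Dg_Y(y)^k=D(\mathrm{Id})(y)=I$.

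Second, the claim $Dg_Y(y)^j\neq I$ for $1<j<k$ (and indeed for $1\le j<k$). Suppose $Dg_Y(y)^j=I$ for such a $j$. By the display above, $D(g_Y^j)(y)=I$. Now $g_Y^j$ is an isometry fixing $y$, so the rigidity fact gives $g_Y^j=\mathrm{Id}$. This contradicts the fact that $g_Y$ has order exactly $k$, since $g_Y^j\neq \mathrm{Id}$ for $0<j<k$ by the definition of a cyclic group with $k$ elements. Primality of $k$ is not actually needed here. It only yields the extra information that the order of $Dg_Y(y)$ divides $k$ and therefore equals $k$, which is consistent with what we proved.

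The only nontrivial step is the rigidity fact. Its one delicate point is that an isometry maps geodesics to geodesics. This holds because a distance-preserving bijection is smooth by Myers--Steenrod and then preserves the Levi-Civita connection. In the present context it can also be cited as a standard result of Riemannian geometry, in the same spirit as Lemma~\ref{lem:isometry-lemma}. Since $\exp_y$ need not be injective, we do not use it as a chart. Surjectivity alone suffices: every point of $Y$ has the form $\exp_y(v)$, and $h$ fixes every such point.
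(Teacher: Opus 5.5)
Your proof is correct, and its core mechanism is the same as the paper's: an isometry fixing $y$ carries geodesics from $y$ to geodesics from $y$, with initial velocity transformed by $Dg_Y(y)$, and a geodesic is determined by its initial data. You package this differently from the paper.

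\textbf{The paper's route.} It first uses primality (orbit lengths divide $k$, and not every point is fixed) to produce a single point $\tilde y$ whose orbit has length exactly $k$. It then takes a minimizing geodesic $\gamma$ from $y$ to $\tilde y$. Since the endpoints $g_Y^j(\tilde y)$ are pairwise distinct, the initial velocities $Dg_Y(y)^j\gamma'(0)$ must be pairwise distinct as well.

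\textbf{Your route.} You instead prove the global rigidity statement: $h\circ\exp_y=\exp_y$, combined with surjectivity of $\exp_y$ from Hopf--Rinow. You then apply it to $h=g_Y^j$.

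\textbf{What each buys.} Your version removes the need for a point of full orbit length, and hence for primality. It also covers $j=1$, which the stated range $1<j<k$ omits but Lemma~\ref{lem:periodic-rotation-cancel} actually assumes. The paper's version gives something slightly more concrete: an explicit vector $\gamma'(0)$ whose images under $Dg_Y(y)^j$, $0\le j<k$, are pairwise distinct. That is closer in spirit to what Lemma~\ref{lem:periodic-rotation-cancel} later needs. For the lemma as stated, though, your argument is cleaner. Your chain-rule derivation of $Dg_Y(y)^k=I$ is also the rigorous form of the paper's Taylor-expansion remark.
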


\begin{proof}
    The fact that $Dg_Y(y)^k = I$ follows by using the fact that $g_Y^k(\tilde y) = \tilde y$ for any $\tilde y$, using a Taylor expansion of $g_Y$ and then taking a limit as $\tilde y \to y$.

    To show that $Dg_Y(y)^j \neq I$, we first describe a natural property of length minimizing paths. We call the period of a point $\tilde y$ the number of elements in its orbit under $\Gc_Y$. Now, given any point $\tilde y$ with orbit of period $\tilde k$, we let $\gamma: [0,1] \to Y$ be a length minimizing path between $y$ and $\tilde y$. As $g_Y$ is an isometry, we also have $g_Y^j(\gamma)$ is a length minimizing path between $y$ and $g_Y^j(\tilde y)$. As the endpoints of these geodesic paths are distinct for $j = 1 \dots \tilde k-1$, then their initial velocities are also distinct, namely $Dg_Y(y)^j\gamma'(0)$ is distinct for $j = 1 \dots \tilde k-1$.

    Now, we note that by assumption, $\Gc_Y$ has $k$ distinct elements, and hence for any $j<k$ there must exist a $\tilde y$ so that $g_Y^j(\tilde y) \neq \tilde y$: this readily implies that there exist points which have period greater than one. We also notice that the period $\tilde k$  of any point $\tilde y$ must divide $k$. In the case when $k$ is prime, this actually directly implies that there must exist a point with orbit of length $k$: the previous paragraph would then imply the desired result.

    
    
\end{proof}
We note that $Dg_Y(y)$ is always an orthogonal transformation as $g_Y$ is an isometry. This fact then enables the following lemma, which is linear algebraic in nature. 
\begin{lemma}\label{lem:periodic-rotation-cancel}
    Let $U$ be an orthogonal transformation operator on $Tan_Y(y)$. Suppose that $U^j \neq Id$ for $1 \leq j < k$ and $U^k = Id$, with $k$ prime. Then there exists a non-zero vector $v \in Tan_Y(y)$ such that $\sum_{i=1}^k U^i v = 0$.
\end{lemma}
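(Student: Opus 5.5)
The idea is to reduce the statement to the kernel of the operator $S := \sum_{i=1}^k U^i$. Since $U^k = Id$, we have $U S = \sum_{i=2}^{k+1} U^i = S$. Hence the range of $S$ lies in the fixed space $F := \ker(U - Id)$. It therefore suffices to show that $S$ is not injective. Because $Tan_Y(y)$ is finite dimensional, non-injectivity is equivalent to the range of $S$ being a proper subspace. So we will exhibit a nonzero vector outside $F$, or otherwise show that $F \neq Tan_Y(y)$.

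First step: $F$ is a proper subspace. This is immediate from the hypothesis with $j=1$. We have $U \neq Id$, since $1 < k$ because $k$ is prime and hence $k \geq 2$. So there is a $w$ with $Uw \neq w$. Then $\mathrm{range}(S) \subseteq F \subsetneq Tan_Y(y)$. By rank--nullity, $\ker S \neq \{0\}$, which gives the nonzero $v$ with $\sum_{i=1}^k U^i v = 0$.

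A constructive alternative is also available, using orthogonality of $U$. Decompose $Tan_Y(y) = F \oplus F^\perp$; both pieces are $U$-invariant because $U$ is orthogonal. For any $v \in F^\perp$, the vector $Sv$ lies in $F$, since $USv = Sv$. It also lies in $F^\perp$, by invariance. Hence $Sv = 0$. So every nonzero $v \in F^\perp$ works, and $F^\perp \neq \{0\}$ because $U \neq Id$. The same argument can equally be phrased as follows: $v := w - Uw \neq 0$ satisfies $Sv = Sw - USw = 0$.

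There is essentially no obstacle here. The only point needing care is the identity $US = S$, which depends on $U^k = Id$; primality of $k$ is used only to guarantee $k \geq 2$. I would present the short version with $v = w - Uw$.
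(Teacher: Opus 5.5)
Your proposal is correct, and it takes a different and more elementary route than the paper. The paper puts the orthogonal matrix $U$ in real canonical form, i.e.\ a block-diagonal decomposition into $2\times 2$ rotation blocks. It then uses primality of $k$ to find a block of exact period $k$ and takes $v$ in that plane, so that $\sum_{i=1}^k U^i v=0$ reduces to the vanishing of a sum of nontrivial $k$-th roots of unity. You instead observe that $S=\sum_{i=1}^k U^i$ satisfies $US=SU=S$, so $\mathrm{range}(S)$ lies in the fixed space of $U$; then $v=w-Uw$ works for any $w$ with $Uw\neq w$. Your argument is basis-free and uses strictly less. It needs only $U^k=Id$ and $U\neq Id$ (the case $j=1$ of the hypothesis), with primality used only to guarantee $k\geq 2$. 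The $v=w-Uw$ version does not even need orthogonality or finite dimensionality. It also covers uniformly a case the paper's block description omits. For $k=2$ the canonical form may contain a $1\times 1$ block $-1$ and no $2\times 2$ rotation block at all, e.g.\ the reflection $U=\mathrm{diag}(1,-1)$; your $v=w-Uw$ then simply lands in the $-1$ eigenspace. For odd prime $k$ this issue does not arise, since $-1$ is not a $k$-th root of unity. In exchange, the paper's construction gives extra geometric information: its $v$ has an orbit $\{U^i v\}$ of $k$ distinct vectors rotating in a plane, matching the picture in Lemma~\ref{lem:lin-isometry-k-period}. However, Proposition~\ref{prop:trivial-non-optimal} only uses $v\neq 0$ and the vanishing of the sum, so your conclusion suffices for the application.
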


\begin{proof}
We can write a matrix representation of $U$ as a real orthogonal matrix. Such a matrix admits a canonical representation (after an orthogonal change of basis) into a block diagonal form with the blocks being $2\times 2$ rotation matrices. At least one of those blocks will be idempotent of order $k$, due to the primeness of $k$. A unit vector $v$ in the two-dimensional subspace associated with such a block will rotate in that subspace with period $k$, and will satisfy the conclusion of the lemma.

\end{proof}

We now establish our energetic result in the Euclidean setting, namely that constant maps are not local minimizers (even within the class of symmetry invariant maps).

\begin{proposition}\label{prop:trivial-non-optimal}
    Let $X=\mathbb{R}^{d}$ and $Y=\mathbb{R}^{m}$, and $\Gc_X,\Gc_Y$ be finite isometry groups with $k$ elements, where $k$ is prime, generated by $g_X$ and $g_Y$ respectively, and let the only fixed point under $\Gc_Y$ be $y_0$.


    Assume that $\mu_X(E_k)=1$, where $E_k$ is the set of all points with orbit size equal to k.
    Then the trivial map $T_0(x) \equiv y_0$ is not a local minimum of the t-SNE energy \eqref{eqn:total-energy} within the class of $(g_X,g_Y)$ invariant mappings.
\end{proposition}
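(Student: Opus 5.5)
We will exhibit an explicit one-parameter family of $(g_X,g_Y)$ invariant maps $T_\epsilon$ with $T_0 \equiv y_0$, and show that $\mathcal{C}(T_\epsilon) < \mathcal{C}(T_0)$ for all small $\epsilon>0$. Since $\mathcal{C}$ is invariant under translation in $Y$, we may take $y_0 = 0$. Each $g_Y$ fixes $0$, so it is a linear orthogonal map $U := Dg_Y(0)$. By Lemma \ref{lem:lin-isometry-k-period}, $U^j \neq I$ for $1 \le j<k$ and $U^k = I$. Lemma \ref{lem:periodic-rotation-cancel} then gives a unit vector $v$ lying in a $2$-plane on which $U$ acts as a rotation of order $k$. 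In particular $\sum_{i} U^i v = 0$, and $U^i v \neq U^j v$ for $i \not\equiv j$ mod $k$.

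\textbf{Construction.} Since $\mu_X(E_k)=1$, choose a measurable fundamental domain $E \subset E_k$, so that $E_k = \bigsqcup_{i=0}^{k-1} g_X^i(E)$. Define $T_\epsilon(g_X^i(x)) = \epsilon U^i v$ for $x\in E$. This map is well defined because orbits in $E_k$ have exactly $k$ elements. It is invariant because $T_\epsilon(g_X(g_X^i x)) = \epsilon U^{i+1}v = g_Y(T_\epsilon(g_X^i x))$. Since $\mu_X$ is $g_X$ invariant, each piece $g_X^i(E)$ carries mass $1/k$. The image of $T_\epsilon$ is therefore $k$ points on a circle of radius $\epsilon$, with pairwise squared distances $\epsilon^2 c_{ij}$, where $c_{ij} = |U^iv-U^jv|^2 > 0$ for $i\ne j$.

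\textbf{Expansion in $\epsilon$.} With $\eta_Y = |\cdot|^2$ we expand to order $\epsilon^2$. For the attraction term, $\log(1+\epsilon^2 c) = \epsilon^2 c + O(\epsilon^4)$, so
\[
\mathcal{A}(T_\epsilon) = \epsilon^2 \sum_{i,j} c_{ij}\, a_{ij} + O(\epsilon^4), \qquad a_{ij} := \int_{g_X^i E}\int_{g_X^j E} \gaussian(x,x')\,\mu_X(dx')\mu_X(dx).
\]
For the repulsion term, $\mathcal{R}(T_\epsilon) = \log\bigl(1 - \epsilon^2 \tfrac{1}{k^2}\sum_{i,j} c_{ij} + O(\epsilon^4)\bigr) = -\frac{\epsilon^2}{k^2}\sum_{i,j}c_{ij} + O(\epsilon^4)$. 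Since $\mathcal{C}(T_0) = 0$, it suffices to show
\[
\sum_{i,j} c_{ij}\,a_{ij} \;<\; \frac{1}{k^2}\sum_{i,j} c_{ij}.
\]
This strict inequality is the main obstacle. The Gaussian weights concentrate $a_{ij}$ on the diagonal $i=j$, where $c_{ii}=0$, whereas the repulsion weights are uniform. The inequality is not automatic for an arbitrary fundamental domain and arbitrary $\sigma$, so we keep the freedom to choose both $v$ and the labelling $g_X^i(E)$ to favor it.

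\textbf{Averaging argument.} First use invariance: $\Xi$ and $\mu_X$ are $g_X$ invariant, so $a_{ij}$ depends only on $j-i$ mod $k$; write $a_{ij} = \alpha_{j-i}$. Likewise $c_{ij} = \beta_{j-i}$, where $\beta_\ell = |U^\ell v - v|^2 = 2 - 2\cos(2\pi p\ell/k)$ for some $p$ coprime to $k$. We also have $\sum_\ell \alpha_\ell = 1/k$, because each row of $\gaussian$ integrates to one against $\mu_X$. The inequality then becomes
\[
\sum_\ell \alpha_\ell \beta_\ell < \frac{1}{k^2}\sum_\ell \beta_\ell .
\]
Replacing $v$ by a vector rotated with a different admissible frequency changes $p$. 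Averaging $\beta_\ell$ over $p = 1,\dots,k-1$ gives $\tfrac{2k}{k-1}$ for every $\ell \ne 0$ and $0$ for $\ell = 0$. The averaged left side is then $\tfrac{2k}{k-1}(1/k - \alpha_0)$, while the averaged right side is $\tfrac{2k}{k-1}\cdot\tfrac{k-1}{k^2}\cdot\ldots$, and the comparison reduces to $\alpha_0 > 1/k^2$. That is, the attraction weight a fundamental domain assigns to itself must exceed the uniform share. Because $\gaussian$ is a normalized decreasing Gaussian kernel, a connected fundamental domain (for instance a sector) should satisfy this. If the averaged inequality holds, some $p$ satisfies the unaveraged one, giving the desired $\epsilon^2$ decrease.

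When $Dg_Y(0)$ admits only one rotation frequency, we cannot average over $p$. In that case we would instead refine the fundamental domain $E$ (for example, thin sectors in a suitable coordinate) to push $\alpha_0$ toward $1/k$ and $\alpha_{\pm1}$ toward the remaining mass, where $\beta_{\pm1}$ is smallest. We expect establishing this positivity-type inequality to be the delicate step of the proof.
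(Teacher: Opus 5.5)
\textbf{There is a genuine gap: the sign inequality your argument rests on is never proved.}

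Your setup is sound. $T_\epsilon$ is $(g_X,g_Y)$ invariant, $\mathcal{C}(T_0)=0$, and your $\epsilon^2$ expansion is correct. Using $\sum_{i,j}a_{ij}=1$ and $\sum_i U^iv=0$, the inequality you need is exactly
\[
\sum_{i,j} a_{ij}\,\langle U^iv,U^jv\rangle \;=\; k\sum_{\ell}\alpha_\ell\cos(2\pi p\ell/k)\;>\;0 .
\]
That is, you need strict positivity of the cross term $\iint \Xi(x,x')\langle\tau(x),\tau(x')\rangle\,\mu_X(dx)\mu_X(dx')$ for your global field $\tau=\sum_i\chi_{g_X^iE}\,U^iv$. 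This inequality carries the whole content of the proposition, and neither of your two devices establishes it.

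\emph{The averaging over $p$ is not available.} $U=Dg_Y(y_0)$ is fixed by the given $g_Y$. Only the rotation frequencies actually present in its real canonical form can be used, and for $m=2$ that is a single pair $\pm p$. Even where several frequencies exist, averaging only reduces matters to $\alpha_0>1/k^2$. You assert this ("should satisfy") rather than prove it, and the fallback of refining $E$ is a hope, not an argument.

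\emph{There is no general reason for the inequality to hold.} When $\sigma$ or the normalization $Z(x)=\int e^{-\eta_X(x,x')/2\sigma^2(x)}\mu_X(dx')$ varies, $\Xi$ is not a positive definite kernel. So the circulant matrix $(a_{ij})$ need not have positive real part at frequency $p$.

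\emph{Your choice $|\tau|\equiv 1$ also discards the other useful term.} The second variation is $2\int(s-1)|\tau|^2\,d\mu_X-2\iint\Xi\langle\tau,\tau'\rangle+2|\int\tau\,d\mu_X|^2$, with $s(x)=\int\Xi(x,x')\mu_X(dx')$. Since $\int s\,d\mu_X=1$, the first term vanishes identically for your $\tau$. That leaves only the cross term, whose sign you cannot control.

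\textbf{The missing idea is localization.} Pick a point $x^*$ in the support of $\mu_X$ with orbit length $k$ and $s\le 1$ nearby. This is possible because $s$ is continuous with $\mu_X$-mean one and $\mu_X(E_k)=1$. Take $\tau=\sum_i\chi_{g_X^i(B(x^*,r))}\,U^iv$ on small disjoint balls around the orbit of $x^*$. Then:
\begin{enumerate}
\item the mean term vanishes;
\item the diagonal term is $\le 0$;
\item as $r\to0$, the cross term is, to leading order, $\mu_X(B(x^*,r))^2\sum_{i,j}\Xi(g_X^ix^*,g_X^jx^*)\langle U^iv,U^jv\rangle$.
\end{enumerate}
Along one orbit, $\sigma$ and $Z$ are constant by invariance. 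So this $k\times k$ matrix is a positive multiple of the Gaussian Gram matrix of $k$ distinct points, hence positive definite, and the second variation is strictly negative. If $s(x^*)<1$, the diagonal term alone already wins, since it scales like $\mu_X(B_r)$ while the cross term scales like $\mu_X(B_r)^2$.

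This is the paper's argument. It needs positivity of $\Xi$ only on a finite orbit, not globally, and it requires no choice of fundamental domain.
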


\begin{proof}
One can directly verify that the map $T_0$ is a critical point of the t-SNE
energy. We will construct a perturbation which will decrease the t-SNE energy,
i.e.\ have a negative second variation. Let
$\pi_0(dx,dy)=\mu_X(dx)\delta_{y_0}(dy)$.

We know that $\Xi(x,x')=\frac{1}{2}\big(\gamma(x,x')+\gamma(x',x)\big)$. 
Let $ c(x)= \int_X \gamma(x',x)\,\mu_X(dx')$ we have
\[
s(x) \;:=\; \int_X \Xi(x,x')\,\mu_X(dx')
\;=\; \frac{1+ c(x)}{2},
\]
Since $\int_X s(x)\,d\mu_X = 1$. As $s$ is
continuous and positive with mean one, there exist $x^*$ and $\epsilon>0$ such
that $s \leq 1$ on $B(x^*,\epsilon)$. Here by assumption
take $x^*$ has orbit length $k$. As
$\mu_X$, $\sigma$ and $\eta_X$ are $g_X$ invariant, so is $s$, and hence
$s \leq 1$ on $g_X^i(B(x^*,\epsilon))$ for every $i$.

Using Lemmas \ref{lem:lin-isometry-k-period} and
\ref{lem:periodic-rotation-cancel}, select nonzero $v$ so that
$\sum_{i=1}^k Dg_Y(y_0)^i v = 0$. Select $0 < r < \epsilon$ small enough that
the sets $g_X^i(B(x^*,r))$ are disjoint and define
\[
\tau(x) = \sum_{i=1}^k \chi_{g_X^i(B(x^*,r))}(x)\, Dg_Y(y_0)^i v ,
\qquad \tau(x,y_0) := \tau(x).
\]
By the $g_X$ invariance of $\mu_X$ and $\sigma$ we have that $\int \tau(x) \mu_X(dx)=0$.

Computing the second variation of the energy gives
\begin{align*}
    \delta^2\mathcal{A}+\delta^2\mathcal{R}
    &= -\iint |\tau(x,y)-\tau(x',y')|^2\,d\pi_0 d\pi_0'
     + \iint \Xi(x,x')\,|\tau(x,y)-\tau(x',y')|^2\,d\pi_0 d\pi_0' \\
    &= 2\int\big(s(x)-1\big)|\tau(x)|^2\,d\mu_X
     - 2\iint \Xi(x,x')\langle\tau,\tau'\rangle\,d\mu_X d\mu_X
     + 2\Big|\int \tau\,d\mu_X\Big|^2 .
\end{align*}
The last term vanishes by the choice of $\tau$ above, and since $s \leq 1$ on
the support of $\tau$, which is zero elsewhere, the first term is non-positive.
Hence
\[
\delta^2\mathcal{A}+\delta^2\mathcal{R}
 \;\leq\; -2\iint \Xi(x,x')\,\langle\tau,\tau'\rangle\,d\mu_X\,d\mu_X .
\]
By the positive definiteness of the Gaussian kernel,  so that for
$r$ sufficiently small, using the continuity of $\gamma$,


\[
\iint \Xi(x,x')\,\langle\tau(x),\tau(x')\rangle\,d\mu_X\,d\mu_X \;>\; 0 ,
\]
and therefore $\delta^2\mathcal{A}+\delta^2\mathcal{R} < 0$.

Finally, by construction, we have that $\tau$ satisfies the invariance condition \eqref{eq:time-deriv-commutator}: 
\[
\tau(g_X(x)) = Dg_Y(y_0)\,\tau(x),
\]
implying that we can construct a vector field preserving the symmetry with this initial velocity, completing the proof.
\end{proof}

 In the previous proof much of the construction was applicable for any general $Y$ (and we retained much of the manifold notation intentionally!). One point which changes is in the computation of the second variation, but the same formula holds if $\xi(0) = 0, \xi'(0) = 1,$ and $\xi''(0) = 1$. Another point that changes in the Riemannian setting is the point when we invoked the positive definiteness of the Gaussian kernel. This is generally known to fail in Riemannian settings \citep{jayasumana2014kernelmethodsriemannianmanifold}, although appropriate generalizations exist.
 However, we note that if there exists a fixed point $x_0$ of $g_X$ at which $\mu_X$ is supported then one can select $x^*$ sufficiently close to $x_0$ so that the kernel remains positive definite, and in that setting the proof should remain valid.
 
 Finally, as illustrated in Example \ref{ex:correspondence-symmetries}, the mutual exclusivity of two symmetry pairs often depends upon restricting to the class of \emph{maps} as opposed to \emph{plans}. We note that embedding plans are problematic from the practical standpoint: if an embedding algorithm genuinely maps the same features to multiple embedded locations then interpreting the embeddings becomes challenging. From that standpoint, we make the following conjecture:
 \begin{conjecture}\label{conj:maps}
 Suppose that $\pi_0$ is a $(\Gc_X,\Gc_Y)$ symmetry-invariant plan which is induced by a measurable map. Then the weak limit of $\pi_t$ as $t \to \infty$ is also induced by a measurable map.
 	\end{conjecture}
 	A positive or negative resolution of this conjecture would be quite interesting, both from a mathematical and practical perspective. A simplified version, in the case where $\Gc_X,\Gc_Y$ are both the trivial groups, addresses the question in the context of unconstrained optimization. We note that a positive version of this conjecture was obtained for a simplified dimension reduction algorithm in \citep{murray2025probabilistic}. In the case of the energy from this work we still have that the attraction energy penalizes multi-valuedness, while the repulsion energy only observes the marginal in $Y$.
 	
 	In light of this conjecture, we give a final, conditional result:
 	
 	\begin{theorem}\label{thm:abundant_critical_points}
 		Let $X = \R^d$, $d > 1$ and $Y=\R^m$, $m\geq 2$. Suppose that assumptions \ref{assump-symmetric_eta},\ref{assump-eta_lipshitz},\ref{assump-Xi},(\ref{assump:sigma_invariance}) and that Conjectures \ref{conj:moment-bound} and \ref{conj:maps} hold. Then for a radially symmetric $\mu_X$ we have that there exist infinitely many distinct critical points of the t-SNE energy. 
 		\end{theorem}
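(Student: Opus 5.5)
The plan is to build an infinite family of symmetry pairs that are mutually exclusive up to constants, obtain for each pair a symmetry invariant critical point from Theorem \ref{thm:crit-exist}, and then show that none of these critical points is the constant map. Mutual exclusivity then forces them to be pairwise distinct.

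\textbf{Symmetry pairs.} Since $d>1$, for each prime $k$ let $g_X$ be rotation by $2\pi/k$ in the first two coordinates of $\R^d$. This is the identity on the remaining coordinates, in the spirit of Example \ref{ex:R3-rotation}. Radial symmetry of $\mu_X$ gives $\mu_X\circ g_X=\mu_X$. Radial symmetry also gives $\sigma\circ g_X=\sigma$ in the natural implicit choice, and in any case this is (\ref{assump:sigma_invariance}). The fixed set of $g_X$ is a codimension-two subspace. If $\mu_X$ has a density it is $\mu_X$-null, so $\mu_X(E_k)=1$; I would add this mild hypothesis if needed. On $Y=\R^m$, $m\ge 2$, take $g_Y^{(i)}$ to be the rotation by $2\pi i/k$ in the first two coordinates. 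When $m>2$, compose it with a reflection of the remaining coordinates as in Example \ref{ex:rotoreflections}, adjusted so that the order stays $k$; for example use $k$ odd and rotation-only when $m=2$, or otherwise accept order $2k$ and pair it with the corresponding order-$2k$ rotation in $X$. The aim is that the only $g_Y^{(i)}$-fixed point is $0$. As in Example \ref{ex:mutually-exclusive-R2}, the pairs $(g_X,g_Y^{(i)})$ with $0<i<k/2$ are mutually exclusive up to constants. Letting $k\to\infty$ over primes gives infinitely many such pairs, for instance $(g_X^{(k)},g_Y^{(1)})$ for distinct primes $k$, together with varying $i$ for a fixed $k$.

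\textbf{Critical points.} For each pair, build an invariant initial map $T$ by choosing any non-constant, say bounded Lipschitz, $T$ on a fundamental domain with $E_{X,1}\mapsto 0$, and extending it by \eqref{eqn:symm-inv-maps-def}. The induced $\pi_0=\delta_{T(x)}\mu_X$ is invariant. We may choose $T$ so that $\mathcal{C}(\pi_0)<\mathcal{C}(\delta_0\otimes\mu_X)$; this is possible by Proposition \ref{prop:trivial-non-optimal}, taking $T$ to be a small invariant perturbation of the constant map in the direction $\tau$ constructed there. Proposition \ref{prop:soln-exists}, Theorem \ref{thm:sym-invar-preserved} and Theorem \ref{thm:crit-exist}, the last using Conjecture \ref{conj:moment-bound}, give an invariant critical point $\pi^*$ along a subsequence of the flow. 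By Conjecture \ref{conj:maps}, $\pi^*$ is induced by a map $T^*$, which is $(g_X,g_Y)$ invariant.

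\textbf{Non-constancy (main obstacle).} We must show $T^*\not\equiv 0$. My first attempt uses energy monotonicity: $\mathcal{C}(\pi_t)\le\mathcal{C}(\pi_0)<\mathcal{C}(\pi_{T_0})$. If $\mathcal{C}$ were lower semicontinuous along the weakly convergent subsequence, then $\mathcal{C}(\pi^*)<\mathcal{C}(\pi_{T_0})$ and so $T^*\neq T_0$. The difficulty is that $\mathcal{R}$ is continuous under weak convergence but $\mathcal{A}$ involves the unbounded $\log(1+\eta_Y)$. Weak convergence with tightness gives only lower semicontinuity of $\mathcal{A}$, since the integrand is nonnegative and continuous, which is exactly the direction needed. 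So $\mathcal{C}(\pi^*)\le\liminf\mathcal{C}(\pi_{t_n})<\mathcal{C}(\pi_{T_0})$. Note that any translate $T_0\equiv y_0$ with $y_0\ne 0$ is not invariant, so the only invariant constant is $0$. Hence $T^*$ is non-constant.

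Finally, mutual exclusivity up to constants shows that the maps $T^*$ for different pairs are pairwise inequivalent, which yields infinitely many distinct critical points.
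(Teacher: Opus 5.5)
\textbf{The gap: how you get infinitely many.} The skeleton is the paper's. For each pair, Theorem~\ref{thm:crit-exist} and Conjecture~\ref{conj:maps} give an invariant critical map, Proposition~\ref{prop:trivial-non-optimal} excludes the constant, and mutual exclusivity up to constants gives distinctness. The false step is your claim that the pairs $(g_X^{(k)},g_Y^{(1)})$ for distinct primes $k$ are mutually exclusive up to constants.

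\begin{itemize}
\item Write $R_\theta$ for rotation by $\theta$ in the first two coordinates. For $m=2$ the map $x\mapsto(x_1,x_2)$ is invariant under $(R_{2\pi/k},R_{2\pi/k})$ for every $k$.
\item More generally, identify the first two coordinates with $\mathbb{C}$. For coprime $k\neq k'$ and any admissible $i,i'$, the Chinese remainder theorem gives $j$ with $j\equiv i\pmod k$ and $j\equiv i'\pmod{k'}$. Then $z\mapsto z^j$ is invariant under both $(R_{2\pi/k},R_{2\pi i/k})$ and $(R_{2\pi/k'},R_{2\pi i'/k'})$.
\end{itemize}

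So critical points produced from different $k$ may coincide, for instance if the flow lands on an $SO(2)$-equivariant map. Your closing assertion that all the $T^*$ in your infinite family are pairwise inequivalent is therefore unjustified.

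\textbf{The repair and what you got right.} Compare only pairs with the same $k$. For a fixed prime $k$, the $(k-1)/2$ pairs with $0<i<k/2$ are pairwise mutually exclusive up to constants (Example~\ref{ex:mutually-exclusive-R2}). Your argument then yields at least $(k-1)/2$ pairwise distinct critical points, and since $k$ is arbitrary there are infinitely many.

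With that change the argument is sound. Your non-constancy step is an improvement on the paper. The paper passes directly from ``the constant map is not locally optimal among invariant maps'' to ``the constructed critical point is not constant.'' That does not follow by itself, since a gradient flow can converge to a non-minimizing critical point, or simply start there. Your version closes this gap:
\begin{itemize}
\item start from $\pi_0=(\mathrm{Id},\epsilon\tau)_\sharp\mu_X$ with $\mathcal{C}(\pi_0)<0=\mathcal{C}(\text{constant})$;
\item use energy dissipation along the flow;
\item use continuity of $\mathcal{R}$ and lower semicontinuity of $\mathcal{A}$, whose normalization depends only on the fixed marginal $\mu_X$.
\end{itemize}
This gives $\mathcal{C}(\pi^*)<0$, which rules out every constant map at once.

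\textbf{A smaller point on the order of $g_Y$.} For odd $m>2$, any orthogonal $g_Y$ fixing only $0$ must have eigenvalue $-1$, so even order is unavoidable. Proposition~\ref{prop:trivial-non-optimal}, stated for prime $k$, then does not apply verbatim to your order-$2k$ pairs. You should note that its proof still goes through for these explicit linear maps: take $v$ in the rotation plane, and use that off the fixed subspace every orbit has length $2k$. For radially symmetric $\mu_X$ that fixed subspace is null exactly when $\mu_X(\{0\})=0$, so this can replace your density hypothesis. For even $m$ you can avoid the issue by rotating all $m/2$ coordinate planes simultaneously, which keeps the order equal to $k$.
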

 		
 		\begin{proof}
 			By Theorem \ref{thm:crit-exist}, for every symmetry pair (for which there is at least one symmetry invariant coupling) there is a symmetry invariant critical point. Conjecture \ref{conj:maps} implies that this critical point must be induced by a map.

            Now, for radially symmetric $\mu_X$ we can select $\Gc_X$ and $\Gc_Y$ to be finite groups of rotoreflections. As evidenced by Example \ref{ex:rotoreflections}, we can construct infinitely many such groups which are mutually exclusive up to constants. By Proposition \ref{prop:trivial-non-optimal} the trivial (i.e. constant) map is not locally optimal among symmetry invariant embeddings. Hence the critical point we've constructed which is associated with each symmetry pair is distinct, which proves the theorem.
 		\end{proof}

\acks{The authors have received partial support from NSF DMS 2307971 and Simons MP-TSM. The authors also thank Adam Pickarski for helpful discussion and comments on early versions of this work.}

\bibliography{references}
\end{document}